\newcommand{\mypar}[1]{\textbf{#1.}}
\declaretheorem[]{challenged assumption}
\DeclareRobustCommand{\eg}{e.g.,\@\xspace}
\DeclareRobustCommand{\ie}{i.e.,\@\xspace}
\DeclareRobustCommand{\wrt}{w.r.t.\@\xspace}
\definecolor{myviolet}{rgb}{0.6, 0.4, 0.8}
\definecolor{green}{rgb}{255, 0, 0}
\newcommand{\Aspace}{\mathcal{A}}
\newcommand{\Sspace}{\mathcal{S}}
\newcommand{\ind}{\mathbb{I}}
\DeclareMathOperator{\PR}{\mathbb{P}}
\newcommand{\cmp}{\mathcal{M}}
\DeclareMathOperator*{\argmin}{arg\,min}
\newcommand{\regret}{\mathcal{R}}
\newcommand{\group}{\mathbb{G}}
\newcommand{\AspaceAbs}{\overline{\mathcal{A}}}
\newcommand{\SspaceAbs}{\overline{\mathcal{S}}}
\newcommand{\aAbs}{\overline{a}}
\newcommand{\sAbs}{\overline{s}}
\newcommand{\lambdaAbs}{\overline{\lambda}}
\newcommand{\Fabs}{\overline{\mathcal{F}}}
\newcommand{\Labs}{\overline{\mathcal{L}}}
\newcommand{\muAbs}{\overline{\mu}}
\newcommand{\AlgNameLong}{Geometric Active Exploration\xspace}
\newcommand{\AlgNameShort}{\textsc{\small{GAE}}\xspace}
\newcommand{\AlgNameDef}{\textbf{G}eometric \textbf{A}ctive \textbf{E}xploration (\textsc{\small{GAE}}\xspace)}
\newcommand{\mdpSolver}{\textsc{\small{MDP-SOLVER}}\xspace}
\newcounter{relctr} % Counter for relations
\newcommand\labelrel[2]{%
  \begingroup
    \refstepcounter{relctr}%
    \stackrel{\textnormal{(\arabic{relctr})}}{\mathstrut{#1}}%
    \originallabel{#2}%
  \endgroup
}
\icmltitlerunning{Geometric Active Exploration in Markov Decision Processes}
\begin{document}

\twocolumn[
\icmltitle{Geometric Active Exploration in Markov Decision\\ Processes: the Benefit of Abstraction}
% Geometric Active Estimation of Dynamical Systems Properties: the Effect\\ of Abstraction on Statistical Complexity

% It is OKAY to include author information, even for blind
% submissions: the style file will automatically remove it for you
% unless you've provided the [accepted] option to the icml2024
% package.

% List of affiliations: The first argument should be a (short)
% identifier you will use later to specify author affiliations
% Academic affiliations should list Department, University, City, Region, Country
% Industry affiliations should list Company, City, Region, Country

% You can specify symbols, otherwise they are numbered in order.
% Ideally, you should not use this facility. Affiliations will be numbered
% in order of appearance and this is the preferred way.
\icmlsetsymbol{equal}{*}

\begin{icmlauthorlist}
\icmlauthor{Riccardo De Santi}{eth,ethai}
\icmlauthor{Federico Arangath Joseph}{equal,eth}
\icmlauthor{Noah Liniger}{equal,eth}
\icmlauthor{Mirco Mutti}{tech}
\icmlauthor{Andreas Krause}{eth}
\end{icmlauthorlist}

\icmlaffiliation{eth}{Department of Computer Science, ETH Zurich, Zurich, Switzerland}
\icmlaffiliation{ethai}{ETH AI Center, Zurich, Switzerland}
\icmlaffiliation{tech}{Technion, Haifa, Israel}

\icmlcorrespondingauthor{Riccardo De Santi}{rdesanti@ethz.ch}

% You may provide any keywords that you
% find helpful for describing your paper; these are used to populate
% the "keywords" metadata in the PDF but will not be shown in the document
\icmlkeywords{MDP homomorphism, Active Exploration, Reinforcement learning, Machine Learning, ICML}

\vskip 0.3in
]

% this must go after the closing bracket ] following \twocolumn[ ...

% This command actually creates the footnote in the first column
% listing the affiliations and the copyright notice.
% The command takes one argument, which is text to display at the start of the footnote.
% The \icmlEqualContribution command is standard text for equal contribution.
% Remove it (just {}) if you do not need this facility.

%\printAffiliationsAndNotice{}  % leave blank if no need to mention equal contribution
\printAffiliationsAndNotice{\icmlEqualContribution} % otherwise use the standard text.

\begin{abstract}
\label{sec:abstract}
\looseness-1 How can a scientist use a Reinforcement Learning (RL) algorithm to design experiments over a dynamical system's state space? In the case of finite and Markovian systems, an area called \emph{Active Exploration} (AE) relaxes the optimization problem of experiments design into Convex RL, a  generalization of RL admitting a wider notion of reward. Unfortunately, this framework is currently not scalable and the potential of AE is hindered by the vastness of experiment spaces typical of scientific discovery applications. However, these spaces are often endowed with natural geometries, \eg permutation invariance in molecular design, that an agent could leverage to improve the statistical and computational efficiency of AE. To achieve this, we bridge AE and MDP homomorphisms, which offer a way to exploit known geometric structures via abstraction. Towards this goal, we make two fundamental contributions: we extend MDP homomorphisms formalism to Convex RL, and we present, to the best of our knowledge, the first analysis that formally captures the benefit of abstraction via homomorphisms on sample efficiency. Ultimately, we propose the \AlgNameLong (\AlgNameShort) algorithm, which we analyse theoretically and experimentally in environments motivated by problems in scientific discovery.
\end{abstract}

% \addtocontents{toc}{\protect\setcounter{tocdepth}{-1}}
\section{Introduction}
\label{sec:introduction}
\looseness -1 The problem of optimal experimental design (OED) \cite{chaloner1995bayesian} refers to the task of optimally selecting experiments to minimize a measure of uncertainty of an unknown \emph{quantity of interest} $f: \Sspace \to \mathbb{R}$, where $\Sspace$ denotes a space of experiments.  Typically, the problem considers a limited budget of resources, \eg number of experiments, and assumes the possibility to directly sample $f$ at arbitrary inputs $s \in \Sspace$. Conceptually, an optimal design can be interpreted as a distribution over experiments determining the probability with which these should be carried out in order to minimize the uncertainty of $f$~\cite{pukelsheim2006optimal}. 

\looseness -1 Interestingly, in a wide variety of applications the input space $\Sspace$ corresponds to the state space of a dynamical system ~\cite{mutny2023active}. Therefore, the agent carrying out the experiments must respect the underlying dynamics and cannot teleport from any experiment, now interpreted as a state $s \in \Sspace$, to any other experiment. For instance, consider the environmental sensing problem illustrated in Figure~\ref{fig:diffusion_drawing}, where an agent aims to actively estimate the amount of air pollution caused by the diffusion of a chemical substance released from a point source. To address this problem, the agent chooses sampling policies to minimize an estimation error of the amount of pollutant $f$. 

In the case of time-discrete and Markovian dynamical systems, this problem is known as Active Exploration (AE)~\cite{mutny2023active, tarbouriech2019active, tarbouriech2020active}. AE frames the experiments design task as an instance of Convex Reinforcement Learning (Convex RL)~\cite{hazan2019maxent, zahavy2021reward}, a recent generalization of RL where the agent aims to minimize a convex functional of the state-action distribution induced by a policy interacting with the environment. 

\looseness -1 The AE formulation of the OED problem on dynamical systems is promising as it allows to learn (from data) optimal sampling policies that respect the system dynamics while minimizing a measure of uncertainty of $f$. Nonetheless, solving an instance of Convex RL typically entails solving a sequence of Markov Decision Processes (MDPs) and estimating the visitation density at each iteration~\cite{hazan2019maxent}. As a consequence, current Active Exploration methods are not scalable, hindering their use in real-world scientific discovery problems where experiment spaces are generally immense~\cite{wang2023scientific, Thiede_2022}.

\begin{figure*}[t!]
\centering
{%
    \includegraphics[width=0.65 \textwidth]{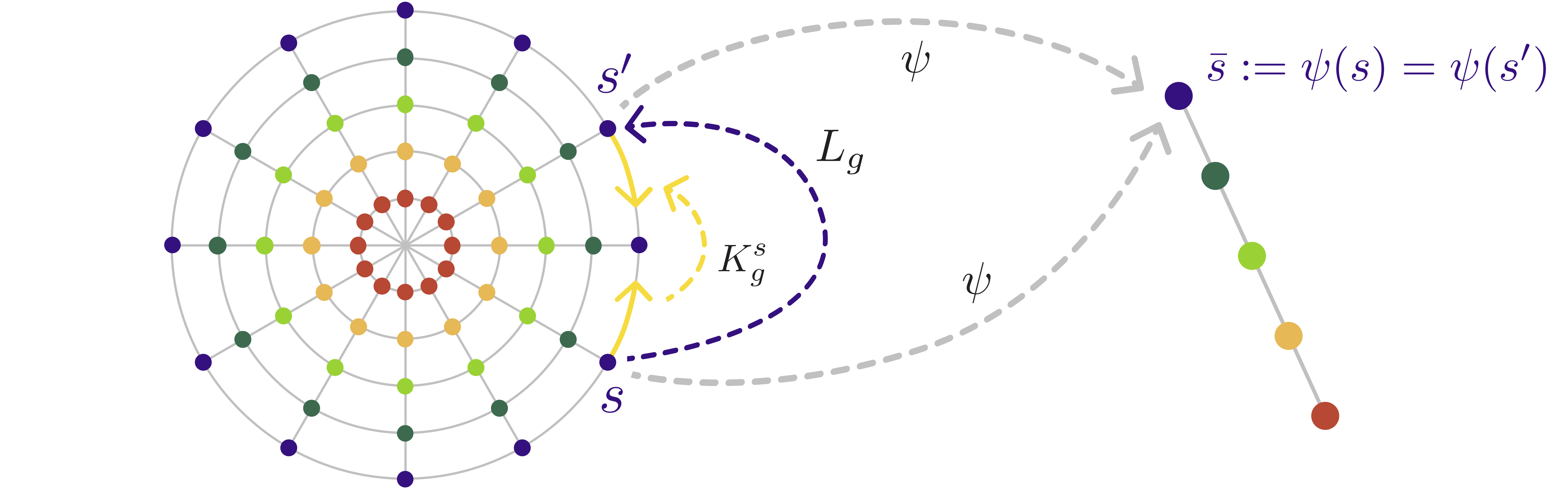}%
}%
\caption[Short Caption]{ Radial diffusion process of a pollutant from a central point source. On the left, original MDP where each circle is an $f$-equivalence class, $L_g$ denotes a state symmetry acting on $f$, $K_g^s$ denotes a state-dependent action symmetry acting on $P$. On the right, the abstract MDP obtained via the MDP homomorphism  $h = (\psi, \{\phi_s \mid s \in \Sspace\})$, where $\psi$ maps $f$-equivalence classes to abstract states.}
\label{fig:diffusion_drawing}
\end{figure*}
Luckily, these spaces are often endowed with natural geometries, as in the case of permutation invariances in molecular design~\cite{cheng2021molecular, elton2019deep}, or roto-translation and reflection invariances in environmental sensing~\cite{krause2008optimizing, derPol2020homomorphic}. Let us take the example reported in Figure \ref{fig:diffusion_drawing}. One can expect locations that are equally distant from the point source (center) to have almost identical amounts of pollutant \ie the quantity of interest $f$ follows radial symmetries.\footnote{Note that in the paper we conceptualize symmetries to be \emph{exact} while they may be \emph{approximate} in practice. Extending this work to deal with approximate symmetries is a nice direction for future works.} Therefore, by sampling $f$ in one location, the agent gains information on $f$ in all the other symmetric locations as well.
As a consequence, in this work we first aim to answer the following question:
\begin{center}
    \emph{How can a RL agent exploit geometric structure to increase the statistical and computational efficiency of AE?}
\end{center}
First, we introduce a novel geometric estimation error and corresponding AE objective (Sec. \ref{sec:problem} and \ref{sec:CRL_objective}). Then, we find optimal sampling strategies for the introduced AE objective by bridging Active Exploration with the area of MDP homomorphisms~\cite{ravindran2001symmetries, derPol2020homomorphic}, which offers an algorithmic scheme to leverage known geometric structure in RL via abstraction. Unfortunately, MDP homomorphisms are not directly usable in AE as it is not a classic RL problem. Thus, we extend MDP homomorphisms to Convex RL and introduce \AlgNameDef, an algorithm that solves the AE problem by exploiting known geometric structure via abstraction (Sec. \ref{sec:algorithm}). To the best of our knowledge, we provide the first analysis that formally captures the benefit of abstraction on sample efficiency via MDP homomorphisms (Sec. \ref{sec:analysis}). Finally, we showcase experimentally the statistical and computational advantages of \AlgNameShort in illustrative environments inspired by scientific discovery problems (Sec.~\ref{sec:experiments}). 

To sum up, we make the following contributions:
\begin{itemize}[noitemsep,topsep=0pt,parsep=0pt,partopsep=0pt,leftmargin=*]
    \item \looseness -1 An Active Exploration objective that leverages known invariances of the quantity of interest $f$ (Sec. \ref{sec:problem} and \ref{sec:CRL_objective}).
    \item \AlgNameDef, an algorithm that extends MDP homomorphism to Convex RL and solves AE via abstraction (Sec. \ref{sec:algorithm}).
    \item The first analysis that sheds light on the benefits of abstraction on sample efficiency via MDP homomorphisms, here specialized for the AE problem (Sec. \ref{sec:analysis}).
    \item An experimental evaluation of the performance  of \AlgNameShort against a classic AE algorithm (Sec. \ref{sec:experiments}).
\end{itemize}
Our analysis capturing the benefit of geometric structure on sample efficiency may be of independent interest for RL and Convex RL.

\section{Background and Notation}
\label{sec:preliminaries}
\looseness -1 Let $X$ be a set, we denote with $\Delta(X)$ the probability simplex over $X$. We define $[N] \coloneqq \{ 1, \ldots, N \}$, and given a number $x$, we denote $x^+ \coloneqq \max\{1,x\}$.

\subsection{Markovian Processes and Active Exploration}
In the following, we briefly introduce basic RL notions and the Active Exploration (AE) problem.

\mypar{Discrete Markovian Processes}~~~
A (discrete) Controlled Markov Process (CMP) is a tuple $\cmp := (\Sspace, \Aspace, P, \mu)$, where $\Sspace$ is a finite state space ($|\Sspace| = S$), $\Aspace$ is a finite action space ($|\Aspace| = A$), $P: \Sspace \times \Aspace \to \Delta (\Sspace)$ is the transition model, such that $P(s' | s, a)$ denotes the conditional probability of reaching $s' \in \Sspace$ when selecting $a \in \Aspace$ in $s \in \Sspace$, and $\mu : \Delta (\Sspace)$ is the initial state distribution. A CMP $\cmp$ paired with a function  $r: \Sspace \times \Aspace \to \mathbb{R}$, i.e., $\cmp_r \coloneqq \cmp \cup {r}$, is a Markov Decision Process (MDP)~\citep{puterman2014markov}.

\looseness -1 An agent interacting with a CMP starts from an initial state $s_0 \sim \mu$. Then, at every time-step $t$, the agent takes an action $a_t$, collects a reward $r(s_t, a_t)$ (when defined) and transitions to $s_{t + 1} \sim P(\cdot | s_t, a_t)$.
The agent's actions are sampled from a stationary policy $\pi: \Sspace \to \Delta(\Aspace)$ such that $\pi (a | s)$ denotes the conditional probability of $a$ in $s$. 

\mypar{Active Exploration}~~~ 
\looseness -1 In the Active Exploration problem, an agent interacts with a MDP $\cmp_f = (\Sspace, \Aspace, P, \mu, f)$ where $f$ is an unknown and deterministic \emph{quantity of interest} providing a noisy observation $x \sim y(s) = f(s) + \nu(s)$ at state $s$. Here, $\nu$ is a distribution with zero mean and unknown heteroscedastic variance \smash{$\sigma^2(s) \in [0,\sigma^2_\mathrm{max}]$} and $x \in [0,F_{\max}]$. In AE, the agent aims to learn a policy to minimize a measure of uncertainty over $f$ through interactions with $\cmp_f$. Notice that, as a sub-case of AE, $f$ can be interpreted as a reward function that an agent wishes to estimate rather than maximize~\citep{lindner2022active}.

\subsection{Invariances and MDP Homomorphisms}
In the following, we introduce basic concepts from abstract algebra and the notion of MDP homomorphism.

\mypar{Equivalence, Invariance, and Symmetries}~~~ \looseness -1 When a function $f: \mathcal{X} \xrightarrow[]{} \mathcal{Y}$ maps two inputs $x, x'$ to the same value $f(x) = f(x')$, we say that $x$ and $x'$ are $f$-equivalent. The set $[x]$ of all points $f$-equivalent to $x$ is called \emph{equivalence class} of $x$. We say that $f$ is invariant across $[x]$. Consider a transformation operator $L_g: \mathcal{X} \xrightarrow[]{} \mathcal{X}$, where $\group = (\{L_g\}_{g \in G}, \cdot)$ is a group, $G$ is an index set, and $\cdot$ denotes composition. Given a function $f: \mathcal{X} \xrightarrow[]{} \mathcal{Y}$, if $L_g$ satisfies:
\begin{equation}
\label{eq:invariance_eq}
    f(x) = f(L_g[x])  \qquad \forall g \in G, \forall x \in \mathcal{X}
\end{equation}
then we say that $f$ is invariant to $L_g$, and we call $\{L_g\}_{g \in G}$ a set of \emph{symmetries} of $f$.

\mypar{MDP Homomorphisms}~~~An \emph{MDP homomorphism} $h$ is a mapping from an \emph{original} MDP $\cmp_r = (\Sspace, \Aspace, P, \mu, r)$ to an \emph{abstract} MDP $\overline{\cmp}_r = (\overline{\Sspace}, \overline{\Aspace}, \overline{P}, \muAbs, \overline{r})$ defined by a surjective map $h: \Sspace \times \Aspace \to \overline{\Sspace} \times \overline{\Aspace}$. In particular, $h$ is composed of a tuple $(\psi, \{\phi_s \mid s \in \Sspace\})$, where $\psi: \Sspace \to \overline{\Sspace}$ is the state map and $\phi_s: \Aspace \to \overline{\Aspace}$ is the state-dependent action map. These maps are built to satisfy the conditions: 
\begin{align}
    \overline{r}(\psi(s), \phi_s(a)) &= r(s,a) \label{eq:mdp_hom_reward}\\
    \overline{P}(\psi(s')\mid \psi(s), \phi_s(a)) &= \sum_{s'' \in [s']}P(s'' \mid s, a) \label{eq:mdp_hom_dynamics}  
\end{align}
for all $s,s' \in \Sspace, a \in \Aspace$. Moreover, given a state $s$ such that $\sAbs \coloneqq \psi(s)$, we denote the equivalence class of $s$ (and $\sAbs$) induced by $\psi$ as $[s]=[\sAbs] \coloneqq \{ s' \in \Sspace : \psi(s') = \psi(s)\}$ and indicate with $E_s \coloneqq E_{\sAbs} =|[s]|$ its cardinality.

\mypar{Policy Lifting}~~~ Given a MDP homomorphism $h$, the Optimal Value Equivalence Theorem by~\citet{ravindran2001symmetries} states that an optimal policy $\overline{\pi}$ for the abstract MDP can be transformed to an optimal policy $\pi$ for the original MDP via the \emph{lifting} operation:
\begin{equation}
\label{eq:policy_lifting}
    \pi(a|s) \coloneqq \frac{\overline{\pi}(\aAbs|\psi(s))}{|\{a \in \phi_s^{-1}(\aAbs)\}|} \ \quad \forall s \in \Sspace, a \in \phi_s^{-1}(\aAbs) 
\end{equation}

\section{Problem Setting}
\label{sec:problem}
Consider the AE problem in a MDP $\cmp_f = (\Sspace, \Aspace, P, \mu, f)$ with known dynamics $P$, and unknown \emph{quantity of interest} $f: \Sspace \to \mathcal{B} \subset \mathbb{R}$. Typically, an agent aims to minimize an estimation error of $f$ of the form:
\begin{tcolorbox}[colframe=white!, top=2pt,left=2pt,right=2pt,bottom=2pt]
\center \textbf{(Classic) Estimation Error}
\begin{equation}
\label{eq:est_error_eq_classic}
    \xi_n = \frac{1}{S}\sum_{s \in \Sspace} \big| \hat{f}_n(s) - f(s) \big|
\end{equation}
\end{tcolorbox}
\looseness -1 where \smash{$\hat{f}_n(s)$} denotes the empirical estimate of $f(s)$ after $n$ steps in the environment ~\cite{tarbouriech2019active}.

\looseness -1 In this work, we consider the case where $f$ and the dynamics $P$ have convenient geometric structures. In a vast variety of applications, the quantity of interest $f$ is known to have certain group-structured symmetries $L_g: \Sspace \xrightarrow[]{} \Sspace$ and state-dependent action symmetries $K^s_g: \Aspace \to \Aspace$. For all $g \in G, s \in \Sspace, a \in \Aspace$, $f$ and $P$ are invariant according to\footnote{Here we extend $f$ such that $f(s,a) \coloneqq f(s) \: \forall a \in \Aspace$ and consider $y$ to satisfy the same set of invariances as $f$.}
\begin{align}
    f(s,a) &= f(L_g[s], K^s_g[a]) \label{eq:mdp_sym_r}\\
    P(s' \mid s,a) &= P(L_g[s'] \mid L_g[s], K^s_g[a]) \label{eq:mdp_sym_P}
\end{align}
\looseness -1 For instance, in the diffusion process in Fig. \ref{fig:diffusion_drawing}, $f$ follows radial symmetries, while $P$ has roto-translation symmetries as in most physical systems~\citep[Table 1]{derPol2020homomorphic}. 

An MDP with this structure, often denoted as \emph{MDP with symmetries} ~\cite{derPol2020homomorphic}, naturally defines an MDP homomorphism $h=(\psi, \{\phi_s \mid s \in \Sspace\})$ that can be efficiently built, as illustrated in Figure \ref{fig:diffusion_drawing}, by mapping state-action pairs across which $f$ and $P$ are invariant to a unique abstract state-action pair~\cite{derPol2020homomorphic, ravindran2001symmetries}. The main intuition with respect to our estimation process, is that all sets of states $[s]$ across which $f$ is invariant, will map along $\psi$ to an abstract state $\sAbs \coloneqq \psi(s) = \psi(s') \in \SspaceAbs$ $\forall s, s' \in [s]$. 

\looseness -1 We consider the case where such an MDP homomorphism $h$ encoding the underlying geometric structure is known. This is a fair assumption for a large class of applications, where geometric priors can easily be represented via a MDP homomorphism~\cite{derPol2020homomorphic}. Nonetheless, in Section \ref{sec:conclusions} we briefly discuss how the contributions presented in this work can be leveraged in the case of unknown MDP homomorphism. In the following, we introduce a novel geometric estimation error that makes it possible to leverage geometric priors both while learning an optimal sampling strategy, and in inference, when the gathered data is used to compute estimates of the unknown quantity of interest $f$.

\subsection{Geometric Function Estimation}
First, we introduce some quantities updated by the agent when obtaining a noisy realization of $f$ at every time-step $i$, namely $x_i \sim y(s_i)$, where $s_i$ indicates the current state at time-step $i \in [t]$. After $t$ interaction steps we have:
\begin{align}
    T_t(s) &\coloneqq \sum_{i=1}^t \ind\{s_i=s\} \label{eq:visitation_counts}\\
    \hat{f}_t(s) &\coloneqq \frac{1}{T_t^+(s)}\sum_{i=1}^t x_i \ind\{s_i=s\} \label{eq:empirical_mean}\\
    \hat{\sigma}^2_t(s) &\coloneqq \frac{1}{T_t^+(s)}\sum_{i = 1}^t x_i^2 \ind\{s_i=s\} - \hat{f}_t(s)^2 \label{eq:empirical_variance}
\end{align}
which are respectively the \emph{visitation counts}, the \emph{empirical mean} and \emph{empirical variance}. Now, we define the geometric estimation error given $n$ samples from $f$ as follows.
\begin{tcolorbox}[colframe=white!, top=2pt,left=2pt,right=2pt,bottom=2pt]
\center \textbf{Geometric Estimation Error}
\begin{equation}
\label{eq:est_error_eq}
    \bar{\xi}_n = \frac{1}{S}\sum_{s \in \Sspace} \big| \hat{f}^{A}_n(s) - f(s) \big|
\end{equation}
\end{tcolorbox}
\looseness -1 where $\hat{f}^{A}_n(s)$ is an empirical mean obtained by weighted averaging across all states within the same $f$-equivalence class $[s]$. Formally, given \smash{$T_n^+([s]) \coloneqq \sum_{s' \in [s]}T_n^+(s')$}, we have:
\begin{equation}
\label{eq:average_empirical_mean}
    \hat{f}^{A}_n(s) \coloneqq \frac{1}{T_n^+([s])} \sum_{s' \in [s]} T_n(s') \hat{f}_n(s')
\end{equation}
\looseness -1 Interestingly, the geometric estimation error (Eq. \ref{eq:est_error_eq}) generalizes the classic AE estimation error~\cite{tarbouriech2019active}, which corresponds to the limit case of ours where there are no $f$-invariances and therefore every equivalence class is composed of only one state.  

Given the geometric estimation error $\bar{\xi}_n$, any $\epsilon > 0$ and $\delta \in (0, 1)$, we say that an estimate of $f$ is $(\epsilon, \delta)$-accurate if:
\begin{equation}
\label{eq:PAC_RL_problem}
    \PR(\bar{\xi}_n \leq \epsilon) \geq 1-\delta
\end{equation}
\looseness -1 Notice that, in this case, the RL agent is used as an algorithmic tool to estimate an external property of the environment and can be interpreted as an active sampler of the underlying Markov chain.
In particular, we aim to design an algorithm that minimizes the sample complexity needed to estimate $f$ nearly-optimally in high probability.

\begin{restatable}[Sample Complexity Geometric Estimation]{definition}{sample_complexity_def}
\label{def:sample_complexity}
    Given an error $\epsilon>0$ and a confidence level $\delta \in (0,1)$, the sample complexity to solve the geometric function estimation problem is:
    \begin{equation}
        n^{\bar{\xi}}(\epsilon, \delta) \coloneqq \min\{n \geq 1 : \PR(\bar{\xi}_n \leq \epsilon) \geq 1-\delta\}
    \end{equation}
\end{restatable}

\section{From Experimental Design to Convex RL}
\label{sec:CRL_objective}
\looseness -1 In this section, we derive a principled objective to minimize the sample complexity of geometric estimation (Def. \ref{def:sample_complexity}). We first show that $\bar{\xi}_n$ (Eq. \ref{eq:est_error_eq}) can be rewritten as a function of abstract states $\sAbs \in \SspaceAbs$, making it well-defined over the abstract MDP.
\begin{restatable}{proposition}{estErrorRewriting}
\label{proposition:err_rewriting}
    The geometry-aware estimation error $\bar{\xi}_n$ can be rewritten as a function of abstract states as:
    \begin{equation}
        \bar{\xi}_n =  \frac{1}{S}\sum_{\sAbs \in \SspaceAbs} E_{\sAbs} \mid \hat{f}_n(\sAbs) - f(\sAbs) \mid 
    \end{equation}
\end{restatable}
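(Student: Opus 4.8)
The plan is to reindex the state sum in the definition of $\bar{\xi}_n$ (Eq.~\ref{eq:est_error_eq}) according to the partition of $\Sspace$ into $f$-equivalence classes, and to argue that both the true value $f$ and the averaged estimate $\hat{f}^{A}_n$ are constant on each such class. Recall that the state map $\psi$ of the homomorphism $h$ groups $\Sspace$ into the classes $\{[\sAbs]\}_{\sAbs \in \SspaceAbs}$, which form a partition of $\Sspace$ with $|[\sAbs]| = E_{\sAbs}$; thus $\sum_{s \in \Sspace}(\cdot) = \sum_{\sAbs \in \SspaceAbs}\sum_{s \in [\sAbs]}(\cdot)$. The crux is therefore to show that each summand $|\hat{f}^{A}_n(s) - f(s)|$ takes a single value on each class $[\sAbs]$, after which every inner sum collapses to $E_{\sAbs}$ copies of that value.

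First I would establish that $f$ is constant on each class. By construction the classes $[s]$ are exactly the $f$-equivalence classes, so $f(s) = f(s')$ for all $s' \in [s]$ (equivalently, $f$ is invariant to the symmetries $L_g$ via Eq.~\ref{eq:invariance_eq}); hence the notation $f(\sAbs)$ appearing on the right-hand side is well defined as the common value of $f$ on $[\sAbs]$. Next I would show that the averaged estimate is itself class-constant: inspecting Eq.~\ref{eq:average_empirical_mean}, the quantity $\hat{f}^{A}_n(s)$ depends on $s$ only through the class $[s]$, since both the normalizer $T_n^+([s])$ and the sum $\sum_{s' \in [s]} T_n(s')\hat{f}_n(s')$ are functions of the class alone. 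Consequently $\hat{f}^{A}_n(s) = \hat{f}^{A}_n(s')$ for all $s' \in [s]$, so we may define the abstract estimate $\hat{f}_n(\sAbs) \coloneqq \hat{f}^{A}_n(s)$ for any $s \in [\sAbs]$ without ambiguity.

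With both quantities shown constant on classes, substituting into the regrouped sum gives $\sum_{s \in [\sAbs]} |\hat{f}^{A}_n(s) - f(s)| = E_{\sAbs}\,|\hat{f}_n(\sAbs) - f(\sAbs)|$, and summing over $\sAbs \in \SspaceAbs$ and dividing by $S$ yields the claimed identity. The argument is essentially a bookkeeping reindexing, so I do not expect a genuine analytical obstacle; the only point requiring care is verifying the well-definedness of the two abstract objects, namely that the state map identifies precisely the $f$-equivalence classes (so that $f$ is class-constant) and that the weighted-averaging definition in Eq.~\ref{eq:average_empirical_mean} was deliberately built to be invariant across a class. A minor subtlety worth stating explicitly is the role of the $(\cdot)^+$ truncation in $T_n^+([s])$ and $T_n^+(s')$, which does not affect class-constancy since it enters $\hat{f}^{A}_n$ only through class-level quantities.
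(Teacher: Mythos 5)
Your proposal is correct and follows essentially the same route as the paper's proof: expand the definition of $\hat{f}^{A}_n$, use that both $f$ and the weighted average are constant on each $f$-equivalence class, and regroup the sum over $\Sspace$ into a sum over $\SspaceAbs$ with multiplicity $E_{\sAbs}$. Your explicit verification that $\hat{f}^{A}_n(s)$ depends on $s$ only through $[s]$ is a point the paper leaves implicit, but the argument is the same.
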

\looseness -1 While Proposition \ref{proposition:err_rewriting} is proved in Appendix \ref{sec:problem_proof}, here we briefly mention the main intuition. Since the empirical estimator $\hat{f}^{A}_n(s)$ aggregates over experiments $s \in [s]$ across which $f$ is invariant, the estimation error can be rewritten by considering only a representative of the $f$-equivalence class $[s]$, namely an abstract state $\sAbs = \psi(s) \in \SspaceAbs$. Then, equality is obtained by reweighting with  the cardinality $E_s$ of $[s]$.
\subsection{Tractable formulation via Convex RL}
Proposition \ref{proposition:err_rewriting} gives $\bar{\xi}_n$ as a function of abstract states, for which we derive the upper bound below (proof in Apx. \ref{sec:problem_proof}).

\begin{restatable}[Convex Upper Bound of $\bar{\xi}_n$]{proposition}{estUpperBound}
    With probability at least $1-\delta$ and $n$ interactions with $f$ we have:
    \begin{align}
    \label{eq:upper_bound_est_err}
        \bar{\xi}_n &\leq \frac{C(n,\overline{S},\delta)}{S}\sum_{\sAbs \in \SspaceAbs} \Fabs(\sAbs; \;T^+_n)
    \end{align}
    with $C(n,\overline{S},\delta) \coloneqq \max \Big \{\log(n\overline{S}/\delta) , \sqrt{\log(n\overline{S}/\delta)} \Big\}$ and
    \begin{align*}
     \quad &
        \Fabs(\sAbs; \;T^+_n) \coloneqq E_{\sAbs}\left(\sqrt{\frac{2\sigma^2(\sAbs)}{T^+_n(\sAbs)}} + \frac{F_{\max}}{T^+_n(\sAbs)}\right) \nonumber
    \end{align*}
    where $T^+_n(\sAbs) \coloneqq T^+_n([s])$ are the visitation counts of $\sAbs$.
\end{restatable}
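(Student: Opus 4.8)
The plan is to start from Proposition~\ref{proposition:err_rewriting}, which already writes $\bar{\xi}_n = \frac{1}{S}\sum_{\sAbs \in \SspaceAbs} E_{\sAbs}\,|\hat{f}_n(\sAbs) - f(\sAbs)|$, so that it suffices to control, with high probability and uniformly over $\sAbs$, the single-abstract-state deviation $|\hat{f}_n(\sAbs) - f(\sAbs)|$ and then reassemble the sum. First I would unfold the aggregated estimator: whenever every state $s' \in [s]$ carrying a sample has $T_n(s') \geq 1$, the weights $T_n(s')/T_n^+(s')$ in Eq.~\eqref{eq:average_empirical_mean} collapse to $1$, so $\hat{f}_n(\sAbs)$ is exactly the plain empirical mean of the $T_n^+(\sAbs) = T_n^+([s])$ noisy observations $x_i$ gathered anywhere in the class $[s]$. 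By the invariance assumption on $y$ (footnote to Eqs.~\eqref{eq:mdp_sym_r}--\eqref{eq:mdp_sym_P}), each such observation has common mean $f(\sAbs)$, common variance $\sigma^2(\sAbs)$, and lies in $[0,F_{\max}]$, so $\hat{f}_n(\sAbs)$ is an average of bounded, equal-mean samples.

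The core step is a Bernstein-type concentration bound on this average, which yields the characteristic two-term rate in which the variance term scales as $\sqrt{2\sigma^2(\sAbs)/m}$ and the range term as $F_{\max}/m$, with $m = T_n^+(\sAbs)$. The subtlety — and the part I expect to be the main obstacle — is that $m$ is itself a random, adaptively determined count rather than a fixed sample size: the policy chooses where to sample based on the past, so the observations form a martingale-difference sequence and the number collected in each class is not known in advance. I would address this either with a (Freedman-type) time-uniform martingale Bernstein inequality, or, more concretely, by peeling: fix a per-event confidence $\delta' = \delta/(n\overline{S})$, apply the concentration bound for each candidate count value $m \in \{1,\dots,n\}$, and take a union bound over these $n$ values and over the $\overline{S} = |\SspaceAbs|$ abstract states. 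This is precisely where the logarithmic factor $\log(n\overline{S}/\delta)$ enters, and it guarantees that the deviation bound holds simultaneously for all $\sAbs$ on an event of probability at least $1-\delta$.

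On that event we have, for every $\sAbs$,
\begin{equation*}
    |\hat{f}_n(\sAbs) - f(\sAbs)| \;\le\; \sqrt{\tfrac{2\sigma^2(\sAbs)\log(n\overline{S}/\delta)}{T_n^+(\sAbs)}} + \tfrac{F_{\max}\log(n\overline{S}/\delta)}{T_n^+(\sAbs)} .
\end{equation*}
To match the stated form I would factor the logarithms out of each term: the variance term carries $\sqrt{\log(n\overline{S}/\delta)}$ while the range term carries $\log(n\overline{S}/\delta)$, and since $C(n,\overline{S},\delta) = \max\{\log(n\overline{S}/\delta), \sqrt{\log(n\overline{S}/\delta)}\}$ upper-bounds both of these quantities in either regime (whether $\log(n\overline{S}/\delta) \ge 1$ or $< 1$), replacing each logarithmic prefactor by $C$ only loosens the bound. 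This gives $|\hat{f}_n(\sAbs) - f(\sAbs)| \le C(n,\overline{S},\delta)\big(\sqrt{2\sigma^2(\sAbs)/T_n^+(\sAbs)} + F_{\max}/T_n^+(\sAbs)\big)$. Multiplying by $E_{\sAbs}/S$, summing over $\sAbs \in \SspaceAbs$, and recognizing the definition of $\Fabs(\sAbs;T_n^+)$ yields exactly the claimed inequality. The only remaining care is to verify that the numerical constant inside the variance term is indeed $2$, i.e.\ that the Bernstein constants and the union-bound overhead absorb cleanly into $C$, which is a routine bookkeeping check.
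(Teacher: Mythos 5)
Your proposal follows essentially the same route as the paper's proof: a Bernstein-type concentration bound on the aggregated per-abstract-state estimator, a union bound over the $\overline{S}$ abstract states and over $n$ (the paper unions over time steps $t \in [n]$ where you peel over candidate counts $m \in [n]$, which serves the same purpose and yields the same $\log(n\overline{S}/\delta)$ factor), followed by factoring the logarithms into $C(n,\overline{S},\delta)$ and reassembling the sum via Proposition~\ref{proposition:err_rewriting}. Your explicit attention to the adaptivity of the visitation counts is, if anything, more careful than the paper's treatment, so the approach is sound and matches.
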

\looseness -1 As an abstract state represents an $f$-equivalence class of original states, \ie $\sAbs = \psi(s) = \psi(s') \; \forall s,s' \in [\sAbs]$, one can notice that Equation  \ref{eq:upper_bound_est_err} captures two interesting facts. First, equivalence classes $[\sAbs]$ that are under-visited (small $T^+_n(\sAbs)$) with high variance (large $\sigma^2(\sAbs)$) lead to higher estimation error. Second, the cardinality of an equivalence class $E_{\sAbs}$ is proportional to how much its estimation quality impacts the overall estimation error.

\looseness -1 While the upper bound in Equation \ref{eq:upper_bound_est_err} is convex, the constraint set of admissible visitation counts $T^+_n$ is non-convex~\cite{tarbouriech2020active}, rendering this formalization a NP-hard problem~\cite{welch1982algorithmic, tarbouriech2019active}. Nonetheless, problems of this form present a hidden convexity in the asymptotic relaxation ($n \to \infty$) of the dual problem. Given the set $\Lambda$ of admissible asymptotic state-action distributions
\begin{align*}
        \Lambda \coloneqq \{\lambda \in \Delta(\Sspace \times \Aspace) &: \forall s \in \Sspace,\\
    \sum_{b \in \Aspace} \lambda(s,b) &= \sum_{(s',a) \in \Sspace \times \Aspace} P(s|s',a)\lambda(s',a) \}
\end{align*}
we introduce the following $\eta$-smoothened objective.
\begin{tcolorbox}[colframe=white!, top=2pt,left=2pt,right=2pt,bottom=2pt]
\center \textbf{Geometric Estimation Objective}
\begin{equation}
\label{eq:L_infty_obj}
    \Labs_{\infty, \eta}(\lambda) \coloneqq \frac{1}{S} \sum_{\sAbs \in \SspaceAbs} E_{\sAbs} \sqrt{\frac{2\sigma^2(\sAbs)}{ \sum_{s \in [\sAbs]} (\lambda(s) + \eta)}}
\end{equation}
\end{tcolorbox}
Where $\lambda(s) \coloneqq \sum_{a \in \Aspace}\lambda(s, a)$. Crucially, in the following statement, we show that $\Labs_{\infty, \eta}(\lambda)$ is an upper bound of the estimation error $\bar{\xi}_n$ and therefore that  minimizing $\Labs_{\infty, \eta}(\lambda)$ is a principled objective to minimize the sample complexity in Definition \ref{def:sample_complexity}.
\begin{restatable}[Tractable Convex Upper Bound of $\bar{\xi}_n$]{proposition}{tractableEstUpperBound} \label{prop: tractable_convex_upper_bound} Let an empirical state-action frequency at time $t$ be defined as $\lambda_t(s,a) = T_t(s,a)/t$, then for $E_{\overline{s}}\eta \leq \frac{1}{n}$ we have:
\begin{equation}
    \bar{\xi}_n \leq \frac{2S}{\sqrt{n}}C(n, \overline{S}, \delta)\left[\Labs_{\infty, \eta}(\lambda_n) +\frac{\overline{S} F_{\max}}{S \sqrt{n} \eta}\right] 
\end{equation}
\end{restatable}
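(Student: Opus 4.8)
The starting point is the Convex Upper Bound proposition, which already gives $\bar{\xi}_n \le \frac{C(n,\overline{S},\delta)}{S}\sum_{\sAbs\in\SspaceAbs}\Fabs(\sAbs;T^+_n)$. The plan is to split $\Fabs(\sAbs;T^+_n)=E_{\sAbs}\sqrt{2\sigma^2(\sAbs)/T^+_n(\sAbs)}+E_{\sAbs}F_{\max}/T^+_n(\sAbs)$ into its variance-driven and range-driven pieces, and to bound each separately against the two terms on the right-hand side of the claim, namely the $\Labs_{\infty,\eta}(\lambda_n)$ term and the additive correction $\overline{S}F_{\max}/(S\sqrt{n}\eta)$. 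Summing the two bounds then recovers the statement.

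The bridge between the count-based bound and the distribution-based objective is the identity $\lambda_n(s)=T_n(s)/n$, from which $\sum_{s\in[\sAbs]}(\lambda_n(s)+\eta)=(T_n(\sAbs)+nE_{\sAbs}\eta)/n$, writing $T_n(\sAbs)\coloneqq\sum_{s\in[\sAbs]}T_n(s)$. Pulling the factor $\sqrt{n}$ out of the denominator, I would rewrite $\Labs_{\infty,\eta}(\lambda_n)=\frac{\sqrt{n}}{S}\sum_{\sAbs}E_{\sAbs}\sqrt{2\sigma^2(\sAbs)/(T_n(\sAbs)+nE_{\sAbs}\eta)}$, so that after multiplying by the prefactor $2SC/\sqrt{n}$ the variance part of the target becomes $2C\sum_{\sAbs}E_{\sAbs}\sqrt{2\sigma^2(\sAbs)/(T_n(\sAbs)+nE_{\sAbs}\eta)}$. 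Up to the explicit constants, this matches the variance part of $\Fabs$ with $T^+_n(\sAbs)$ replaced by the smoothed aggregate $T_n(\sAbs)+nE_{\sAbs}\eta$, so it suffices to compare the two denominators term by term.

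For the variance term I would use two lower bounds on the per-state pseudo-count aggregate, $T^+_n(\sAbs)=\sum_{s\in[\sAbs]}\max\{1,T_n(s)\}\ge\max\{T_n(\sAbs),E_{\sAbs}\}\ge (T_n(\sAbs)+1)/2$ (the last step using $E_{\sAbs}\ge1$), together with the hypothesis $E_{\sAbs}\eta\le 1/n$, which yields $T_n(\sAbs)+nE_{\sAbs}\eta\le T_n(\sAbs)+1\le 2\,T^+_n(\sAbs)$. Hence the ratio of denominators is at most $2$, so each variance summand of $\frac{C}{S}\Fabs$ is dominated by the matching summand of the target (the $2S$ prefactor absorbs the factor $\sqrt{2}$ with room to spare). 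For the range term I would instead use only $T^+_n(\sAbs)\ge E_{\sAbs}$, giving $\frac{C}{S}\sum_{\sAbs}E_{\sAbs}F_{\max}/T^+_n(\sAbs)\le \frac{C\,\overline{S}F_{\max}}{S}$, and then $\eta\le1/n$ (again from $E_{\sAbs}\ge1$) to verify $\frac{1}{S}\le\frac{2}{n\eta}$, which matches this against the additive correction $\frac{2C\,\overline{S}F_{\max}}{n\eta}$.

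The only delicate point is reconciling the two different regularizations of the visitation counts: the per-state flooring $\max\{1,\cdot\}$ built into $T^+_n$ versus the additive smoothing $\eta$ applied to the aggregate $\sum_{s\in[\sAbs]}(\lambda_n(s)+\eta)$. The crux is to check that the hypothesis $E_{\sAbs}\eta\le1/n$ keeps the discrepancy between these two regularizations within a constant factor, which is exactly what the bounds $T_n(\sAbs)+nE_{\sAbs}\eta\le 2T^+_n(\sAbs)$ and $T^+_n(\sAbs)\ge E_{\sAbs}$ encode; everything else is routine bookkeeping of the $\sqrt{n}$, $S$, and $E_{\sAbs}$ factors.
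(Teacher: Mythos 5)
Your proposal is correct and follows essentially the same route as the paper: both arguments reduce the count-based bound of the preceding proposition to the $\eta$-smoothed objective via the per-state inequality $T^+_n(s)\ge (T_n(s)+1)/2$ together with the hypothesis $E_{\sAbs}\eta\le 1/n$, and both handle the $F_{\max}$ term by dropping the nonnegative visitation mass in the denominator. The only cosmetic difference is that the paper routes the comparison through an intermediate finite-sample objective $\Labs_n$ and a separate bounding lemma, whereas you compare the variance and range summands against the two terms of the right-hand side directly.
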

\looseness -1 Interestingly, in the next section we show that this problem can be solved by computing optimal sampling policies for abstract MDPs only.

\section{\AlgNameDef}
\label{sec:algorithm}
\setlength{\textfloatsep}{6pt}
\begin{algorithm}[t!]
\small
\caption{\AlgNameDef}
\label{alg:gae_algorithm}
    \begin{algorithmic}[1]
        \STATE \textbf{Input:} $\eta$, $h$, $\mathcal{M}$, $\delta$, $\{\tau_k\}_{k \in [K-1]}$
        \STATE Compute abstract CMP $\overline{\cmp}$ induced by $h, \cmp$
        \STATE Initialize $\overline{\lambda}_1 = 1/\Bar{S}\Bar{A}$
        \FOR{$k=1,2,...,K-1$}
        \STATE Compute abstract reward $\Bar{r}^k_{\lambdaAbs_k}$ $\forall \sAbs \in \AspaceAbs, \aAbs \in \AspaceAbs$
        \begin{equation*}
            \Bar{r}^k_{\lambdaAbs_k}(\sAbs, \aAbs) \coloneqq  \frac{- E_s \left[\sqrt{2 \hat{\sigma}_{t_k - 1}^2(\overline{s})} + \alpha(t_k -1, \overline{s}, \delta) \right]}{ 2 S \big(\lambdaAbs_k(\sAbs) + E_{\sAbs} \eta \big)^{\frac{3}{2}}}
        \end{equation*}
        \STATE $\overline{\pi}_{k+1}^+  \xleftarrow{}\mdpSolver\left[\overline{\cmp}_{\Bar{r}}^k =\left(\SspaceAbs, \AspaceAbs, \overline{P}, \muAbs,  \Bar{r}^k_{\lambdaAbs_k}\right) \right]$ 
        \STATE Lift abstract policy
        \begin{equation*}
           \pi^+_{k+1}(a|s) =  \frac{\overline{\pi}_{k+1}^+(\aAbs|\psi(s))}{|\{a \in \phi_s^{-1}(\overline{a})\}|}, \forall s \in \Sspace, a \in \phi_s^{-1}(\aAbs)
        \end{equation*}
        \STATE Deploy policy $\pi^+_{k+1}$ in $\mathcal{M}_f$ for $\tau_k$ steps
        \STATE Compute \smash{$\hat{f}_{t_{k+1}-1}$} and \smash{$\Tilde{\upsilon}_{k+1}$}
        \STATE Aggregate estimates according to \smash{$\overline{\cmp}_f^k$}
        \begin{align*}
            \hat{f}_{t_{k+1}-1}(\overline{s}) &= \frac{1}{T^+_{t_{k+1}-1}(\sAbs)}\sum\limits_{s \in [\sAbs]} T^+_{t_{k+1}-1}(s) \hat{f}_{t_{k+1}-1}(s)\\\Tilde{\overline{\upsilon}}_{k+1}(\overline{s},\aAbs) &= \sum_{s \in [\overline{s}]} \Tilde{\upsilon}_{k+1}(s,a)
        \end{align*}
        \STATE Update the abstract state-action frequency $\overline{\lambda}_{k+1}$
        \begin{equation*}
            \overline{\lambda}_{k+1} = \frac{\tau_k}{t_{k+1} - 1} \Tilde{\overline{\upsilon}}_{k+1} + \frac{t_k - 1}{t_{k+1} - 1} \overline{\lambda}_{k}
        \end{equation*}
        \ENDFOR
        \STATE \textbf{Return: } $\hat{f}_{t_K - 1}$
    \end{algorithmic}
\end{algorithm}
In this section, we introduce \AlgNameDef, an algorithm for AE that leverages the power of abstraction to improve statistical and computational efficiency.
Alike classic AE algorithms~\cite{tarbouriech2019active, hazan2019maxent}, \AlgNameShort is based on a Frank-Wolfe (FW) scheme~\cite{pmlr-v28-jaggi13} that reduces the problem
\begin{equation}
    \label{eq:opt_infty_prob}
      \min_{\lambda \in \Lambda} \Labs_{\infty, \eta}(\lambda)
\end{equation}
to a sequence of $K$ linear programs, each corresponding to a classic MDP $\cmp_f^k$ with reward $r^k_{\lambda}$ defined as
\begin{equation*}
    \nabla \overline{\mathcal{L}}^+_{t_k - 1}(\lambda)_{[s,a]} = \frac{- E_s \left[\sqrt{2 \hat{\sigma}_{t_k - 1}^2(\overline{s})} + \alpha(t_k -1, \overline{s}, \delta) \right]}{ 2 S \big(\sum_{s \in [s]} (\sum_{b \in \mathcal{A}}\lambda(s, b) + \eta) \big)^{\frac{3}{2}}}
\end{equation*}
where the unknown variances are optimistically bounded via a quantity $\alpha(t,\sAbs,\delta)$ according to the following result.
\begin{restatable} [Variance Concentration~\cite{panaganti2022sample}]{lemma}{variance}
\label{lemma:variance_upper_bound}
For all $\overline{s} \in \overline{\mathcal{S}}$, with probability at least $1-\delta$ we have 
\begin{align*}
    \Big|\sqrt{\sigma^2(\overline{s})}-\sqrt{\hat{\sigma}^2_{t}(\overline{s})}\Big| \leq F_{\max}\sqrt{2\frac{\log(2\overline{S}t^2/\delta)}{T_{t}^+(\overline{s})}} \coloneqq \alpha(t,\sAbs, \delta)
\end{align*}
\end{restatable}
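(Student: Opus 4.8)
The plan is to reduce the claim to a single-state concentration bound for the empirical standard deviation of i.i.d. bounded observations, and then to make that bound hold uniformly over abstract states and over time by a union bound. First I would fix an abstract state $\overline{s}$ and consider the observations collected from its equivalence class $[\overline{s}]$. Because $y$ is assumed invariant across $[\overline{s}]$, the noisy readings obtained on successive visits to the class, indexed by visit order, are mutually independent and identically distributed with common mean $f(\overline{s})$ and common variance $\sigma^2(\overline{s})$, and take values in $[0,F_{\max}]$. Hence, conditioned on having visited the class, $\hat{f}_t(\overline{s})$ and $\hat{\sigma}^2_t(\overline{s})$ are exactly the sample mean and sample variance of an i.i.d. sample of size $T_t^+(\overline{s})$ drawn in $[0,F_{\max}]$. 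This is the step that exploits the homomorphism: aggregation over the class is legitimate precisely because invariance makes the pooled samples i.i.d.

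The core is then the empirical-Bernstein-type control of the sample standard deviation, which is the content imported from \cite{panaganti2022sample}. The key point is that, unlike the variance, the standard deviation $\sqrt{\hat{\sigma}^2}$ concentrates directly at the fast $1/\sqrt{T}$ rate. Writing the unbiased sample variance as a pairwise $U$-statistic $\frac{1}{T(T-1)}\sum_{i<j}(x_i-x_j)^2$, one verifies that $\sqrt{\hat{\sigma}^2}$, as a function of $(x_1,\dots,x_T)$, enjoys a bounded-difference property of order $F_{\max}/\sqrt{T}$; McDiarmid's inequality then gives concentration of $\sqrt{\hat{\sigma}^2}$ about its mean, and Jensen's inequality together with unbiasedness of the sample variance controls the residual bias $|\,\mathbb{E}\sqrt{\hat{\sigma}^2}-\sqrt{\sigma^2}\,|$ at the same rate. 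Combining these yields, for a single state and a fixed sample count, with probability at least $1-\delta'$,
\[
\big|\sqrt{\sigma^2(\overline{s})}-\sqrt{\hat{\sigma}^2_t(\overline{s})}\big|\;\leq\;F_{\max}\sqrt{\frac{2\log(2/\delta')}{T_t^+(\overline{s})}},
\]
i.e.\ the Maurer--Pontil bound rescaled from $[0,1]$ to $[0,F_{\max}]$.

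Finally I would lift this pointwise statement to the uniform one claimed. Since the per-class observations form an i.i.d.\ sequence, I apply the single-shot bound to its first $m$ samples and allocate a failure budget proportional to $\delta/(\overline{S}\,m^2)$ to the pair (state, sample count); unioning over the $\overline{S}$ abstract states and over all $m\geq 1$ keeps the total failure probability below $\delta$ because $\sum_{m}m^{-2}<\infty$. At time $t$ at most $t$ samples have been gathered, so $T_t^+(\overline{s})\leq t$, and monotonicity of the logarithm replaces the data-dependent $m^2$ by the uniform $t^2$, producing $\log(2\overline{S}t^2/\delta)$ and hence $\alpha(t,\overline{s},\delta)$ simultaneously for all $\overline{s}\in\overline{\mathcal{S}}$ with probability at least $1-\delta$.

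I expect the main obstacle to be two-fold. First, obtaining the correct $1/\sqrt{T}$ rate requires the self-bounding/bounded-difference analysis of $\sqrt{\hat{\sigma}^2}$ through its $U$-statistic form; the naive shortcut $|\sqrt{a}-\sqrt{b}|\leq\sqrt{|a-b|}$ only turns an $O(1/\sqrt{T})$ variance bound into the slower $T^{-1/4}$ rate and therefore misses the statement. Second, the sample count $T_t^+(\overline{s})$ is random and the visitation schedule is chosen adaptively by the policy, so one must argue carefully that conditioning on which class is visited does not disturb the i.i.d.\ structure of the noise (the observation drawn upon a visit is independent of the history given the state, and invariance makes its law identical across $[\overline{s}]$). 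This justification, together with the union over deterministic sample counts, is what makes the adaptive-sampling argument rigorous.
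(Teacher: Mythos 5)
Your proposal is correct and follows essentially the same route as the paper's proof: a pointwise concentration bound on the empirical standard deviation of the pooled per-class samples (which the paper simply imports from \citet{panaganti2022sample}, and which you re-derive via the Maurer--Pontil $U$-statistic/bounded-differences argument), followed by a union bound over abstract states and sample counts to obtain the $\log(2\overline{S}t^2/\delta)$ factor. The only substantive difference is that you additionally spell out why invariance makes the pooled observations i.i.d.\ and why the adaptive visitation schedule is handled by unioning over deterministic sample counts --- details the paper's two-line proof leaves implicit --- so this is a strengthening rather than a deviation.
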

\looseness -1 We show that optimistic gradients ($r^k_{\lambda}$) of Equation \ref{eq:opt_infty_prob} satisfy state-action invariances induced by $f$ (proof in Apx. \ref{sec:algorithm_proof}).
\begin{restatable}[Gradient-Reward Invariances]{proposition}{gradInv}
\label{proposition:gradient_invariance}
    \looseness -1If $f$ is invariant over states $s$ and $s'$, then $\forall s,s'\in [s], \forall a,a' \in \mathcal{A}$
    \begin{equation*}
    \nabla_\lambda \overline{\mathcal{L}}^+_{t_k - 1}(\lambda)[s,a] = \nabla_\lambda \overline{\mathcal{L}}^+_{t_k - 1}(\lambda)[s',a'] 
    \end{equation*}
\end{restatable}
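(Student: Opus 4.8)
The claim bundles two distinct invariances that I would treat separately: invariance across actions $a, a' \in \Aspace$, and invariance across states $s, s' \in [s]$ lying in the same $f$-equivalence class. The plan is to work with the explicit form of the optimistic gradient $\nabla_\lambda \Labs^+_{t_k - 1}(\lambda)[s,a]$ displayed above and argue that every factor appearing in it is either a constant or a function of the equivalence class $[s]$ (equivalently, of the abstract state $\sAbs = \psi(s)$) alone, while none of them depends on the action argument. It is cleanest to recover this gradient by differentiating the smoothed objective $\Labs_{\infty, \eta}$ (Eq.~\ref{eq:L_infty_obj}), since this route makes both invariances transparent, and then to substitute the optimistic surrogate for the unknown variance via Lemma~\ref{lemma:variance_upper_bound}.

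For the action invariance, I would first note that $\Labs_{\infty, \eta}$ depends on $\lambda$ only through the state marginals $\lambda(s) = \sum_{b \in \Aspace} \lambda(s,b)$. By the chain rule, for any fixed $s$,
\begin{equation*}
    \frac{\partial \Labs_{\infty, \eta}(\lambda)}{\partial \lambda(s,a)} = \frac{\partial \Labs_{\infty, \eta}(\lambda)}{\partial \lambda(s)} \cdot \frac{\partial \lambda(s)}{\partial \lambda(s,a)} = \frac{\partial \Labs_{\infty, \eta}(\lambda)}{\partial \lambda(s)},
\end{equation*}
because $\partial \lambda(s) / \partial \lambda(s,a) = 1$ for every $a \in \Aspace$. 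The right-hand side is manifestly independent of $a$, which immediately gives equality across all $a, a' \in \Aspace$ and reproduces the stated gradient formula after the optimistic substitution.

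For the state invariance within an equivalence class, I would isolate the $s$-dependent quantities surviving in the gradient: the cardinality $E_s$, the abstract state $\sAbs = \psi(s)$ entering $\hat{\sigma}^2_{t_k-1}(\sAbs)$ and $\alpha(t_k - 1, \sAbs, \delta)$, and the denominator sum over the equivalence class $\sum_{s'' \in [s]} (\lambda(s'') + \eta)$. For any $s' \in [s]$, the definition of the equivalence class yields $[s'] = [s]$, $\psi(s') = \psi(s) = \sAbs$, and therefore $E_{s'} = E_s = |[s]|$. Substituting $s'$ for $s$ leaves every factor unchanged, which establishes the equality.

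The substantive point, rather than the differentiation, is justifying that $\hat{\sigma}^2_{t_k-1}(\sAbs)$ and $\alpha(t_k - 1, \sAbs, \delta)$ are genuinely well-defined on abstract states, i.e.\ that the empirical variance and confidence width aggregated over $[s]$ depend only on $\sAbs$ and not on the chosen representative. This follows from the aggregation step of \AlgNameShort and from the assumption that the noisy observation $y$ inherits the invariances of $f$, so that samples gathered across $[s]$ all estimate one abstract-state quantity. Once this consistency of the empirical estimates at the abstract level is in place, both invariances above are immediate by inspection, so I expect this bookkeeping, rather than any nontrivial calculation, to be the only delicate part.
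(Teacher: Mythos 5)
Your proposal is correct and follows essentially the same route as the paper: write out the gradient explicitly and observe that it depends only on the abstract state $\sAbs=\psi(s)$ (through $E_s$, $\hat{\sigma}^2_{t_k-1}(\sAbs)$, $\alpha$, and the sum over $[s]$ in the denominator) and not on the action, since the denominator aggregates $\lambda$ over all actions. Your extra remarks --- the chain-rule phrasing of the action invariance and the observation that well-definedness of $\hat{\sigma}^2$ and $\alpha$ at the abstract level rests on the aggregation step and on $y$ inheriting the invariances of $f$ --- only make explicit what the paper's proof leaves implicit.
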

Intuitively, Proposition \ref{proposition:gradient_invariance} is due to the fact that the invariances of $f$ propagate to the gradient via $\hat{\sigma}$, because of its definition in Equation \ref{eq:empirical_variance}. 
As a consequence, we can define the optimistic abstract reward as:
\begin{equation}
\label{eq:abstract_reward}
    \Bar{r}^k_{\lambdaAbs}(\sAbs, \aAbs) \coloneqq  \frac{- E_s \left[\sqrt{2 \hat{\sigma}_{t_k - 1}^2(\overline{s})} + \alpha(t_k -1, \overline{s}, \delta)\right]}{ 2 S \big(\lambdaAbs(\sAbs) + E_{\sAbs} \eta \big)^{\frac{3}{2}}}
\end{equation}
where $\lambdaAbs \in \overline{\Lambda} \subseteq \Delta(\SspaceAbs \times \AspaceAbs)$ is an admissible abstract state-action distribution.\footnote{Notice that the set of admissible abstract state-action distributions $\overline{\Lambda}$ can be defined analogously to $\Lambda$.}
Given the reward \smash{$\Bar{r}^k_{\lambdaAbs}$} and the invariances on the dynamics $P$ in Equation \ref{eq:mdp_hom_dynamics}, the MDP \smash{$\cmp_f^k$} at each step $k \in [K]$ of the FW scheme can be solved by computing the optimal policy for the abstract MDP \smash{$\overline{\cmp}_f^k$} and then lifting it back along the MDP homomorphism $h$ (Eq. \ref{eq:policy_lifting}). This observation unlocks the power of abstraction for AE in MDPs, and leads to the \AlgNameShort algorithm, for which we report the pseudocode in Algorithm \ref{alg:gae_algorithm}.\par

\looseness -1 First, \AlgNameShort computes the abstract CMP $\overline{\cmp}$ given the homomorphism $h$ and the original CMP $\cmp$ (line 2). This operation is computationally efficient as we can perform one sweep over $\Sspace\Aspace$ to compute $\SspaceAbs \AspaceAbs$ by applying $\psi$ and $\phi_s$ to original states and actions respectively. The empirical visitation frequency $\overline{\lambda}_1$ is initialized in line 3. At each iteration, \AlgNameShort computes the optimal abstract policy $\overline{\pi}_{k+1}^+$ (line 6), \eg via value iteration, for the abstract MDP \smash{$\overline{\cmp}_{\Bar{r}}^k = \overline{\cmp} \cup \{\Bar{r}^k_{\lambdaAbs_k}\}$},
where \smash{$\Bar{r}^k_{\lambdaAbs_k}$} is an estimate of the optimistic gradient in \eqref{eq:abstract_reward} based on the samples gathered via policy $\pi^+_{k}$ during the previous iteration. Then, it computes an optimal policy for the original MDP, namely $\pi^+_{k+1}$ by lifting the optimal abstract policy (line 7), which is deployed for $\tau_k$ steps (line 8). The gathered samples of $f$ and the state-action visitation counts are used to update the abstract empirical mean of $f$, namely \smash{$\hat{f}_{t_{k+1}-1}(\overline{s})$}, and compute the abstract empirical state-action distribution \smash{$\Tilde{\overline{\upsilon}}_{k+1}$} by aggregating \smash{$\Tilde{\upsilon}_{k+1}$} across states within the same equivalence class (lines 8-9). Then, the empirical visitation frequency $\overline{\lambda}_{k+1}$ is updated to serve for the gradient estimation at the next iteration. \AlgNameShort outputs the aggregated estimates of $f$. 

\mypar{Benefits of Abstraction}~~~Since AE typically entails solving a sequence of MDPs, encoding each instance via a (smaller) abstract MDP (line 6) gives significant computational benefits, as shown in Section \ref{sec:experiments}. From a statistical perspective, the fundamental advantage of \AlgNameShort is leveraging known invariances during the density estimation process (lines 8-10) common in previous works~\cite{hazan2019maxent, tarbouriech2019active}, leading to faster convergence. However, how do different degrees of geometric structure benefit the statistical efficiency of the problem? In the next section, we formally answer this question by presenting a sample complexity result showcasing a geometric compression term.
\looseness - 1 In the following, we denote by sampling strategy the empirical distribution \smash{$\lambda_n$} over $\Sspace \times \Aspace$ induced by the policies \smash{$\{\pi_{k}^+\}_{k \in [K]}$}.

\section{Theoretical Analysis}
\label{sec:analysis}
\looseness -1 In this section, we present an upper bound on the regret and the sample complexity achieved by \AlgNameShort against an optimal sampling strategy. The latter result captures the impact of abstraction on the complexity of the problem via the following notion of geometric compression. 
\begin{tcolorbox}[colframe=white!, top=2pt,left=2pt,right=2pt,bottom=2pt]
\begin{restatable}[Geometric Compression Term]{definition}{geomCompressionTerm}
\label{def:geometric_compression_term}
We denote as geometric compression term $\Phi$ the ratio between the cardinalities of the abstract and original state spaces, formally:
\begin{equation}
    \Phi \coloneqq \overline{S}/S \in (0,1]
\end{equation}
\end{restatable}
\end{tcolorbox}
Before presenting these results, we state two assumptions we employed for deriving them.
\begin{figure*}[t]
\centering
\subcaptionbox[Short Subcaption]{
    Diffusion Deterministic Dynamics
    \label{subfig:diff_stat}}
[
    0.33\textwidth % width of caption
]
{%
    \includegraphics[width=0.3\textwidth]{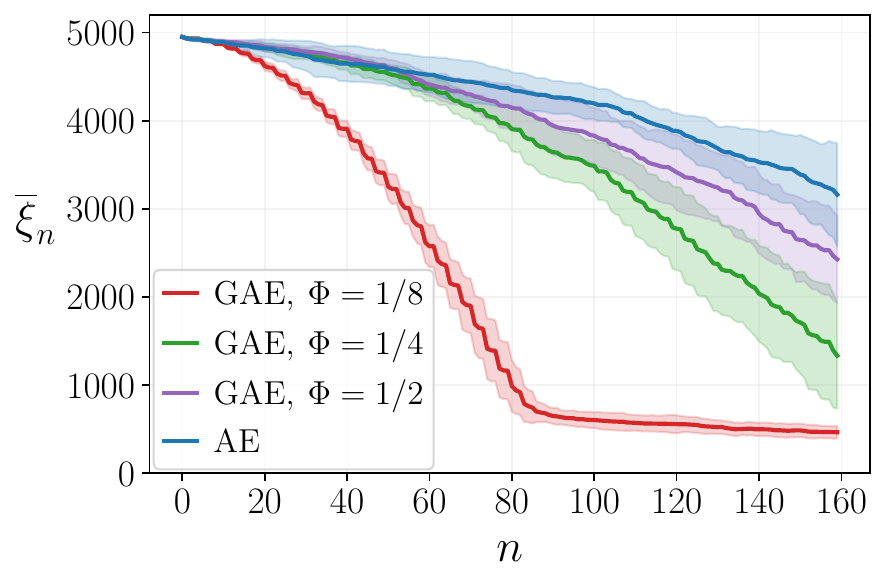}%
}%
\hfill % seperation
\subcaptionbox[Short Subcaption]{%
    Diffusion Stochastic Dynamics (SD)
    \label{subfig:diff_stat_stoch}%
}
[%
    0.33\textwidth % width of caption
]%
{%
    \includegraphics[width=0.3\textwidth]{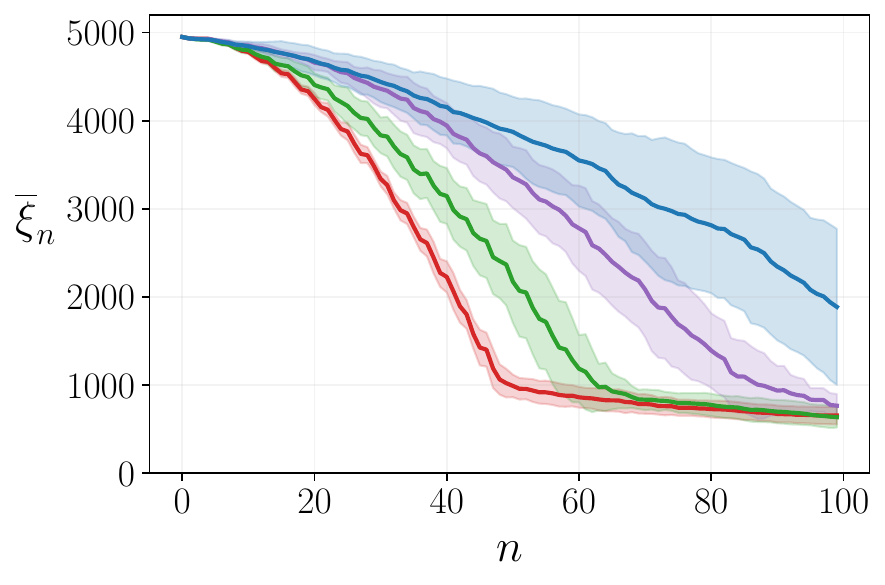}%
}% 
\hfill
\subcaptionbox[Short Subcaption]{%
    Diffusion with Inference Bias (SD) %
    \label{subfig:diffusion_interence_bias}%
}
[%
    0.33\textwidth % width of caption
]%
{%
    \includegraphics[width=0.3\textwidth]{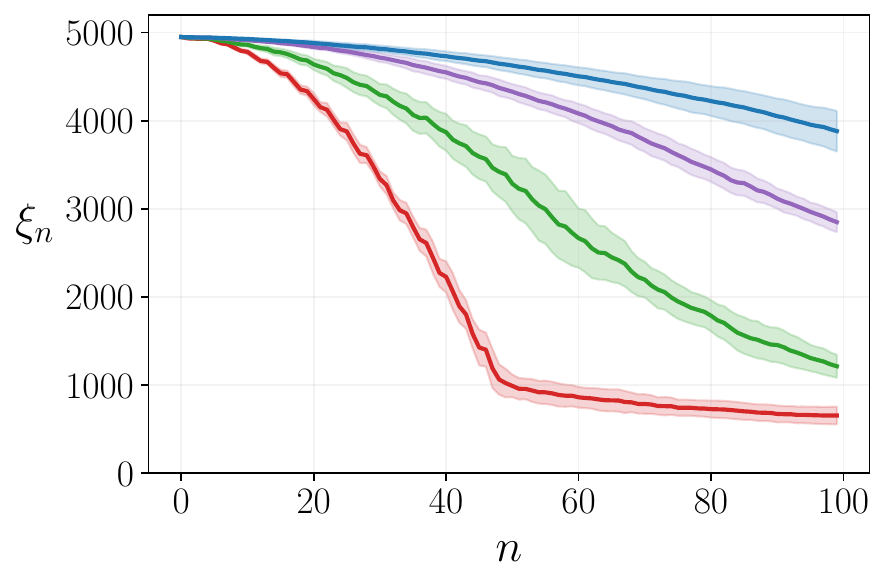}%
}% 
\hfill
\subcaptionbox[Short Subcaption]{%
   Diffusion Runtime (SD) %
\label{subfig:diffusion_runtime}%
}
[%
    0.33\textwidth % width of caption
]%
{%
    \includegraphics[width=0.3\textwidth]{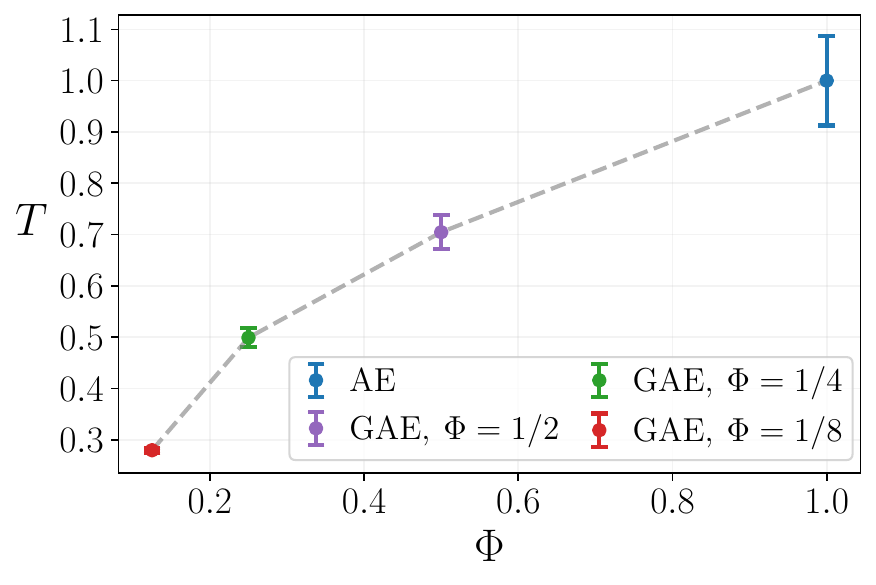}%
}%
\hfill
\subcaptionbox[Short Subcaption]{%
      Strings Statistical Performance%
    \label{subfig:strings_stat}%
}
[%
    0.33\textwidth % width of caption
]%
{%
    \includegraphics[width=0.3\textwidth]{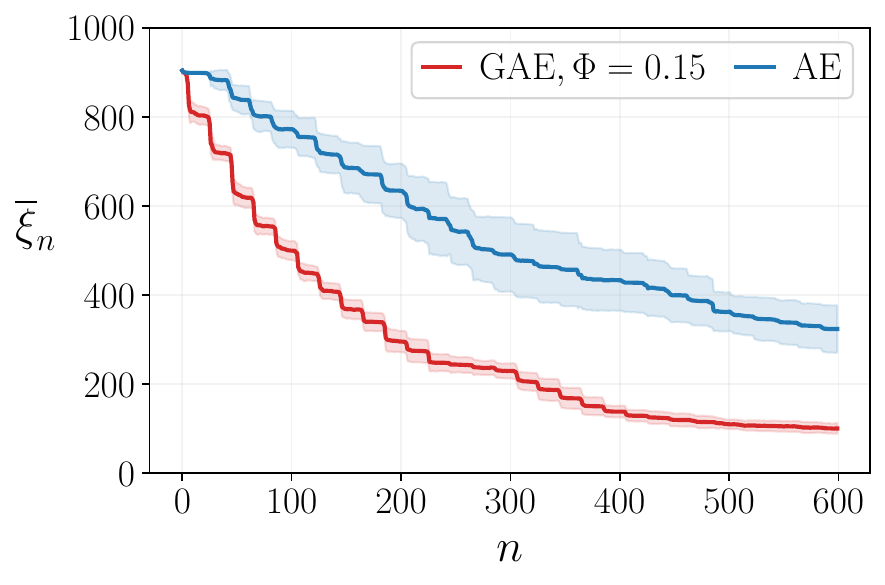}%
}%
\hfill
\subcaptionbox[Short Subcaption]{%
      Chemical compound generation%
    \label{subfig:strings_drawing}%
}
[%
    0.33\textwidth % width of caption
]%
{%
    \includegraphics[width=0.3\textwidth]{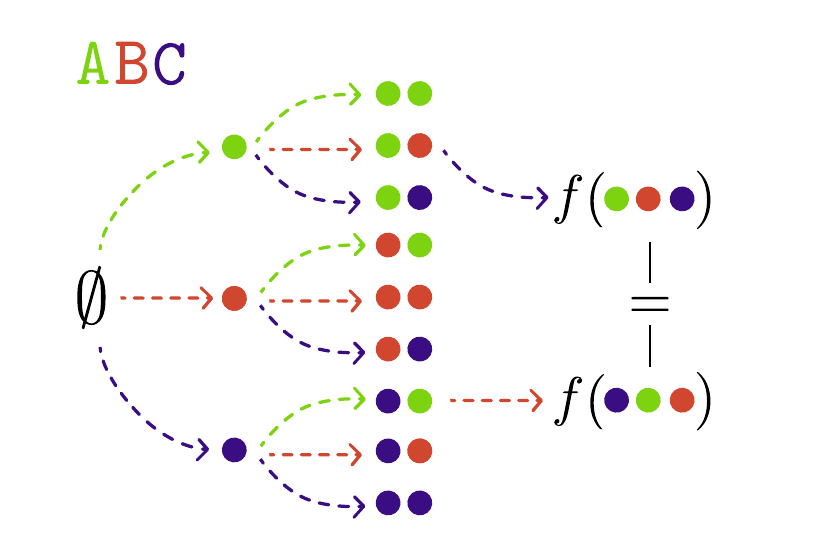}%
}%
\caption[Short Caption]{\looseness -1 Comparison of \AlgNameShort with AE. GAE shows better statistical and computational efficiency. Experiments were carried out over 15 seeds and confidence intervals shown are $\pm$ one standard deviation. \ref{subfig:diff_stat} the statistical advantage of GAE with compression $\Phi$ against AE for deterministic dynamics in the diffusion environment. \ref{subfig:diff_stat_stoch} same setting as \ref{subfig:diff_stat}, but with stochastic dynamics. \ref{subfig:diffusion_interence_bias} the (classic) estimation error taken over the abstract state space. \ref{subfig:diffusion_runtime} computational advantage of \AlgNameShort over AE for different degrees of compression (standardized). \ref{subfig:strings_stat} the statistical advantage of GAE over AE in the strings environment. \ref{subfig:strings_drawing} the strings environment and the invariance of $f$ under permutation.}
\label{fig:label}
\end{figure*}
\begin{restatable}[Homogeneous Equivalence Classes]{assumption}{homogeneousEquivalenceClasses}
\label{assumption:eq_classes_same_cardinality}
The equivalence classes induced by $f$ over $\Sspace$ are homogeneous \ie they have same cardinality, $E_s = E_{s'} \: \forall s, s' \in \Sspace$.
\end{restatable}
Moreover, due to the non-episodic nature of our setting we need to assume the following.
\begin{restatable}[Ergodicity]{assumption}{ergodicity}
The Markov chain induced by any Markovian stationary policy is ergordic.
\end{restatable}
\subsection{Regret Analysis}
Given an empirical state-action distribution $\lambda_n \in \Delta(\Sspace \times \Aspace)$ induced by a sequence of sampling policies interacting with the environment, we define its regret against the optimal sampling strategy as follows
\begin{equation}
    \regret_n(\lambda_n) \coloneqq  \Labs_{\infty, \eta}(\lambda_n) - \Labs_{\infty, \eta}(\lambda^*)
    \label{eq: definition_regret}
\end{equation}
where $\lambda^* \coloneqq \argmin_{\lambda \in \Lambda} \Labs_{\infty, \eta}(\lambda)$ is an optimal state-action distribution. Notice that, while common in AE ~\cite{tarbouriech2019active}, this notion of regret is not standard in RL~\cite{szepesvari2009reinforcement}.
\begin{tcolorbox}[colframe=white!, top=2pt,left=2pt,right=2pt,bottom=2pt]
\begin{restatable}[Regret Guarantee]{theorem}{regretTheorem}
\label{theorem:regret_theorem}
    If algorithm \AlgNameShort is run with a budget of $n$ samples and $\tau_k=3k^2-3k+1$ then w.p. at least $1-\delta$, it holds that:
    \begin{flalign*}
    \regret_n=\widetilde{\mathcal{O}}\left(\left(\frac{\Phi^{\frac{1}{2}} S^{\frac{1}{2}}A F_{\max} \sqrt{\sigma^2_{\max}}}{\eta^{\frac{5}{2}}}\right)\frac{1}{n^{1/3}} \right) 
    \end{flalign*}
\end{restatable}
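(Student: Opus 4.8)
The plan is to recognize \AlgNameShort as an inexact Frank-Wolfe (FW) scheme minimizing the convex objective $\Labs_{\infty, \eta}$, and to control $\regret_n = \Labs_{\infty, \eta}(\lambda_n) - \Labs_{\infty, \eta}(\lambda^*)$ as the sum of a deterministic optimization error, a gradient-estimation error, and a distribution-concentration error. First I would exploit Assumption \ref{assumption:eq_classes_same_cardinality} together with Proposition \ref{proposition:gradient_invariance} to show that $\Labs_{\infty, \eta}(\lambda)$ depends on $\lambda$ only through the abstract occupancy $\lambdaAbs(\sAbs) = \sum_{s \in [\sAbs]} \lambda(s)$, so that the FW linear subproblem is faithfully solved on the abstract MDP (line 6) and the lifting in \eqref{eq:policy_lifting} returns a vertex of $\Lambda$ of equal value. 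This reduces the analysis to FW over $\Delta(\SspaceAbs)$, where the compression $\overline{S} = \Phi S$ enters.

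The core is the standard FW descent inequality. Writing $h_k \coloneqq \Labs_{\infty, \eta}(\lambdaAbs_k) - \Labs_{\infty, \eta}(\lambdaAbs^*)$ and $\gamma_k \coloneqq \tau_k/(t_{k+1}-1)$ for the effective step size induced by the convex update on line 11, convexity gives that the FW gap upper bounds $h_k$, while smoothness gives $h_{k+1} \le (1 - \gamma_k) h_k + \tfrac{\gamma_k^2}{2} C_f + \gamma_k \varepsilon_k$, where $C_f$ is the curvature constant of $\Labs_{\infty, \eta}$ and $\varepsilon_k$ lumps the two stochastic errors incurred at step $k$. I would then bound $C_f$ by Taylor-expanding $\Labs_{\infty, \eta}$: differentiating $(\lambdaAbs(\sAbs) + E_{\sAbs}\eta)^{-1/2}$ twice produces the dominant $\eta^{-5/2}$ scaling, and collecting the $1/S$, $E_{\sAbs}$ and $\sqrt{\sigma^2(\sAbs)}$ prefactors across the $\overline{S}$ abstract states yields $C_f = \widetilde{\mathcal{O}}(\Phi^{1/2} S^{1/2} \sqrt{\sigma^2_{\max}}\, \eta^{-5/2})$, up to the $F_{\max}$ and $A$ factors introduced below.

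Next I would control $\varepsilon_k$. The gradient-estimation part is handled by Lemma \ref{lemma:variance_upper_bound}, which bounds $|\sqrt{\sigma^2(\sAbs)} - \sqrt{\hat{\sigma}^2_t(\sAbs)}|$ by $\alpha(t, \sAbs, \delta) = \widetilde{\mathcal{O}}(F_{\max}/\sqrt{T_t^+(\sAbs)})$; propagating this through the reward/gradient expression \eqref{eq:abstract_reward} gives a per-step gradient error that decays as the visitation counts grow. The distribution part comes from the fact that the deployed empirical occupancy $\Tilde{\overline{\upsilon}}_{k+1}$ is only an estimate of the stationary (LP-optimal) vertex; under the ergodicity assumption I would invoke a concentration bound for the empirical occupancy of a finite rollout, yielding a deviation of order $1/\sqrt{\tau_k}$ that carries the $\sqrt{S}$ and $A$ dependence through the marginalization over $\AspaceAbs$. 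Both error contributions shrink as $\tau_k$ and $t_k$ increase, which is precisely why the sampling schedule is chosen growing.

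Finally I would solve the recursion. The telescoping identity $\sum_{j=1}^{k}\tau_j = k^3$ gives $t_{k+1}-1 = k^3$, hence $\gamma_k = \widetilde{\Theta}(1/k)$ and $n = K^3$, i.e. $K = n^{1/3}$. Unrolling the recursion, the deterministic FW term contributes $\widetilde{\mathcal{O}}(C_f/K)$, while the accumulated stochastic errors $\sum_k \gamma_k \varepsilon_k$ are, by the quadratic growth of $\tau_k$, balanced to the same order; substituting $K = n^{1/3}$ gives $\regret_n = \widetilde{\mathcal{O}}(C_f\, n^{-1/3})$, which is the claimed bound after inserting the curvature constant. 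The main obstacle I anticipate is the joint control of the two stochastic error sources through the recursion: showing that the finite-rollout empirical occupancy $\lambda_n$ tracks the ideal FW iterate tightly enough requires combining the ergodicity-based concentration with the FW step-size weighting, and verifying that $\tau_k = 3k^2 - 3k + 1$ exactly trades off the $O(1/K)$ optimization rate against the increasing accuracy demanded of the gradients — obtaining the $n^{-1/3}$ rate and the precise $(\Phi, S, A, \eta)$ dependence simultaneously is the delicate step.
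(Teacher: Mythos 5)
Your proposal follows essentially the same route as the paper: an inexact Frank--Wolfe descent inequality with the error split into an optimistic-gradient term (controlled via Lemma \ref{lemma:variance_upper_bound}) and an empirical-occupancy term (controlled via an ergodicity-based concentration bound), followed by unrolling the recursion with $t_{k+1}-1=\tau_1 k^3$ to get the $n^{-1/3}$ rate. The only imprecision is attributing the final constant to the curvature of $\Labs_{\infty,\eta}$ — in the paper the smoothness constant scales as $A\Phi^{5/2}\sqrt{\sigma^2_{\max}}\eta^{-5/2}$ and the dominant $\sqrt{S}$ and $F_{\max}$ factors actually enter through the accumulated gradient-error and initial-error terms — but you hedge on this and it does not change the structure of the argument.
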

\end{tcolorbox}
In the following, we present a brief sketch of the proof, while complete derivations are deferred to Apx. \ref{sec:analysis_proof}.

\textbf{Step 1.}~~ 
\looseness -1 We derive the result \wrt the abstract variables via a Frank-Wolfe analysis, taking into account (i) the effect of the optimistic gradient and (ii) the error due to the gap between the empirical and stationary distribution induced by the policy at each iteration~\cite{tarbouriech2019active}.

\textbf{Step 2.}~~
Since the density estimation step of \AlgNameShort is carried out \wrt a distribution defined over $\SspaceAbs \times \AspaceAbs$, we notice that it can be analysed with respect to the abstract variables.

\textbf{Step 3.}~~
Finally, in order to state the result \wrt the original MDP variables, we leverage the \emph{geometric compression} term $\Phi$ (Def. \ref{def:geometric_compression_term}).

\looseness -1 In the following, we report the sample complexity bound capturing the effect of abstraction on learning with high probability a nearly-optimal sampling strategy \wrt the geometric estimation objective~\eqref{eq:L_infty_obj}.
\begin{tcolorbox}[colframe=white!, top=2pt,left=2pt,right=2pt,bottom=2pt]
\begin{restatable}[Sample Complexity of Geometric Estimation Objective]
{theorem}{EstSampleComplexity}
\label{theorem:sample_complexity_result}
If algorithm \AlgNameShort is run with $\tau_k=3k^2-3k+1$, for:
\begin{equation*}
    n=\widetilde{\mathcal{O}}\Bigg(\frac{\Phi^{\frac{3}{2}} S^{\frac{3}{2}}A^3 F_{\max}^3 (\sigma^2_{\max})^{\frac{3}{2}}}{\eta^{\frac{15}{2}}\epsilon^3}\Bigg)
\end{equation*}
samples, then we have that with probability at least $1-\delta$:
\begin{equation*}
     \PR\left(\big|\Labs_{\infty, \eta}(\lambda_n) - \Labs_{\infty, \eta}(\lambda^*)\big|\leq \epsilon \right) \geq 1-\delta 
\end{equation*}
\end{restatable}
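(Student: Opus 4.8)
The plan is to obtain this statement as a direct corollary of the regret guarantee in Theorem~\ref{theorem:regret_theorem}, which already absorbs all the technical difficulty (the Frank--Wolfe analysis with optimistic gradients, the abstract density-estimation error, and the compression step via $\Phi$). The key realization is that the quantity controlled here is \emph{exactly} the regret, so no fresh analysis is needed beyond inverting the convergence rate in the sample budget $n$.

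First I would observe that $\big|\Labs_{\infty,\eta}(\lambda_n) - \Labs_{\infty,\eta}(\lambda^*)\big|$ coincides with $\regret_n(\lambda_n)$ as defined in~\eqref{eq: definition_regret}. Since $\lambda^* = \argmin_{\lambda\in\Lambda}\Labs_{\infty,\eta}(\lambda)$ and $\lambda_n\in\Lambda$, we have $\Labs_{\infty,\eta}(\lambda_n)\ge\Labs_{\infty,\eta}(\lambda^*)$, so the difference is non-negative and the absolute value is redundant. It therefore suffices to prove $\PR(\regret_n\le\epsilon)\ge 1-\delta$.

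Next I would invert the high-probability regret rate. On the event of probability at least $1-\delta$ furnished by Theorem~\ref{theorem:regret_theorem},
$$\regret_n \;\le\; \widetilde{\mathcal{O}}\!\left(\frac{\Phi^{1/2} S^{1/2} A\, F_{\max}\sqrt{\sigma^2_{\max}}}{\eta^{5/2}}\cdot\frac{1}{n^{1/3}}\right).$$
Requiring the right-hand side to be at most $\epsilon$ and solving for $n$ gives $n^{1/3}\gtrsim \Phi^{1/2}S^{1/2}A F_{\max}\sqrt{\sigma^2_{\max}}/(\eta^{5/2}\epsilon)$, hence
$$n \;=\; \widetilde{\mathcal{O}}\!\left(\frac{\Phi^{3/2} S^{3/2} A^3 F_{\max}^3 (\sigma^2_{\max})^{3/2}}{\eta^{15/2}\epsilon^3}\right),$$
which is precisely the claimed budget: every factor is cubed, mirroring the $n^{-1/3}$ dependence of the regret (itself a consequence of the schedule $\tau_k=3k^2-3k+1$, for which $\sum_{k=1}^K\tau_k = K^3$, so that $n=K^3$ total samples correspond to a $1/K=n^{-1/3}$ Frank--Wolfe convergence). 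Since this bound holds on the same $1-\delta$ event, the implication $\regret_n\le\epsilon$ holds there as well, yielding the stated probability bound.

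I do not expect a genuine obstacle here, as all the work lives in Theorem~\ref{theorem:regret_theorem}. The only care required is bookkeeping the poly-logarithmic factors swept under $\widetilde{\mathcal{O}}$ (the $C(n,\overline{S},\delta)$ term and the $\log(2\overline{S}t^2/\delta)$ of Lemma~\ref{lemma:variance_upper_bound}): one must check that these depend on $n$ only logarithmically, so that the inversion remains valid up to log factors absorbed into $\widetilde{\mathcal{O}}$. I would also flag explicitly that the guarantee concerns the objective gap rather than the estimation error $\bar\xi_n$ itself; converting to a statement on $\bar\xi_n$ would additionally invoke Proposition~\ref{prop: tractable_convex_upper_bound}, which carries the smoothing constraint $E_{\sAbs}\eta\le 1/n$ that must then be checked for compatibility with the chosen $n$.
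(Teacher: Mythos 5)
Your proposal matches the paper's proof exactly: the paper simply states that the result follows by inverting the regret guarantee of Theorem~\ref{theorem:regret_theorem}, which is precisely the inversion $n^{1/3}\gtrsim \Phi^{1/2}S^{1/2}A F_{\max}\sqrt{\sigma^2_{\max}}/(\eta^{5/2}\epsilon)$ you carry out. Your additional remarks on the non-negativity of the gap and the logarithmic bookkeeping are correct and, if anything, more explicit than the paper's one-line argument.
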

\end{tcolorbox}
 \looseness -1 Crucially, setting $\Phi=1$ recovers the case where abstraction is not leveraged by the algorithm.
\subsection{Geometric Compression in MDP with Symmetries}
If the MDP $\cmp_f$ has symmetries (see Eq. \ref{eq:mdp_sym_r} and \ref{eq:mdp_sym_P}), it is possible to make explicit the dependency of $\Phi$ on the cardinality of the state symmetries group $\group = (\{L_g\}_{g \in G}, \cdot)$.
\begin{tcolorbox}[colframe=white!, top=2pt,left=2pt,right=2pt,bottom=2pt]
\begin{restatable}[Compression via Group Cardinality]{proposition}{compressionViaGroupCardinality}
\label{prop:compression_via_group_cardinality}
Given a set of group-structured state symmetries $\group = (\{L_g\}_{g \in G}, \cdot)$ and $\mathrm{Stab}(s) = \mathrm{Stab}(s')$ $\forall s,s' \in \Sspace$ then:
\begin{equation*}
    \Phi = \frac{|\mathrm{Stab}(s)|}{|G|}
\end{equation*}
where $\mathrm{Stab}(s) \coloneqq \{g \in G : L_g[s] = s\}$.
\end{restatable}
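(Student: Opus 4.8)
The plan is to recognize this statement as essentially a restatement of the orbit–stabilizer theorem, with the only genuine modeling content being the identification of the $f$-equivalence classes with the orbits of the group action $\group$ on $\Sspace$. Throughout I would write $\mathrm{Orb}(s) \coloneqq \{ L_g[s] : g \in G \}$ for the orbit of $s$.

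First I would establish that, for every $s \in \Sspace$, the equivalence class $[s]$ induced by $\psi$ coincides with the orbit $\mathrm{Orb}(s)$. This follows from the way $h$ is constructed out of an \emph{MDP with symmetries}: the state map $\psi$ collapses precisely those states related by some $L_g$ to a common abstract state, so that $\psi(s') = \psi(s)$ holds if and only if $s' = L_g[s]$ for some $g \in G$. Consequently $E_s = |[s]| = |\mathrm{Orb}(s)|$. This is the step I expect to be the main obstacle, since it does not hold for an arbitrary MDP homomorphism and must be justified from the specific symmetry-induced construction rather than merely from the homomorphism conditions \eqref{eq:mdp_hom_reward}–\eqref{eq:mdp_hom_dynamics}.

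Next I would invoke the orbit–stabilizer theorem, which gives $|\mathrm{Orb}(s)| \cdot |\mathrm{Stab}(s)| = |G|$ and hence $E_s = |G| / |\mathrm{Stab}(s)|$. The hypothesis $\mathrm{Stab}(s) = \mathrm{Stab}(s')$ for all $s, s' \in \Sspace$ ensures $|\mathrm{Stab}(s)|$ is constant over $\Sspace$, which forces every orbit to share the common cardinality $E \coloneqq |G| / |\mathrm{Stab}(s)|$. This is exactly the Homogeneous Equivalence Classes condition (Assumption \ref{assumption:eq_classes_same_cardinality}), so the equal-stabilizer assumption both guarantees homogeneity and makes the right-hand side of the claimed identity well-defined (independent of the chosen $s$).

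Finally, since $\Sspace$ partitions into $\overline{S}$ disjoint equivalence classes, one per abstract state, each of cardinality $E$, I obtain $S = \overline{S}\, E$. Substituting the orbit–stabilizer expression for $E$ then yields $\Phi = \overline{S}/S = 1/E = |\mathrm{Stab}(s)|/|G|$, which is the claim. Beyond the orbit-class identification, the remaining steps are a routine application of orbit–stabilizer together with the counting identity afforded by homogeneity.
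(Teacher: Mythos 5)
Your proposal is correct and follows essentially the same route as the paper: identify $[s]$ with the orbit of $s$, apply the orbit--stabilizer theorem, and use the counting identity $S = \overline{S}\,E_s$ to convert $E_s$ into $1/\Phi$. The only (minor, and arguably cleaner) difference is that you derive the homogeneity of the equivalence classes from the equal-stabilizer hypothesis via orbit--stabilizer, whereas the paper simply invokes its standing Homogeneous Equivalence Classes assumption for the step $1/\Phi = E_s$.
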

\end{tcolorbox}
\looseness -1 Proposition \ref{prop:compression_via_group_cardinality}, which we proved via the Orbit-Stabilizer Theorem~\citep[Theorem C-1.16]{rotman2010advanced}, sheds light on the intuition that a higher number of symmetries leads to a higher degree of compression.

\section{Experiments}
\label{sec:experiments}
\looseness -1 In this section, we perform a thorough experimental evaluation of \AlgNameShort analysing its statistical and computational efficiency on two tasks where the unknown quantity $f$ represents: (1) the amount of pollutant emerging from a point source (see Fig. \ref{fig:diffusion_drawing}), and (2) the toxicity of chemical compounds generated from a set of base elements (see Fig. \ref{subfig:strings_drawing}). In all experiments, unless otherwise specified, we compare the data gathering performances of \AlgNameShort with classic AE, an implementation based on Convex RL and analogous to \AlgNameShort, but not exploiting symmetries~\cite{tarbouriech2019active}. More explicitly, AE is the same as \AlgNameShort (see Alg.~\ref{alg:gae_algorithm}) in the case when the homomorphism $h$ is an identity map and hence $\Phi =1$ and $\overline{\mathcal{M}} = \mathcal{M}$.

\mypar{(1) Pollutant Diffusion Process}~~ \looseness -1 We consider the problem of actively estimating the amount of pollution released in the environment from a point source and following a diffusion process with radial symmetries, as illustrated in Figure \ref{fig:diffusion_drawing} and introduced in Section \ref{sec:preliminaries}. The agent can measure the pollution at 30 different radii and at 8 different angles, resulting in $S = 240$ states, and can select actions $\mathcal{A} = \{\mathrm{in}, \mathrm{out}, \mathrm{clockwise}, \mathrm{anticlockwise}, \mathrm{stay}\}$.
In Figure \ref{subfig:diff_stat}, we show the sample efficiency of \AlgNameShort compared with AE for several values of $\Phi$ in the case of deterministic dynamics. We observe a significant effect of different degrees of compression to the data efficiency of the algorithm. Similarly, Figure \ref{subfig:diff_stat_stoch} shows the same comparison for stochastic dynamics. In Figure \ref{subfig:diffusion_interence_bias}, we show that omitting the inductive bias in the inference step, specifically the absence of weighted averaging across equivalence classes, worsens the performance of AE, thus showing the role of exploiting geometric structure also in inference. Ultimately, in Figure \ref{subfig:diffusion_runtime}, we compare the normalized runtime of \AlgNameShort and AE for several degrees of compression, showing the effect of leveraging geometric structure on computational efficiency and hence practical scalability.

\mypar{(2) Toxicity of Chemical Compounds}~~In this experiment, we consider the problem of actively estimating the toxicity of chemical compounds that can be generated using some base chemical elements. Similar to prior work~\cite{Thiede_2022, dong2022deep}, we associate with states chemical compounds represented as strings (see Fig. \ref{subfig:strings_drawing}). Each character of the string thereby stands for a base chemical element. In our simplified setting, we consider three base elements \texttt{A}, \texttt{B}, \texttt{C} and every compound may consist of at most 5 base elements, resulting in a total of $S=363$ states. The actions correspond to the three base elements and a $\mathrm{stay}$ action. If the agent picks an action that corresponds to a base element, this is appended to the current string, resulting in a new compound to which the agent transitions and gets a noisy observation of its toxicity. 
We consider toxicity to be invariant \wrt string permutations, resulting in $\overline{S} = 55$ states and $\Phi \approx 0.15$. Figure \ref{subfig:strings_stat} shows the statistical performances of \AlgNameShort compared with AE. We observe that even a relatively small compression leads to a significant statistical advantage.

\looseness -1 An extensive discussion of the environments, homomorphisms, and implementation details is deferred to Apx. \ref{sec:experimental_details}.

\section{Related Works}
\label{sec:related_works}
\looseness -1 In the following, we present relevant works in MDP Homomorphisms, Active Exploration, and Convex RL.

\mypar{MDP Homomorphisms}~~~
\looseness -1 \citet{ravindran2001symmetries} were among the first to identify the benefits of solving MDPs via MDP homomorphisms. Recently, these have been extended to Deep RL~\cite{derPol2020homomorphic},  approximate invariances~\cite{ravindran2004approximate, jiang2014improving, ravindran2002model}, and continuous domains~\cite{rezaei2022continuous, biza2018online, zhao2022continuous}. While in this work we have built a fundamental connection between Active Exploration and abstraction via MDP homomorphisms, extending the contributions presented to the mentioned settings, namely Deep RL, approximate abstraction, and continuous domains, is an interesting and relevant direction of future work.

\mypar{Active Exploration}~~~
\looseness -1 The AE problem has been introduced in~\citep{tarbouriech2019active} with a non-episodic setting assuming ergodicity and reversibility of the induced Markov chain. Afterwards, the framework has been extended to perform transition dynamics estimation in high probability~\cite{tarbouriech2020active}. Compared with these works, by smoothening the objective (Sec. \ref{sec:CRL_objective}), we remove the need to solve an LP program at every iteration paving the way for more efficient dynamic programming methods, \eg value iteration~\citep[Chapter 6]{puterman2014markov}. Recently, \citet{mutny2023active} introduced an episodic version of the problem, where the agent can reset its state, but ergodicity is not required, while the unknown quantity $f$ is assumed to be an element of a reproducing kernel Hilbert space with known kernel~\cite{mutny2023active}. While this setting can capture the correlation structure of $f$, it does not leverage known geometric structure of the dynamics and therefore cannot \emph{compress} the original MDP into an abstract one to render the algorithm more scalable from a computational perspective.

\mypar{Convex RL}~~~
The algorithmic scheme presented in this work is an instance of a general framework that has received significant attention recently, generally under the name of \emph{Convex RL}~\cite{hazan2019maxent,zhang2020variational,zahavy2021reward,geist2021concave,mutti2022challenging,mutti2023convex, mutti2022importance}. In this framework, a learning agent interacts with a CMP to optimize an objective formulated through a convex function of the state-action distribution induced by the agent policy. Different choices of this convex function allow to cover several domains of practical interest beyond active exploration, such as pure exploration~\citep[e.g.,][]{hazan2019maxent}, imitation learning~\citep[e.g.,][]{abbeel2004apprenticeship}, and risk-averse RL~\cite{garcia2015comprehensive} among others.

\section{Conclusions}
\label{sec:conclusions}
\looseness -1 In this paper, we presented how abstraction can be leveraged to solve the Active Exploration problem with better statistical complexity and computational efficiency. Before presenting some concluding remarks, we briefly mention a few important discussion points.

\mypar{Beyond Known Geometric Structure}~~
\looseness -1 In a wide set of applications \eg molecular design or environmental sensing, geometric priors on $f$ and $P$ are known and can be easily encoded in an abstract MDP as considered in this work. Nonetheless, for a arguably wider set, geometric structure is not known or it is not human-interpretable. In these cases, one can leverage algorithms that automatically discover symmetries in the environment~\cite{angelotti2021expert, narayanamurthy2008hardness} or directly learn a MDP homomorphism~\cite{mavor2022simple, biza2018online, wolfe2006decision, pmlr-v162-mondal22a}. Interestingly, in both these cases, the \AlgNameShort algorithm can be run with the machine-learned homomorphism.

\mypar{Abstraction in Convex RL}~~ 
The presented algorithmic scheme (Alg. \ref{alg:gae_algorithm}) and theoretical analysis (Sec. \ref{sec:analysis}) are not tied to the AE problem treated in this paper and can be straightforwardly extended to leverage abstraction in a variety of Convex RL application areas, including those mentioned within Section \ref{sec:related_works}.

\mypar{Benefit of Abstraction on Statistical Efficiency}~~
Abstraction, via MDP homomorphisms or close variants, has been leveraged in a large body of works~\cite{rezaei2022continuous, derPol2020homomorphic, van2020plannable, van2021multi, soni2006using, ravindran2003smdp, zhu2022invariant, ravindran2002model, mahajan2017symmetry, ravindran2003smdp} showcasing experimental advantages on the sample complexity in the context of RL. Nonetheless, to the best of our knowledge, this work is the first that formally captures the effect of abstraction via MDP homomorphisms on sample complexity (Sec. \ref{sec:analysis}). Moreover, we believe the main ideas within our analysis can be leveraged to treat a large extent of RL settings.

To summarize, in this work we have presented a principled Active Exploration objective that makes it possible to leverage geometric priors on an unknown quantity $f$ and the system dynamics to solve the estimation problem via an abstraction process, thus increasing statistical and computational efficiency. We introduced an algorithm, \AlgNameLong, which we believe could render Active Exploration more practical for a wide variety of real-world settings. Then, we have presented, to the best of our knowledge, the first analysis capturing the effect of abstraction on the sample complexity of the Active Exploration problem, and more in general in RL. Ultimately, we have performed a thorough experimental evaluation of the proposed method on tasks resembling real-world scientific discovery problems while showing promising performances.

\section*{Acknowledgement}
This publication was made possible by the ETH AI Center doctoral fellowship to Riccardo De Santi. 

We would like to thank Emanuele Rossi, Ossama El Oukili, Massimiliano Viola, Marcello Restelli, and Michael Bronstein for collaborating on a previous version of this project. Moreover, we wish to thank Mojm\'ir Mutn\'y and Manish Prajapat for the insightful discussions. 

The project has received funding from the European Research Council (ERC) under the European
Union’s Horizon 2020 research, innovation program grant agreement No 815943 and the Swiss
National Science Foundation under NCCR Catalysis grant number 180544.

\section*{Impact Statement}
% verbatim from ICLM call for papers
This paper presents work whose goal is to advance the field of Reinforcement Learning. There are many potential societal consequences of our work, none which we feel must be specifically highlighted here.

\bibliography{biblio}

\begin{thebibliography}{49}
\providecommand{\natexlab}[1]{#1}
\providecommand{\url}[1]{\texttt{#1}}
\expandafter\ifx\csname urlstyle\endcsname\relax
  \providecommand{\doi}[1]{doi: #1}\else
  \providecommand{\doi}{doi: \begingroup \urlstyle{rm}\Url}\fi

\bibitem[Abbeel \& Ng(2004)Abbeel and Ng]{abbeel2004apprenticeship}
Abbeel, P. and Ng, A.~Y.
\newblock Apprenticeship learning via inverse reinforcement learning.
\newblock In \emph{International Conference on Machine Learning}, 2004.

\bibitem[Angelotti et~al.(2021)Angelotti, Drougard, and Chanel]{angelotti2021expert}
Angelotti, G., Drougard, N., and Chanel, C.~P.
\newblock Expert-guided symmetry detection in {M}arkov decision processes.
\newblock \emph{arXiv preprint arXiv:2111.10297}, 2021.

\bibitem[Biza \& Platt(2018)Biza and Platt]{biza2018online}
Biza, O. and Platt, R.
\newblock Online abstraction with mdp homomorphisms for deep learning.
\newblock In \emph{International Conference on Autonomous Agents and Multiagent Systems}, 2018.

\bibitem[Chaloner \& Verdinelli(1995)Chaloner and Verdinelli]{chaloner1995bayesian}
Chaloner, K. and Verdinelli, I.
\newblock Bayesian experimental design: A review.
\newblock \emph{Statistical Science}, pp.\  273--304, 1995.

\bibitem[Cheng et~al.(2021)Cheng, Gong, Liu, Song, and Zou]{cheng2021molecular}
Cheng, Y., Gong, Y., Liu, Y., Song, B., and Zou, Q.
\newblock Molecular design in drug discovery: a comprehensive review of deep generative models.
\newblock \emph{Briefings in Bioinformatics}, 22\penalty0 (6), 2021.

\bibitem[Dong et~al.(2022)Dong, Zhao, Liu, Su, and Zeng]{dong2022deep}
Dong, J., Zhao, M., Liu, Y., Su, Y., and Zeng, X.
\newblock Deep learning in retrosynthesis planning: datasets, models and tools.
\newblock \emph{Briefings in Bioinformatics}, 23\penalty0 (1), 2022.

\bibitem[Elton et~al.(2019)Elton, Boukouvalas, Fuge, and Chung]{elton2019deep}
Elton, D.~C., Boukouvalas, Z., Fuge, M.~D., and Chung, P.~W.
\newblock Deep learning for molecular design—a review of the state of the art.
\newblock \emph{Molecular Systems Design \& Engineering}, 4\penalty0 (4):\penalty0 828--849, 2019.

\bibitem[Garc{\i}a \& Fern{\'a}ndez(2015)Garc{\i}a and Fern{\'a}ndez]{garcia2015comprehensive}
Garc{\i}a, J. and Fern{\'a}ndez, F.
\newblock A comprehensive survey on safe reinforcement learning.
\newblock \emph{Journal of Machine Learning Research}, 16\penalty0 (1):\penalty0 1437--1480, 2015.

\bibitem[Geist et~al.(2022)Geist, P{\'e}rolat, Lauri{\`e}re, Elie, Perrin, Bachem, Munos, and Pietquin]{geist2021concave}
Geist, M., P{\'e}rolat, J., Lauri{\`e}re, M., Elie, R., Perrin, S., Bachem, O., Munos, R., and Pietquin, O.
\newblock Concave utility reinforcement learning: The mean-field game viewpoint.
\newblock In \emph{International Conference on Autonomous Agents and Multiagent Systems}, 2022.

\bibitem[Hazan et~al.(2019)Hazan, Kakade, Singh, and Van~Soest]{hazan2019maxent}
Hazan, E., Kakade, S., Singh, K., and Van~Soest, A.
\newblock Provably efficient maximum entropy exploration.
\newblock In \emph{International Conference on Machine Learning}, 2019.

\bibitem[Jaggi(2013)]{pmlr-v28-jaggi13}
Jaggi, M.
\newblock Revisiting {Frank-Wolfe}: Projection-free sparse convex optimization.
\newblock In \emph{International Conference on Machine Learning}, 2013.

\bibitem[Jiang et~al.(2014)Jiang, Singh, and Lewis]{jiang2014improving}
Jiang, N., Singh, S., and Lewis, R.
\newblock Improving {UCT} planning via approximate homomorphisms.
\newblock In \emph{International Conference on Autonomous Agents and Multiagent Systems}, 2014.

\bibitem[Krause(2008)]{krause2008optimizing}
Krause, A.
\newblock \emph{Optimizing sensing: Theory and applications}.
\newblock PhD thesis, Carnegie Mellon University, 2008.

\bibitem[Lafferty et~al.(2008)Lafferty, Liu, and Wasserman]{lafferty2008concentration}
Lafferty, J., Liu, H., and Wasserman, L.
\newblock Concentration of measure.
\newblock \emph{Online at \url{https://www.stat.cmu.edu/~larry/=sml/Concentration.pdf}}, 2008.

\bibitem[Lindner et~al.(2022)Lindner, Krause, and Ramponi]{lindner2022active}
Lindner, D., Krause, A., and Ramponi, G.
\newblock Active exploration for inverse reinforcement learning.
\newblock In \emph{Advances in Neural Information Processing Systems}, 2022.

\bibitem[Mahajan \& Tulabandhula(2017)Mahajan and Tulabandhula]{mahajan2017symmetry}
Mahajan, A. and Tulabandhula, T.
\newblock Symmetry learning for function approximation in reinforcement learning.
\newblock \emph{arXiv preprint arXiv:1706.02999}, 2017.

\bibitem[Mavor-Parker et~al.(2022)Mavor-Parker, Sargent, Banino, Griffin, and Barry]{mavor2022simple}
Mavor-Parker, A.~N., Sargent, M.~J., Banino, A., Griffin, L.~D., and Barry, C.
\newblock A simple approach for state-action abstraction using a learned mdp homomorphism.
\newblock \emph{arXiv preprint arXiv:2209.06356}, 2022.

\bibitem[Mondal et~al.(2022)Mondal, Jain, Siddiqi, and Ravanbakhsh]{pmlr-v162-mondal22a}
Mondal, A.~K., Jain, V., Siddiqi, K., and Ravanbakhsh, S.
\newblock {E}q{R}: Equivariant representations for data-efficient reinforcement learning.
\newblock In \emph{International Conference on Machine Learning}, 2022.

\bibitem[Mutny et~al.(2023)Mutny, Janik, and Krause]{mutny2023active}
Mutny, M., Janik, T., and Krause, A.
\newblock Active exploration via experiment design in {M}arkov chains.
\newblock In \emph{International Conference on Artificial Intelligence and Statistics}, 2023.

\bibitem[Mutti et~al.(2022{\natexlab{a}})Mutti, De~Santi, De~Bartolomeis, and Restelli]{mutti2022challenging}
Mutti, M., De~Santi, R., De~Bartolomeis, P., and Restelli, M.
\newblock Challenging common assumptions in convex reinforcement learning.
\newblock In \emph{Advances in Neural Information Processing Systems}, 2022{\natexlab{a}}.

\bibitem[Mutti et~al.(2022{\natexlab{b}})Mutti, De~Santi, and Restelli]{mutti2022importance}
Mutti, M., De~Santi, R., and Restelli, M.
\newblock The importance of non-{M}arkovianity in maximum state entropy exploration.
\newblock In \emph{International Conference on Machine Learning}, 2022{\natexlab{b}}.

\bibitem[Mutti et~al.(2023)Mutti, De~Santi, De~Bartolomeis, and Restelli]{mutti2023convex}
Mutti, M., De~Santi, R., De~Bartolomeis, P., and Restelli, M.
\newblock Convex reinforcement learning in finite trials.
\newblock \emph{Journal of Machine Learning Research}, 24\penalty0 (250):\penalty0 1--42, 2023.

\bibitem[Narayanamurthy \& Ravindran(2008)Narayanamurthy and Ravindran]{narayanamurthy2008hardness}
Narayanamurthy, S.~M. and Ravindran, B.
\newblock On the hardness of finding symmetries in {M}arkov decision processes.
\newblock In \emph{International Conference on Machine Learning}, 2008.

\bibitem[Nesterov(2014)]{10.5555/2670022}
Nesterov, Y.
\newblock \emph{Introductory Lectures on Convex Optimization: A Basic Course}.
\newblock Springer Publishing Company, 2014.

\bibitem[Panaganti \& Kalathil(2022)Panaganti and Kalathil]{panaganti2022sample}
Panaganti, K. and Kalathil, D.
\newblock Sample complexity of robust reinforcement learning with a generative model.
\newblock In \emph{International Conference on Artificial Intelligence and Statistics}, 2022.

\bibitem[Pukelsheim(2006)]{pukelsheim2006optimal}
Pukelsheim, F.
\newblock \emph{Optimal design of experiments}.
\newblock SIAM, 2006.

\bibitem[Puterman(2014)]{puterman2014markov}
Puterman, M.~L.
\newblock \emph{Markov decision processes: discrete stochastic dynamic programming}.
\newblock John Wiley \& Sons, 2014.

\bibitem[Ravindran(2003)]{ravindran2003smdp}
Ravindran, B.
\newblock Smdp homomorphisms: An algebraic approach to abstraction in semi {M}arkov decision processes.
\newblock In \emph{International Joint Conference on Artificial Intelligence}, 2003.

\bibitem[Ravindran \& Barto(2001)Ravindran and Barto]{ravindran2001symmetries}
Ravindran, B. and Barto, A.~G.
\newblock Symmetries and model minimization in {M}arkov decision processes.
\newblock \emph{Technical report}, 2001.

\bibitem[Ravindran \& Barto(2002)Ravindran and Barto]{ravindran2002model}
Ravindran, B. and Barto, A.~G.
\newblock Model minimization in hierarchical reinforcement learning.
\newblock In \emph{Abstraction, Reformulation, and Approximation: 5th International Symposium}, 2002.

\bibitem[Ravindran \& Barto(2004)Ravindran and Barto]{ravindran2004approximate}
Ravindran, B. and Barto, A.~G.
\newblock Approximate homomorphisms: A framework for non-exact minimization in {M}arkov decision processes.
\newblock \emph{Technical report}, 2004.

\bibitem[Rezaei-Shoshtari et~al.(2022)Rezaei-Shoshtari, Zhao, Panangaden, Meger, and Precup]{rezaei2022continuous}
Rezaei-Shoshtari, S., Zhao, R., Panangaden, P., Meger, D., and Precup, D.
\newblock Continuous mdp homomorphisms and homomorphic policy gradient.
\newblock In \emph{Advances in Neural Information Processing Systems}, 2022.

\bibitem[Rotman(2010)]{rotman2010advanced}
Rotman, J.~J.
\newblock \emph{Advanced Modern Algebra}, volume 114.
\newblock American Mathematical Society, 2010.

\bibitem[Schreck et~al.(2019)Schreck, Coley, and Bishop]{schreck2019learning}
Schreck, J.~S., Coley, C.~W., and Bishop, K.~J.
\newblock Learning retrosynthetic planning through simulated experience.
\newblock \emph{ACS Central Science}, 5\penalty0 (6):\penalty0 970--981, 2019.

\bibitem[Soni \& Singh(2006)Soni and Singh]{soni2006using}
Soni, V. and Singh, S.
\newblock Using homomorphisms to transfer options across continuous reinforcement learning domains.
\newblock In \emph{AAAI Conference on Artificial Intelligence}, volume~6, pp.\  494--499, 2006.

\bibitem[Szepesv{\'a}ri(2009)]{szepesvari2009reinforcement}
Szepesv{\'a}ri, C.
\newblock Reinforcement learning algorithms for mdps.
\newblock 2009.

\bibitem[Tarbouriech \& Lazaric(2019)Tarbouriech and Lazaric]{tarbouriech2019active}
Tarbouriech, J. and Lazaric, A.
\newblock Active exploration in {M}arkov decision processes.
\newblock In \emph{International Conference on Artificial Intelligence and Statistics}, 2019.

\bibitem[Tarbouriech et~al.(2020)Tarbouriech, Shekhar, Pirotta, Ghavamzadeh, and Lazaric]{tarbouriech2020active}
Tarbouriech, J., Shekhar, S., Pirotta, M., Ghavamzadeh, M., and Lazaric, A.
\newblock Active model estimation in markov decision processes.
\newblock In \emph{Conference on Uncertainty in Artificial Intelligence}, 2020.

\bibitem[Thiede et~al.(2022)Thiede, Krenn, Nigam, and Aspuru-Guzik]{Thiede_2022}
Thiede, L.~A., Krenn, M., Nigam, A., and Aspuru-Guzik, A.
\newblock Curiosity in exploring chemical spaces: intrinsic rewards for molecular reinforcement learning.
\newblock \emph{Machine Learning: Science and Technology}, 3\penalty0 (3), 2022.

\bibitem[van~der Pol et~al.(2020{\natexlab{a}})van~der Pol, Kipf, Oliehoek, and Welling]{van2020plannable}
van~der Pol, E., Kipf, T., Oliehoek, F.~A., and Welling, M.
\newblock Plannable approximations to mdp homomorphisms: Equivariance under actions.
\newblock In \emph{International Conference on Autonomous Agents and Multiagent Systems}, 2020{\natexlab{a}}.

\bibitem[van~der Pol et~al.(2020{\natexlab{b}})van~der Pol, Worrall, van Hoof, Oliehoek, and Welling]{derPol2020homomorphic}
van~der Pol, E., Worrall, D., van Hoof, H., Oliehoek, F., and Welling, M.
\newblock Mdp homomorphic networks: Group symmetries in reinforcement learning.
\newblock In \emph{Advances in Neural Information Processing Systems}, 2020{\natexlab{b}}.

\bibitem[van~der Pol et~al.(2022)van~der Pol, van Hoof, Oliehoek, and Welling]{van2021multi}
van~der Pol, E., van Hoof, H., Oliehoek, F.~A., and Welling, M.
\newblock Multi-agent mdp homomorphic networks.
\newblock In \emph{International Conference on Learning Representations}, 2022.

\bibitem[Wang et~al.(2023)Wang, Fu, Du, Gao, Huang, Liu, Chandak, Liu, Van~Katwyk, Deac, et~al.]{wang2023scientific}
Wang, H., Fu, T., Du, Y., Gao, W., Huang, K., Liu, Z., Chandak, P., Liu, S., Van~Katwyk, P., Deac, A., et~al.
\newblock Scientific discovery in the age of artificial intelligence.
\newblock \emph{Nature}, 620\penalty0 (7972):\penalty0 47--60, 2023.

\bibitem[Welch(1982)]{welch1982algorithmic}
Welch, W.~J.
\newblock Algorithmic complexity: three np-hard problems in computational statistics.
\newblock \emph{Journal of Statistical Computation and Simulation}, 15\penalty0 (1):\penalty0 17--25, 1982.

\bibitem[Wolfe \& Barto(2006)Wolfe and Barto]{wolfe2006decision}
Wolfe, A.~P. and Barto, A.~G.
\newblock Decision tree methods for finding reusable mdp homomorphisms.
\newblock In \emph{AAAI Conference on Artificial Intelligence}, 2006.

\bibitem[Zahavy et~al.(2021)Zahavy, O'Donoghue, Desjardins, and Singh]{zahavy2021reward}
Zahavy, T., O'Donoghue, B., Desjardins, G., and Singh, S.
\newblock Reward is enough for convex mdps.
\newblock In \emph{Advances in Neural Information Processing Systems}, 2021.

\bibitem[Zhang et~al.(2020)Zhang, Koppel, Bedi, Szepesvari, and Wang]{zhang2020variational}
Zhang, J., Koppel, A., Bedi, A.~S., Szepesvari, C., and Wang, M.
\newblock Variational policy gradient method for reinforcement learning with general utilities.
\newblock In \emph{Advances in Neural Information Processing Systems}, 2020.

\bibitem[Zhao(2022)]{zhao2022continuous}
Zhao, R.~Y.
\newblock \emph{Continuous Homomorphisms and Leveraging Symmetries in Policy Gradient Algorithms for Markov Decision Processes}.
\newblock PhD thesis, 2022.

\bibitem[Zhu et~al.(2022)Zhu, Jiang, Liu, Yu, and Zhang]{zhu2022invariant}
Zhu, Z.-M., Jiang, S., Liu, Y.-R., Yu, Y., and Zhang, K.
\newblock Invariant action effect model for reinforcement learning.
\newblock In \emph{AAAI Conference on Artificial Intelligence}, 2022.

\end{thebibliography}
\bibliographystyle{icml2024}

\newpage
\onecolumn
\begin{appendix}
\section{List of symbols}
\label{sec:notation}
    \begin{longtable}{lll}
        \multicolumn{3}{c}{\underline{\textbf{General Mathematical Objects}}} \\ 
        \noalign{\vskip 1mm}
         $\Delta(X)$ & $\triangleq$ & Probability simplex over $X$ \\
         $x^+$ & $\triangleq$ & $\max\{1,x\}$ \\
         $\overline{X}$ & $\triangleq$ & Abstract counterpart of variable $X$\\
        \noalign{\vskip 3mm}
        \multicolumn{3}{c}{\underline{\textbf{MDP}}} \\ 
        \noalign{\vskip 1mm}
         $\cmp_r$ & $\triangleq$ & Markov decision process $\cmp_r = (\Sspace, \Aspace,  P, \mu, r)$\\
         $\Sspace$ & $\triangleq$ & State space \\
         $\Aspace$ & $\triangleq$ & Action space \\
         $P$ & $\triangleq$ & Transition model $P : \Sspace \times \Aspace \to \Delta (\Sspace)$ \\
         $r$ & $\triangleq$ & Scalar state function \eg reward $r : \Sspace \times \Aspace \to \Delta ([0, R])$ \\
         $\mu$ & $\triangleq$ & Initial state distribution $\mu \in \Delta (\Sspace)$ \\
         $S$ & $\triangleq$ & Size of the state space $S = |\Sspace|$ \\
         $A$ & $\triangleq$ & Size of the action space $A = |\Aspace|$ \\
         $s$ & $\triangleq$ & State $s \in \Sspace$ \\
         $a$ & $\triangleq$ & Action $a \in \Aspace$ \\
         $\pi$ & $\triangleq$ & Stationary Markovian policy $\pi: \Sspace \to \Delta(\Aspace)$\\
         $\lambda$ & $\triangleq$ & Stationary state-action distribution $\lambda \in \Lambda$\\
         $\Lambda$ & $\triangleq$ & Set of admissible state-action distribution, expression within Section \ref{sec:problem}\\
         \noalign{\vskip 3mm}
         \multicolumn{3}{c}{\underline{\textbf{Symmetries, MDP Homomorphisms}}} \\
         \noalign{\vskip 1mm}
         $L_g$ & $\triangleq$ & state transformation or symmetry $L_g:\Sspace \to \Sspace$ \\
         $K_g^s$ & $\triangleq$ & state-dependent action transformation or symmetry $L_g:\Aspace \to \Aspace$ \\
         $\mathbb{G}$ & $\triangleq$ & Algebraic group structure \eg $\group = (\{L_g\}_{g \in G}, \cdot)$\footnote{we do not explicitly specify the identity and inverse elements of the group as we do not use them in the following} \\
         $g$ & $\triangleq$ & Index of group element $g \in G$ \\
         $[s]$ & $\triangleq$ & Equivalence class of $f$-invariant states $[s] \subseteq \Sspace$\\
         $h$ & $\triangleq$ & MDP homomorphism between $\cmp_f$ and $\overline{\cmp}_f$, $h: \Sspace \times \Aspace \to \SspaceAbs \times \AspaceAbs$, $h = (\psi, \{\phi_s \mid s \in \Sspace\})$\\
         $\psi$ & $\triangleq$ & Homomorphism state map $\psi: \Sspace \to \SspaceAbs$\\
         $\phi$ & $\triangleq$ & Homomorphism state-dependent action map $\phi_s:\Aspace \to \AspaceAbs$\\
         $[\sAbs]$ & $\triangleq$ & Equivalence class of $f$-invariant states mapping to $\sAbs$ along $\psi$, $[\sAbs]=\{ s' \in \Sspace : \psi(s') = \psi(s)\}$ \\
         $E_s$ & $\triangleq$ & Cardinality of equivalence class $[s]$, $E_s = |[s]| = |[\sAbs]| = E_{\sAbs}$ \\
         $\Phi$ & $\triangleq$ & Geometric compression coefficient, $\Phi = \overline{S} / S$\\
         \noalign{\vskip 3mm}
         \multicolumn{3}{c}{\underline{\textbf{\AlgNameShort Algorithm}}} \\
         \noalign{\vskip 1mm}
         $f$ & $\triangleq$ & Unknown state quantity $f: \Sspace \to \mathcal{B} \subset \mathbb{R}$\\
         $\nu$ & $\triangleq$ & Noise random variable with zero mean and unknown variance $\sigma^2$\\
         $y$ & $\triangleq$ & Noisy random variable $y(s) = f(s) + \nu(s) $ \\
         $T_t(s)$ & $\triangleq$ & Visitation counts for state $s$ after $t$ steps, see Equation \eqref{eq:visitation_counts}\\
         $\hat{f}_t(s)$ & $\triangleq$ & Empirical mean for state $s$ after $t$ steps, see Equation \eqref{eq:empirical_mean}\\
         $\hat{\sigma}^2_t(s)$ & $\triangleq$ & Empirical variance for state $s$ after $t$ steps, see Equation \eqref{eq:empirical_variance}\\
          $\hat{f}^A_t(s)$ & $\triangleq$ & Average empirical mean for equivalence class [s], see Equation \eqref{eq:average_empirical_mean}\\
          $\hat{f}_t(\sAbs)$ & $\triangleq$ & Empirical mean for abstract state $\sAbs$ after $t$ steps\\
         $\bar{\xi}_n$ & $\triangleq$ & Geometric estimation error, see Equation \eqref{eq:est_error_eq} \\
         $\epsilon$ & $\triangleq$ & Controllable approximation error for PAC guarantees \\
         $\delta$ & $\triangleq$ & Controllable probability of error for PAC guarantees \\
         $n^{\bar{\xi}}(\epsilon, \delta)$ & $\triangleq$ & Sample complexity for PAC estimation of geometric estimation error $\bar{\xi}_n$, see def. \ref{def:sample_complexity}\\
         $n$ & $\triangleq$ & Sample complexity for PAC optimality of geometric estimation objective, see def. \ref{theorem:sample_complexity_result}\\
         $\Labs_n$ & $\triangleq$ & Finite-samples Convex RL Objective \eqref{eq:finite_samples_obj} \\
         $\Labs_{\infty, \eta}$ & $\triangleq$ & Asymptotic and smoothened Convex RL Objective \eqref{eq:opt_infty_prob} \\
         $\Bar{r}^k_{\lambdaAbs_k}$ & $\triangleq$ & Abstract reward estimated at iteration $k$ via empirical density $\lambdaAbs_k$ \\
         $\overline{\pi}_{k+1}^+$ & $\triangleq$ & Optimal abstract policy \wrt MDP $\overline{\cmp}_{\Bar{r}}^k$ \\
         $\pi^+_{k+1}$ & $\triangleq$ & Optimal original policy at iteration $k$ obtained by lifting $\overline{\pi}_{k+1}$ \\
         $\tau_k$ & $\triangleq$ & Length of trajectory of policy deployed at iteration $k$\\
         $\Tilde{\upsilon}_{k+1}$ & $\triangleq$ & Empirical state-action distribution induced during iteration $k$\\
         $\Tilde{\overline{\upsilon}}_{k+1}$ & $\triangleq$ & Abstract state action distribution obtained at iteration $k$ by aggregating $\Tilde{\upsilon}_{k+1}$ \\
         $\overline{\lambda}_{k+1}$ & $\triangleq$ & Updated state-action frequency at end of $k$-th iteration\\
         \noalign{\vskip 3mm}
         \multicolumn{3}{c}{\underline{\textbf{Regret and Sample Complexity Analysis}}} \\
         \noalign{\vskip 1mm}
         $\overline{\upsilon}^+_{k+1}$ & $\triangleq$ & state-action distribution induced by $\overline{\pi}_{k+1}^+$\\
         $\overline{\lambda}^*$ & $\triangleq$ & optimal abstract state-action distribution of the learning problem\\
         $\lambda^*$ & $\triangleq$ & optimal state-action distribution of the learning problem\\
         $\overline{\upsilon}^*_k$ & $\triangleq$ & optimal state-action distribution of the MDP $\overline{\mathcal{M}}^k_r$\\
         $\nabla \widehat{\mathcal{L}}_{t_k-1}$ & $\triangleq$ & empirical gradient of objective $\overline{\mathcal{L}}_{\infty, \eta}$\\
         $\nabla \widehat{\mathcal{L}}^+_{t_k-1}$ & $\triangleq$ & empirical optimistic gradient of objective $\overline{\mathcal{L}}_{\infty, \eta}$\\
    \end{longtable}
\newpage

\section{Proofs Section \ref{sec:problem}}
\label{sec:problem_proof}
\estErrorRewriting*
\begin{proof}
    By the definition of $\Bar{\xi}_n$ we have
    \begin{align}
        \Bar{\xi}_n &\coloneqq \frac{1}{S}\sum_{s \in \Sspace} |\hat{f}_n^{A}(s) - f(s)|\\
        &\labelrel={step:est_rew_1} \frac{1}{S}\sum_{s \in \Sspace} \Bigg| \frac{1}{T_n^+(\bar{s})} \sum_{s' \in [s]} T_n(s') \hat{f}_n(s') - f(s) \Bigg|\\
        &\labelrel={step:est_rew_2} \frac{1}{S}\sum_{\sAbs \in \SspaceAbs}|[\sAbs]| \Bigg| \frac{1}{T_n^+(\bar{s})} \sum_{s' \in [\sAbs]} T_n(s') \hat{f}_n(s') - f(\sAbs) \Bigg|\\
        &\labelrel={step:est_rew_3} \frac{1}{S}\sum_{\sAbs \in \SspaceAbs}E_{\sAbs} \big| \hat{f}_n(\sAbs) - f(\sAbs) \big| 
    \end{align}
where in step \eqref{step:est_rew_1} we used that $\hat{f}^A_n(s) \coloneqq \frac{1}{T_n^+([s])} \sum_{s' \in [\sAbs]} T_n(s') \hat{f}_n(s')$, in step \eqref{step:est_rew_2} we used $f$ invariances and in step \eqref{step:est_rew_3} we used that $E_{\sAbs} \coloneqq |[\sAbs]|$ and $\hat{f}_n(\sAbs) \coloneqq \frac{1}{T_n^+(\bar{s})} \sum_{s \in [\sAbs]} T_n(s) \hat{f}_n(s)$.
\end{proof}
\estUpperBound*
\begin{proof}
To prove the statement, we employ a Bernstein type inequality from Lemma 7.37 in ~\cite{lafferty2008concentration}, where we upper bound $\sfrac{2}{3}$ by 1 in the second summand. Then, given a $\delta'$, we have that for all $t \in [n]$:
    \begin{equation}
    \label{eq:local_bernstein_f}
        \mathbb{P} \left( \Big|\hat{f}_{t}(\overline{s})-f(\overline{s}) \Big|\leq \sqrt{\frac{2\sigma^2(\overline{s}) \log(1/ \delta')}{T^+_{n}(\overline{s})}}+\frac{F_{\max}\log(1/\delta')}{T^+_{n}(\overline{s})} \right) \geq 1- \delta'
    \end{equation}
    where $T^+_{n}(\overline{s})= \sum_{s \in [\overline{s}]} T^+_{n}(s)$ and $\hat{f}_{t}(\overline{s})=\frac{1}{T^+_t(\sAbs)}\sum_{s \in [\overline{s}]}T_{t}(s)\hat{f}_{t}(s)$. For notational simplicity, we will define:
 \begin{align*}
     B_{\delta'}(\overline{s})=\sqrt{\frac{2\sigma^2(\overline{s}) \log(1/ \delta')}{T^+_{n}(\overline{s})}}+\frac{F_{\max}\log(1/\delta')}{T^+_{n}(\overline{s})}
 \end{align*}
 Then, by using a standard union bound, over $\overline{s} \in \overline{\mathcal{S}}$ and $t \in [n]$:
 \begin{align}
      \mathbb{P} \left( \bigcap_{t \in [n]}\bigcap_{\overline{s} \in \SspaceAbs}\bigg \{\Big|\hat{f}_{t}(\overline{s})-f(\overline{s}) \Big| \leq B_{\delta'}(\overline{s}) \bigg \}\right) &= 1-\mathbb{P} \left( \bigcup_{n \in [t]}\bigcup_{\overline{s} \in \SspaceAbs}\bigg \{\Big|\hat{f}_{t}(\overline{s})-f(\overline{s}) \Big| > B_{\delta'}(\overline{s}) \bigg \}\right) \nonumber\\
      &\geq 1-\sum_{t \in [n]}\sum_{\overline{s} \in \SspaceAbs} \mathbb{P} \left( \Big|\hat{f}_{t}(\overline{s})-f(\overline{s}) \Big| > B_{\delta'}(\overline{s}) \right) \nonumber\\ &\labelrel\geq{new_step:mid_upper_bound_1}1-n \overline{S} \delta' \label{eq:high_prob_bound_1}
 \end{align}
where in step \eqref{new_step:mid_upper_bound_1} we have used Equation \eqref{eq:local_bernstein_f}. Then:
 \begin{align}
     1-\delta &\labelrel\leq{step:Prop_2_step_1} \mathbb{P} \left( \bigcap_{t \in [n]}\bigcap_{\overline{s} \in \SspaceAbs}\bigg \{\Big|\hat{f}_{t}(\overline{s})-f(\overline{s}) \Big| \leq B_{\delta'}(\overline{s}) \bigg \}\right) \nonumber\\
     &= \mathbb{P} \left( \bigcap_{t \in [n]}\bigcap_{\overline{s} \in \SspaceAbs}\bigg \{E_{\sAbs} \Big|\hat{f}_{t}(\overline{s})-f(\overline{s}) \Big| \leq E_{\sAbs}B_{\delta'}(\overline{s}) \bigg \}\right)
     \label{eq:prop2_eq_1}\\
     &\leq \mathbb{P} \left(\bigcap_{\overline{s} \in \SspaceAbs}\bigg \{E_{\sAbs} \Big|\hat{f}_{n}(\overline{s})-f(\overline{s}) \Big| \leq E_{\sAbs}B_{\delta'}(\overline{s}) \bigg \}\right)
     \label{eq:prop2_eq_1}\nonumber\\
     &\labelrel\leq{step:Prop_2_step_2} \mathbb{P} \left( \sum_{\overline{s} \in \SspaceAbs}E_{\sAbs} \Big|\hat{f}_{n}(\overline{s})-f(\overline{s}) \Big| \leq \sum_{\overline{s} \in \overline{S}} E_{\sAbs} B_{\delta'}(\overline{s}) \right) \nonumber\\
     % &=\mathbb{P} \left( \frac{1}{\Phi}\sum_{\overline{s} \in \overline{S}}\bigg|\hat{f}_{n}(\overline{s})-f(\overline{s}) \bigg| \leq \frac{1}{\Phi}\sum_{\overline{s} \in \overline{S}} B_{\delta'}(\overline{s}) \bigg \}\right)&\\
     &=\mathbb{P}\left( S\Bar{\xi}_n \leq \sum_{\overline{s} \in \SspaceAbs} E_{\sAbs} B_{\delta'}(\overline{s})\right)\nonumber\\
     &= \mathbb{P}\left( \Bar{\xi}_n \leq \frac{1}{S}\sum_{\overline{s} \in \SspaceAbs} E_{\sAbs} B_{\delta'}(\overline{s}) \right)\nonumber
 \end{align}

where in \eqref{step:Prop_2_step_1} we used Equation \ref{eq:high_prob_bound_1} and in step \eqref{step:Prop_2_step_2} we used the trivial fact that since the inequality in Equation \ref{eq:prop2_eq_1} holds for every $\sAbs \in \SspaceAbs$ therefore it holds also for the respective sums over $\SspaceAbs$. Then, we get that with probability at least $1-\delta$:
\begin{align*}
    \Bar{\xi}_n & \leq \frac{1}{S}\sum_{\overline{s} \in \SspaceAbs}E_{\sAbs} \bigg( \sqrt{\frac{2\sigma^2(\overline{s}) \log(n\overline{S}/ \delta)}{T^+_{n}(\overline{s})}}+\frac{F_{\max}\log(n\overline{S}/\delta)}{T^+_{n}(\overline{s})} \bigg)\\ 
    &\leq \frac{1}{S} \max \Big \{\log(n\overline{S}/\delta) , \sqrt{\log(n\overline{S}/\delta)} \Big \} \sum_{\overline{s} \in \SspaceAbs} E_{\sAbs} \bigg( \sqrt{\frac{2\sigma^2(\overline{s})}{T^+_{n}(\overline{s})}}+\frac{F_{\max}}{T^+_{n}(\overline{s})} \bigg)\\
    &=\frac{1}{S}  \max \Big \{\log(n\overline{S}/\delta) , \sqrt{\log(n\overline{S}/\delta)} \Big \} \sum_{\overline{s} \in \SspaceAbs}\Fabs(\sAbs; \;T^+_n)\\
    &=\frac{C(n, \overline{S}, \delta)}{S} \sum_{\overline{s} \in \SspaceAbs}\Fabs(\sAbs; \;T^+_n)
\end{align*}
\end{proof}
\tractableEstUpperBound*
\begin{proof}
First, we define the following:
\begin{equation}
    \Fabs(\sAbs \; ; \lambda) \coloneqq E_{\sAbs}\bigg(\Fabs_1(\sAbs \; ; \lambda) + \Fabs_2(\sAbs \; ; \lambda)\bigg) \label{eq:convex_obj_finite_samples}
\end{equation}
with
\begin{align*}
    \Fabs_1(\sAbs \; ; \lambda) &= \sqrt{\frac{2\sigma^2(\sAbs)}{\sum_{a \in \Aspace} \sum_{s \in [\sAbs]} \lambda(s, a) + \frac{1}{n}}}\\
    \Fabs_2(\sAbs \; ; \lambda) &= \frac{1}{\sqrt{n}}\frac{F_{\max}}{\sum_{a \in \Aspace} \sum_{s \in [\sAbs]} \lambda(s, a) + \frac{1}{n}}
\end{align*}
and the following auxiliary objective:
\begin{equation}
\label{eq:finite_samples_obj}
    \Labs_n(\lambda) \coloneqq \frac{1}{S} \sum_{\sAbs \in \SspaceAbs} \Fabs(\sAbs \; ; \lambda)
\end{equation}

Then, given an empirical state-action frequency at time $t$, defined as $\lambda_t(s,a) = T_t(s,a)/t$, we have that:
 \begin{align*}
     \bar{\xi}_n &\labelrel\leq{step:Prop_3_step_1} \frac{2}{\sqrt{n}}C(n, \overline{S}, \delta) \sum_{\sAbs \in \SspaceAbs} \Fabs(\sAbs \; ; \lambda_n) \\
     &\labelrel={step:Prop_3_step_2}\frac{2S}{\sqrt{n}}C(n, \overline{S}, \delta)\Labs_n(\lambda_n)\\
     &\leq\frac{2S}{\sqrt{n}}C(n, \overline{S}, \delta)\left[\Labs_{\infty, \eta}(\lambda_n) +\ \frac{\overline{S} F_{\max}}{S \sqrt{n} \eta}\right]
 \end{align*}
where in step \eqref{step:Prop_3_step_1} we employed Equations \ref{eq:convex_obj_finite_samples} and \ref{eq:upper_bound_est_err} and the fact that $T^+_n(s) = \max(T_n(s), 1) \geq (T_n(s) + 1)/2$. In step \eqref{step:Prop_3_step_2} we used the definition of the objective in Equation \ref{eq:finite_samples_obj} and in the last inequality we used Lemma \ref{lemma:bound_L_n}.
\end{proof}
\newpage

\section{Proofs Section \ref{sec:algorithm}}
\label{sec:algorithm_proof}
\variance*

\begin{proof}
    From \citep{panaganti2022sample}, we have that for a fixed time $k \leq t$, it holds that with probability at least $1-\tilde{\delta}$:

    \begin{align*}
    \Big|\sqrt{\sigma^2(\overline{s}_k)}-\sqrt{\hat{\sigma}^2_{t}(\overline{s}_k)}\Big| \leq F_{\max}\sqrt{2\frac{\log(2\overline{S}/\delta)}{T_{t}^+(\overline{s})}} 
\end{align*}

Since we want the result to hold $\forall k \leq t$, we simply union bound over time which leads to the desired result.
\end{proof}

\gradInv*

\begin{proof}
    The proof simply follows by computing the gradient explicitly. In particular, we have that
    \begin{flalign*}
        \nabla_\lambda \widehat{\mathcal{L}}^+_{t_k - 1}(\lambda)_{[s,a]} =\frac{- E_s \Big[ \sqrt{2 \hat{\sigma}_{t_k - 1}^2(\overline{s})} + \alpha(t_k -1, \overline{s}, \delta)\Big]}{ 2 S \big( \sum_{s \in [s]}\sum_{b \in \mathcal{A}}\lambda(s, b) +E_{\overline{s}} \ \eta \big)^{\frac{3}{2}}}=\nabla_\lambda \widehat{\mathcal{L}}^+_{t_k - 1}(\lambda)_{[s',a']}
    \end{flalign*}
    Indeed, we notice that the gradient does not directly depend on the actions $a$ and $a'$ since in the denominator we are summing over all the possible actions. Furthermore, since both $s, s' \in [s]$, the gradient remains unchanged since it depends exclusively on the abstract state $\overline{s}$ and not directly on $s$ and $s'$. In particular, since $s$ and $s'$ are in the same equivalence class, then the corresponding abstract state $\overline{s}$ is the same.
    \end{proof}
\newpage

\section{Proofs Section \ref{sec:analysis}}
\label{sec:analysis_proof}
\regretTheorem*
\begin{proof}

The analysis follows a classic Frank-Wolfe (FW) scheme analysis while taking into account the approximation error due to the optimistic estimate of the gradient, and the estimation error due to the gap between the asymptotic distribution associated with $\overline{\pi}^+_{k+1}$ and the induced empirical frequences $\overline{\lambda}_{k+1}$ at each iteration $k$.  It diverges from previous analysis for non-episodic AE settings~\citep{tarbouriech2019active, tarbouriech2020active} in two main ways: (i) by leveraging ergodicity and hence uniqueness of the stationary distribution induced by Markovian stationary policies, we study the density estimation process via~\citep[Lemma 5]{mutny2023active}, (ii) we introduce a dependency on the geometric compression term $\Phi$ (Definition \ref{def:geometric_compression_term}) in order to show the effect of compression on the final sample complexity result in Theorem \ref{theorem:sample_complexity_result}.

As the regret in Convex RL is interpreted as a suboptimality gap~\cite{mutti2023convex,mutti2022challenging, tarbouriech2019active}, we first derive an upper bound on the approximation error, as defined in Equation \ref{eq: definition_regret}, achieved at the end of iteration $k\in [K]$ of Algorithm \ref{alg:gae_algorithm}. In the following we denote with $t_{k+1}$ the number of samples gathered until the end of iteration $k$, formally $t_{k+1}:=\sum_{j=1}^{k} \tau_j$, where $\tau_j$ is the number of steps policy $\pi^+_{k+1}$ has been released in iteration $k$. Then by defining $\mathcal{L} \coloneqq \Labs_{\infty, \eta}$, we derive the following.
\begin{align}
    \rho_{k+1}&:=\mathcal{L}(\overline{\lambda}_{k+1})-\mathcal{L}(\overline{\lambda}^*) \label{eq:regret_eq_1}\\
    &=\mathcal{L}((1-\beta_k)\overline{\lambda}_k+\beta_k \Tilde{\overline{\upsilon}}_{k+1})-\mathcal{L}(\overline{\lambda}^*) \tag{where $\beta_k \coloneqq \tau_k /(t_{k+1} - 1)$}\nonumber\\
    &\labelrel\leq{step:regret_1} \mathcal{L}(\overline{\lambda}_{k})-\mathcal{L}(\overline{\lambda}^*)+\beta_k \langle \nabla \mathcal{L}(\overline{\lambda}_{k}), \Tilde{\overline{\upsilon}}_{k+1}-\overline{\lambda}_{k} \rangle + C_{\eta}\beta_k^2 \nonumber\\
    &=\mathcal{L}(\overline{\lambda}_{k})-\mathcal{L}(\overline{\lambda}^*)+\beta_k \langle \nabla \mathcal{L}(\overline{\lambda}_{k}), \overline{\upsilon}^*_{k+1}-\overline{\lambda}_{k}\rangle+\beta_k \langle \nabla \mathcal{L}(\overline{\lambda}_{k}), \Tilde{\overline{\upsilon}}_{k+1}-\overline{\upsilon}^*_{k+1}\rangle + C_{\eta}\beta_k^2 \nonumber\\
    &\labelrel\leq{step:regret_2} \mathcal{L}(\overline{\lambda}_{k})-\mathcal{L}(\overline{\lambda}^*)+\beta_k \langle \nabla \mathcal{L}(\overline{\lambda}_{k}), \overline{\lambda}^*-\overline{\lambda}_k\rangle+\beta_k \langle \nabla \mathcal{L}(\overline{\lambda}_{k}), \Tilde{\overline{\upsilon}}_{k+1}-\overline{\upsilon}^*_{k+1}\rangle + C_{\eta}\beta_k^2 \ \nonumber\\
    &\labelrel\leq{step:regret_3} (1-\beta_k)\rho_k+C_{\eta}\beta_k^2+\beta_k \underbrace{\langle \nabla \mathcal{L}(\overline{\lambda}_{k}), \overline{\upsilon}^+_{k+1}-\overline{\upsilon}^*_{k+1}\rangle}_{\epsilon_{k+1}}+\beta_k\underbrace{\langle \nabla \mathcal{L}(\overline{\lambda}_{k}), \Tilde{\overline{\upsilon}}_{k+1}-\overline{\upsilon}^+_{k+1}\rangle}_{\Delta_{k+1}} \label{eq: last_eq_tracking_error}
\end{align}
where in step \eqref{step:regret_1} we use $C_{\eta}$-smoothness of $\mathcal{L}$, in step \eqref{step:regret_2} we use the definition of the update step of FW and the fact that $\overline{\upsilon}^*_{k+1}$ is optimal and in \eqref{step:regret_3} we use again $C_{\eta}$-smoothness to bound $\langle \nabla \mathcal{L}(\Tilde{\lambda}_{k}), \lambda^*-\Tilde{\lambda}_k\rangle$. Note that in the first term $\epsilon_{k+1}$, we take into account the discrepancy between the state-action distribution $\overline{\upsilon}^*_{k+1}$ induced by the optimal policy \wrt the MDP with the exact gradient as reward, and our exact solution $\overline{\upsilon}^+_{k+1}$ of the MDP with the optimistic gradient as reward. In the second term $\Delta_{k+1}$, we take into account the error due to the gap between the state-action distribution $\overline{\upsilon}^+_{k+1}$ and the empirical distribution $\Tilde{\overline{\upsilon}}_{k+1}$ induced by deploying policy $\pi^+_{k+1}$ for $\tau_k$ steps. In the following, we upper bound independently the terms $\Delta_{k+1}$ and $\epsilon_{k+1}$.

\subsection{Upper Bound $\Delta_{k+1}$}
We derive a PAC guarantee on the density estimation error by using Lemma \ref{PAC guarantee} with $\eta_t(\overline{s})=\eta(\overline{s})=\overline{\lambda}(\overline{s})$, as in our case $\overline{\lambda}$ is fixed within one FW iteration and set $f^{\overline{s}}(\cdot)=f_t^{\overline{s}}(\cdot) := \frac{1}{T} \langle \cdot, \delta_{\overline{s}} \rangle$, where $\delta_{\overline{s}}$ is the vector of all zeros except in position $s$ where it has a one and where $\langle \cdot, \cdot \rangle$ denotes the inner product. In particular, in this case, $f^{\overline{s}}$ corresponds to the evaluation functional of a probability distribution in state $\overline{s}$. Note that this functional is clearly linear as requested by the proposition as the inner product is linear. Furthermore, we can notice that $||f_{\overline{s}}||_{\infty}=\frac{1}{T}$ and hence we can restate the bound presented in the Lemma as follows:
\begin{align*}
    \left| \frac{\sum_{t=1}^{\tau}\mathbb{I}\{\sAbs_t=\overline{s} \}}{\tau}-\overline{\lambda}(\overline{s}) \right| \leq \sqrt{\frac{2}{\tau} \log \left(\frac{2}{\delta'} \right)}  \ \ \ \text{with probability at least $1-\delta'$}
\end{align*}
Since we want the statement above to hold uniformly for every state $\overline{s} \in \SspaceAbs$ and for every possible abstract policy, we set $\delta=\frac{\delta'}{\overline{S}\overline{A}^{\overline{S}}}$ and apply a union bound, obtaining:
\begin{equation}
    \left| \frac{\sum_{t=1}^{\tau}\mathbb{I}\{\overline{s}_t=\overline{s} \}}{\tau}-\lambdaAbs(\overline{s}) \right| \leq \sqrt{\frac{2}{\tau} \log \left(\frac{2\overline{S}\overline{A}^{\overline{S}}}{\delta} \right)}  \ \ \ \text{with probability at least $1-\delta$} \label{eq: PAC_bound_density}
\end{equation}
Then, we can bound $\Delta_{k+1}$ as follows.
\begin{flalign}
    \Delta_{k+1}&= \langle \nabla \mathcal{L}(\overline{\lambda}_{k}), \Tilde{\overline{\upsilon}}_{k+1}-\overline{\upsilon}^+_{k+1}\rangle \nonumber\\
    &=-\frac{1}{2 S}\sum_{\overline{s}} \frac{E_{\overline{s}}\sqrt{2\sigma^2(\overline{s})}}{\big(\lambdaAbs_k(\sAbs) + E_{\sAbs} \eta \big)^{\frac{3}{2}}} \sum_{\aAbs} \left(\Tilde{\overline{\upsilon}}_{k+1}(\overline{s}, \aAbs)-\overline{\upsilon}^+_{k+1}(\overline{s}, \aAbs) \right) \nonumber\\
    & \labelrel={step:bound_delta_1}-\frac{1}{2 S}\sum_{\overline{s}} \frac{E_{\overline{s}}\sqrt{2\sigma^2(\overline{s})}}{\big(\lambdaAbs_k(\sAbs) + E_{\sAbs} \eta \big)^{\frac{3}{2}}}\left(\Tilde{\overline{\upsilon}}_{k+1}(\overline{s})-\overline{\upsilon}^+_{k+1}(\overline{s}) \right) \nonumber\\
    &\labelrel\leq{step:bound_delta_2}-\frac{1}{2 S}\sum_{\overline{s}} \frac{E_{\overline{s}}^{-\frac{1}{2}}\sqrt{2\sigma^2(\overline{s})}}{\eta ^{\frac{3}{2}}}\left(\Tilde{\overline{\upsilon}}_{k+1}(\overline{s})-\overline{\upsilon}^+_{k+1}(\overline{s}) \right) \nonumber\\
    &\labelrel \leq {step:bound_delta_3}\frac{\Phi^{1/2}\overline{S} \sqrt{\sigma^2_{\max}}}{ S\eta^{3/2}} \left| \left| \Tilde{\overline{\upsilon}}_{k+1}-\overline{\upsilon}^+_{k+1}\right| \right |_{\infty}\nonumber\\
    &\labelrel \leq {step:bound_delta_4} \frac{\Phi^{3/2} \sqrt{\sigma^2_{\max}}}{\eta^{3/2}} \sqrt{\frac{2}{\tau_k} \log \left(\frac{2\overline{S}\overline{A}^{\overline{S}}}{\delta} \right)} \label{eq: bound_Delta}\ \ \ \text{with probability at least $1-\delta$}
\end{flalign}
where in \eqref{step:bound_delta_1} we consider the state densities by summing over the actions, in \eqref{step:bound_delta_2} we lower bound $\lambdaAbs(\sAbs) \geq 0$, in \eqref{step:bound_delta_3} we use that $E_{\sAbs}=\frac{1}{\Phi}$ in addition to taking the infinity norm and in \eqref{step:bound_delta_4} we use the PAC bound in Equation \ref{eq: PAC_bound_density}. 

\subsection{Upper Bound $\epsilon_{k+1}$}

Next, we define as $\nabla \widehat{\mathcal{L}}^{+}_{t_k-1}(\lambdaAbs)$ the empirical optimistic gradient it iteration $k$, defined as in Equation \ref{eq:abstract_reward}, but replacing the true variance with its empirical counterpart, and by using Lemma \ref{lemma:variance_upper_bound} we upper bound the true gradient $\nabla \mathcal{L}(\lambdaAbs)$ with a term containing the gradient estimate $\nabla \widehat{\mathcal{L}}_{t_k-1}(\lambdaAbs)$ (same as $\nabla \widehat{\mathcal{L}}^{+}_{t_k-1}(\lambdaAbs)$ but without the $\alpha$). More explicitly, we have that:
\begin{equation}
    \nabla \widehat{\mathcal{L}}_{t_k-1}^{+}(\lambdaAbs)(\overline{s},\aAbs)=\frac{- E_{\sAbs} \left[\sqrt{2 \hat{\sigma}_{t_k - 1}^2(\overline{s})} + \alpha(t_k -1, \overline{s}, \delta)\right]}{ 2 S \big(\lambdaAbs(\sAbs) + E_{\sAbs} \eta \big)^{\frac{3}{2}}} \nonumber
\end{equation}

\begin{equation}
    \nabla \widehat{\mathcal{L}}_{t_k-1}(\lambdaAbs)(\overline{s},\aAbs)=\frac{- E_{\sAbs} \sqrt{2 \hat{\sigma}_{t_k - 1}^2(\overline{s})}}{ 2 S \big(\lambdaAbs(\sAbs) + E_{\sAbs} \eta \big)^{\frac{3}{2}}} \nonumber
\end{equation}

\begin{equation}
    \nabla \mathcal{L}(\lambdaAbs)(\overline{s},\aAbs)=\frac{- E_{\sAbs} \sqrt{2 \sigma^2(\overline{s})}}{ 2 S \big(\lambdaAbs(\sAbs) + E_{\sAbs} \eta \big)^{\frac{3}{2}}} \nonumber
\end{equation}
In particular, we obtain that with probability at least $1-\delta$:
\begin{equation}
    \nabla \widehat{\mathcal{L}}_{t_k-1}^+(\lambdaAbs)(\overline{s},\aAbs)=  \nabla \widehat{\mathcal{L}}_{t_k-1}(\lambdaAbs)(\overline{s},\aAbs) -\frac{1}{2 \Phi S}\frac{\alpha(t_k-1, \overline{s}, \delta)}{\left(\lambdaAbs(\overline{s})+E_{\overline{s}}\eta \right)^{3/2}} \leq \nabla \mathcal{L}(\lambdaAbs)(\overline{s},\aAbs)\label{eq: gradient_bounds_1} \ \ \ \ \text{and}
\end{equation}
\begin{equation}
    \nabla \mathcal{L}(\lambdaAbs)(\overline{s},\aAbs) \leq \nabla \widehat{\mathcal{L}}_{t_k-1}(\lambdaAbs)(\overline{s},\aAbs) +\frac{1}{2 \Phi S}\frac{\alpha(t_k-1, \overline{s}, \delta)}{\left(\lambdaAbs(\overline{s})+E_{\overline{s}} \eta \right)^{3/2}} \label{eq: gradient_bounds_2}
\end{equation}
Hence, we get that:
\begin{align}\left\langle\nabla \mathcal{L}\left(\overline{\lambda}\right), \overline{\upsilon}_{k+1}^{+}\right\rangle & =\sum_{\overline{s}, \aAbs} \overline{\upsilon}_{k+1}^{+}(\overline{s}, \aAbs) \nabla \mathcal{L}\left(\overline{\lambda}\right)(\overline{s}, \aAbs) \nonumber\\ & \labelrel\leq{step:regret_4} \sum_{\overline{s}, \aAbs} \overline{\upsilon}_{k+1}^{+}(\overline{s}, \aAbs) \nabla \widehat{\mathcal{L}}_{t_k-1}\left(\overline{\lambda}\right)(\overline{s}, \aAbs)+\frac{1}{2 \Phi S}\sum_{\overline{s}, \aAbs} \overline{\upsilon}_{k+1}^{+}(\overline{s}, \aAbs) \frac{\alpha\left(t_k-1, \sAbs, \delta\right)}{\left(\lambdaAbs(\overline{s})+E_{\overline{s}}\eta \right)^{3/2}} \nonumber\\ & \labelrel\leq{step:regret_4_b}\sum_{\overline{s}, \aAbs} \overline{\upsilon}_{k+1}^{+}(\overline{s}, \aAbs) \nabla \widehat{\mathcal{L}}_{t_k-1}^+\left(\overline{\lambda}\right)(\overline{s}, \aAbs)+\frac{1}{\Phi S}\sum_{\overline{s}, \aAbs} \overline{\upsilon}_{k+1}^{+}(\overline{s}, \aAbs) \frac{\alpha\left(t_k-1, \sAbs, \delta\right)}{\left(\lambdaAbs(\overline{s})+E_{\overline{s}}\eta\right)^{3/2}} \nonumber \\ &=\left\langle\nabla \widehat{\mathcal{L}}_{t_k-1}^{+}\left(\overline{\lambda}\right), \overline{\upsilon}_{k+1}^{+}\right\rangle+\frac{1}{\Phi S} \sum_{\overline{s}, \aAbs} \overline{\upsilon}_{k+1}^{+}(\sAbs, \aAbs) \frac{\alpha\left(t_k-1, \sAbs, \delta\right)}{\left(\lambdaAbs(\overline{s})+E_{\overline{s}}\eta\right)^{3/2}} \nonumber \\ & \labelrel\leq{step:regret_5}\left\langle\nabla \widehat{\mathcal{L}}_{t_k-1}^{+}\left(\overline{\lambda}\right), \overline{\upsilon}_{k+1}^{\star}\right\rangle+ \frac{1}{\Phi S}\sum_{\overline{s}, \aAbs} \overline{\upsilon}_{k+1}^{+}(\sAbs, \aAbs) \frac{\alpha\left(t_k-1, \sAbs, \delta\right)}{\left(\lambdaAbs(\overline{s})+E_{\overline{s}}\eta\right)^{3/2}} \nonumber\\ & \labelrel\leq{step:regret_6}\left\langle\nabla \mathcal{L}\left(\overline{\lambda}\right), \overline{\upsilon}_{k+1}^{\star}\right\rangle+ \frac{3}{2\Phi S}\sum_{\overline{s}, \aAbs} \overline{\upsilon}_{k+1}^{+}(\overline{s}, \aAbs) \frac{\alpha\left(t_k-1, \overline{s}, \delta\right)}{\left(\lambdaAbs(\overline{s})+E_{\overline{s}}\eta\right)^{3/2}} \label{final_ineq_gradient_eps}\end{align}
where in \eqref{step:regret_4} we used Equation \ref{eq: gradient_bounds_2}, in \eqref{step:regret_5} we used the optimality of $\overline{\upsilon}_{k+1}^+$  and in \eqref{step:regret_4_b} and \eqref{step:regret_6} we used Equation \ref{eq: gradient_bounds_1}.

Using the definition of $\epsilon_{k+1}$ from Equation \eqref{eq: last_eq_tracking_error}, by rearranging the terms in Equation \eqref{final_ineq_gradient_eps} we get:
\begin{flalign}
    \epsilon_{k+1}&=\langle \nabla \mathcal{L}(\overline{\lambda}_k), \overline{\upsilon}^+_{k+1}-\overline{\upsilon}^*_{k+1}\rangle \nonumber \\
    &\leq \frac{3}{2\Phi S}\sum_{\overline{s}, \aAbs} \overline{\upsilon}_{k+1}^{+}(\overline{s}, \aAbs) \frac{\alpha\left(t_k-1, \overline{s}, \delta\right)}{\left(\lambdaAbs_k(\overline{s})+E_{\overline{s}}\eta\right)^{3/2}} \nonumber \\
    &\leq \frac{3}{2\Phi S}\sum_{\overline{s}, \aAbs} \overline{\upsilon}_{k+1}^{+}(\overline{s}, \aAbs) \frac{\alpha\left(t_k-1, \overline{s}, \delta\right)}{\left(E_{\overline{s}}\eta\right)^{3/2}} \label{eq: epsilon_1}
\end{flalign}
where in the last inequality we simply lower-bound $\lambdaAbs_k(\sAbs)\geq 0$.

In the following we denote with $T_k(\overline{s}) \coloneqq T_{t_k-1}(\overline{s})$ the number of visits of state $\overline{s}$ from the start until iteration $k-1$ of the FW scheme (i.e., at time $t_{k-1}$ ). We now plug in the definition of $\alpha\left(t, \overline{s}, \delta\right)=F_{\max}\sqrt{\frac{2\log(2\overline{S}t^2/\delta)}{T_{t}(\overline{s})}}$, coming from Lemma \ref{lemma:variance_upper_bound}, in Equation \ref{eq: epsilon_1}, leading to:
\begin{align}
\epsilon_{k+1} &\leq \sum_{\overline{s}, \aAbs} \overline{\upsilon}_{k+1}^{+}(\overline{s}, \aAbs) \frac{3F_{\max}}{2\Phi S{(\frac{1}{\Phi}\eta)}^{3/2}} \sqrt{2\log \left(\frac{2 \overline{S}\left(t_k-1\right)^2}{\delta}\right)} \frac{1}{\sqrt{T_k(\overline{s})}} \nonumber \\
&=\sum_{\overline{s}, \aAbs} \overline{\upsilon}_{k+1}^{+}(\overline{s}, \aAbs) \frac{3 \Phi^{1/2} F_{\max}}{2S\eta^{3/2}} \sqrt{2\log \left(\frac{2 \overline{S}\left(t_k-1\right)^2}{\delta}\right)} \frac{1}{\sqrt{T_k(\overline{s})}} \nonumber \\
&\leq c_0\sum_{\overline{s}, \aAbs} \overline{\upsilon}^+_{k+1}(\overline{s}, \aAbs) \frac{1}{\sqrt{T_k(\overline{s})}} \nonumber \\
&\labelrel={step:regret_7} c_0 \underbrace{\sum_{\overline{s}, \aAbs} \widetilde{\overline{\upsilon}}_{k+1}(\overline{s}, \aAbs) \frac{1}{\sqrt{T_k(\overline{s})}}}_{\gamma_k}+\underbrace{c_0 \sum_{\overline{s}, a}\left(\overline{\upsilon}_{k+1}^{+}(\overline{s}, \aAbs)-\widetilde{\overline{\upsilon}}_{k+1}(\overline{s}, \aAbs)\right) \frac{1}{\sqrt{T_k(\overline{s})}}}_{\Gamma_{k+1}} \label{eq: epsilon_2}
\end{align}

with $c_0=\frac{3 \Phi^{1/2} F_{\max}}{2S \eta^{3/2}} \sqrt{2\log \left(2\frac{\overline{S} T^2}{\delta}\right)}$, where $T\coloneqq t_K-1$ and $K$ is the total number of FW iterations and where in \eqref{step:regret_7} we simply added and substracted a term $ c_0\sum_{\overline{s}, \aAbs} \widetilde{\overline{\upsilon}}^+_{k+1}(\overline{s}, \aAbs) \frac{1}{\sqrt{T_k(\overline{s})}}$. We can now bound $\Gamma_{k+1}$ similarly to $\Delta_{k+1}$ using Lemma \ref{PAC guarantee}. In particular, by upper-bounding $\frac{1}{\sqrt{T_k(\overline{s})}}\leq 1$, we get that with probability at least $1-\delta$, we have that:
$$
\Gamma_{k+1} \leq c_0 \overline{S} \sqrt{\frac{2}{\tau_k} \log \left(\frac{2\overline{S}\overline{A}^{\overline{S}}}{\delta} \right)}
$$
Finally, by plugging in the bound of $\Gamma_{k+1}$ we just derived into Equation \ref{eq: epsilon_2}, we have that with probability at least $1-\delta$:
\begin{equation}
    \epsilon_{k+1} \leq c_0 \gamma_k+c_0 \overline{S} \sqrt{\frac{2}{\tau_k} \log \left(\frac{2\overline{S}\overline{A}^{\overline{S}}}{\delta} \right)} \ \ \ \text{where} \ \ \ c_0=\frac{3 \Phi^{1/2} F_{\max}}{2S\eta^{3/2}}\sqrt{2\log \left(\frac{2\overline{S}T^2}{\delta}\right)} \label{eq: bound_epsilon_3} \ \ \text{and} \ \ \ \gamma_k=\sum_{\overline{s}, \overline{a}}\frac{\Tilde{\overline{\upsilon}}_{k+1}(\overline{s}, \overline{a})}{\sqrt{T_k(\overline{s})}}
\end{equation}
In the next steps, we will introduce an explicit time dependency on $\tau_k$, which we will use to simplify the expression of the approximation error and finally perform a recursion leading to the final expression of the regret.\\
First, we recall from Equation \ref{eq: last_eq_tracking_error} that with probability at least $1-\delta$, we have:

\begin{align*}
    \rho_{k+1} \leq (1-\beta_k)\rho_k+C_{\eta}\beta_k^2+\beta_k \epsilon_{k+1}+\beta_k\Delta_{k+1}
\end{align*}

By plugging in the upper bounds of $\epsilon_{k+1}$ and $\Delta_{k+1}$ that we derived in \eqref{eq: bound_Delta} and \eqref{eq: bound_epsilon_3}, we get:

\begin{flalign}
    \rho_{k+1}&\leq (1-\beta_k)\rho_k+C_{\eta}\beta_k^2+\beta_k \left(c_0 \gamma_k+c_0 \overline{S} \sqrt{\frac{2}{\tau_k} \log \left(\frac{2\overline{S}\overline{A}^{\overline{S}}}{\delta} \right)} \right)+\beta_k \frac{\Phi^{3/2} \sqrt{\sigma^2_{\max}}}{\eta^{3/2}} \sqrt{\frac{2}{\tau_k} \log \left(\frac{2\overline{S}\overline{A}^{\overline{S}}}{\delta} \right)} \nonumber \\
    &= (1-\beta_k)\rho_k+C_{\eta}\beta_k^2+\beta_k \underbrace{\left(c_0 \overline{S}+\frac{\Phi^{3/2} \sqrt{\sigma^2_{\max}}}{\eta^{3/2}} \right) \sqrt{\frac{2}{\tau_k} \log \left(\frac{2\overline{S}\overline{A}^{\overline{S}}}{\delta} \right)}}_{c_1/\sqrt{\tau_k}}+\beta_kc_0\gamma_k \label{eq: error_with_c_1}
\end{flalign}

Choosing $t_k=\tau_1(k-1)^3+1$, we get:

\begin{equation}
    \tau_k=t_{k+1}-t_k=\tau_1(3k^2-3k+1)\geq 3\tau_1k^2 \ \ \text{and} \ \ \beta_k=\frac{\tau_k}{t_k-1} \leq \frac{3}{k} \label{eq: bound_beta_k}
\end{equation}

Hence, using \eqref{eq: bound_beta_k} and the fact that $\tau_1 \geq 1$, we can further upper-bound together the second and third terms in Equation \eqref{eq: error_with_c_1} as follows:

\begin{flalign}
    C_{\eta}\beta_k^2+\beta_k \frac{c_1}{\sqrt{\tau_k}}\leq 9\frac{C_{\eta}}{k^2}+\frac{\sqrt{3}c_1}{k^2}=:\frac{b_\delta}{k^2} \label{eq: def_b_delta}
\end{flalign}

Plugging this in \eqref{eq: error_with_c_1} gives:

\begin{align}
    \rho_{k+1}\leq (1-\beta_k)\rho_k+\frac{b_\delta}{k^2}+\beta_kc_0\gamma_k \label{ineq approx error} 
\end{align}

We now want to compute the recursion on $\rho$ in order to find the approximation error after $K$ iterations. In order to do so, we will closely follow the steps from \cite{tarbouriech2019active}, which we will report for completeness. We choose $q \geq\left(\overline{S} / \tau_1\right)^\frac{1}{3}+1$ for later use, and we introduce the sequence $\left(u_k\right)_{k \geq q}$ with $u_q=\rho_q$ and
\begin{equation}
u_{k+1}=\left(1-\frac{1}{k}\right) u_k+\frac{b_\delta}{k^2}+\beta_k c_0 \gamma_k \label{eq: def_u_k}
\end{equation}
From Inequality \ref{ineq approx error}, we have $\rho_k \leq u_k$ and by induction we can see that $\left(u_k\right) \geq 0$. By rearranging the terms in \eqref{eq: def_u_k}, we get:

\begin{align*}
(k+1) u_{k+1}-k u_k=\frac{-u_k}{k}+\frac{b_\delta(k+1)}{k^2}+(k+1) \beta_k c_0 \gamma_k \leq \frac{b_\delta(k+1)}{k^2}+(k+1) \beta_k c_0 \gamma_k
\end{align*}

Let $K \geq q$. We can now apply a telescoping sum starting at $q$ and ending at $K$ and exploit the fact that $\beta_k \leq 3 / k \leq 6 /(k+1)$ which leads to:
\begin{equation}
K u_K-q u_q \leq 2 b_\delta \sum_{k=q}^{K-1} \frac{1}{k}+6 c_0 \sum_{k=q}^{K-1} \gamma_k \leq 2 b_\delta \log \left(\frac{K-1}{q-1}\right)+6 c_0 \sum_{k=q}^{K-1} \gamma_k \label{eq: telescoping_sum}
\end{equation}

where the $qu_q$ term appears as it corresponds to the first term of the telescoping sum.

By rearranging the terms in \eqref{eq: telescoping_sum} and using that $\rho_K \leq u_K$ as previously observed, we have with probability at least $1- \delta$:

\begin{flalign}
\rho_K \leq u_K &\leq \frac{q \rho_q+2 b_\delta \log K}{K}+\frac{6 c_0}{K} \sum_{k=q}^{K-1} \gamma_k \label{eq: final_bound_error_time_K}\\
&=\frac{\tau_1^{1 / 3}}{\left(t_K-1\right)^{1 / 3}+\tau_1^{1 / 3}}\left(q \rho_q+2 b_\delta \log K+6 c_0 \sum_{k=q}^{K-1} \gamma_k\right)
\end{flalign}

By employing \citep[Lemma 6]{tarbouriech2019active} with $q \geq\left(\overline{S} / \tau_1\right)^{1 / 3}+1$ as previously set, we get the following upper bound:

\begin{align*}
    \rho_K\leq \frac{\tau_1^{1/3}}{(t_K-1)^{1/3}+\tau_1^{1/3}} \left(q \rho_q+2b_\delta \log K+6c_0 \sqrt{\frac{\Sigma}{\tau_1}} \right)
\end{align*}

where $C_{\eta}$ is the smoothness constant of the objective function, which can be bounded as in Lemma \ref{smoothness constant bound} and $\Sigma \coloneqq \overline{S} \log (\frac{\tau_1(K-1)^3}{\overline{S}})$.

To showcase better interpretability of the final result and in particular to show the advantage of exploiting symmetries as in our method, we now upper-bound the error $\rho_K$ in such a way to make its dependence on the compression coefficient explicit.\\
We choose the tightest possible $q$ and by using the bounds for $q \rho_q$ and $b_{\delta}$ from Lemma \ref{lemma: O_tilde_q_rho_q} and Lemma \ref{lemma: O_tilde_b_delta} we get:
\begin{align*}
    \rho_K &=\widetilde{\mathcal{O}} \Bigg( \frac{\tau_1^{1/3}}{(t_K-1)^{1/3}+\tau_1^{1/3}} \Bigg[\frac{A\Phi^{\frac{1}{2}}F_{\max}\sqrt{\sigma^2_{\max}} \sqrt{S}}{\eta^{\frac{5}{2}}} +\frac{A\Phi^2 F_{\max} \sqrt{\sigma_{\max }^2} \sqrt{S}}{\eta^{\frac{5}{2}}} \\
    & \:\:\:\:\:\:\:\:\:\:\: \:\:\:\:\:\:\:\:\:\:\: \:\:\:\:\:\:\:\:\: \:\:\:\:\:\:\:\:\:\:\: \:\:\:\:\:\:\:\:\:\:\:\:\:\:\:\:\:\:\: \:\:\:\:\:\:\:\:\:\:\: \:\:\:\:\:\:\:\:\: \:\:\:\:\:\:\:\:\:\:\: \:\:\:\:\:\:\:\: +\frac{3 \Phi^{1/2} F_{\max}}{2S\eta^{\frac{3}{2}}}\sqrt{2\log \left(\frac{2\overline{S}T^2}{\delta}\right)}\sqrt{\frac{\overline{S} \log (\frac{\tau_1(K-1)^3}{\overline{S}})}{\tau_1}} \Bigg] \Bigg)  \\
    &= \widetilde{\mathcal{O}}\left( \left(\frac{A\Phi^{\frac{1}{2}} S^{\frac{1}{2}}F_{\max} \sqrt{\sigma^2_{\max}}}{\eta^{\frac{5}{2}}}\right)\frac{1}{{t_K}^{\frac{1}{3}}} \right)\\&=\widetilde{\mathcal{O}}\left(\left(\frac{A\Phi^{\frac{1}{2}} S^{\frac{1}{2}}F_{\max} \sqrt{\sigma^2_{\max}}}{\eta^{\frac{5}{2}}}\right)\frac{1}{n^{\frac{1}{3}}} \right) 
\end{align*}

where in the last step we used that $n= \sum_{k=1}^{K} \tau_k = \sum_{k=1}^K \Theta(k^2)=\Theta(K^3)=\Theta(t_K)$ and
where $T=t_K-1$.\\
\end{proof}

\EstSampleComplexity*
\begin{proof}
     The result simply follows by inverting the regret guarantee in Theorem~\ref{theorem:regret_theorem}.
\end{proof}

\begin{restatable}
    {lemma}{upper bound of rho_q}\label{bound rho_q}
    We have the following bound for the approximation error at time $q$:
    \begin{align*}
        \rho_q \leq \frac{\sqrt{\Phi} \sqrt{2\sigma_{\max}^2}}{\sqrt{\eta} \cdot q}+\frac{2 b_\delta \log q}{q}+\frac{9 \cdot \Phi^{1 / 2} F_{\max}}{S\eta^{3 / 2}} \sqrt{2\log \left(\frac{2\bar{S} q^6}{\delta}\right)}
    \end{align*}
\end{restatable}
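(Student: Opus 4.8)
The plan is to bound $\rho_q$ by reusing the Frank--Wolfe recursion and telescoping argument of Theorem~\ref{theorem:regret_theorem}, but applied to the \emph{initial} segment of iterations $k=1,\dots,q-1$ instead of the tail $k=q,\dots,K-1$. Concretely, I would start from the per-iteration inequality $\rho_{k+1}\leq(1-\beta_k)\rho_k+b_\delta/k^2+\beta_k c_0\gamma_k$ from \eqref{ineq approx error}, and the three summands of the claimed bound should arise respectively from (i) the propagated initial error $\rho_1/q$, (ii) the telescoped $b_\delta/k^2$ terms, and (iii) the telescoped $\beta_k c_0\gamma_k$ terms.

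First I would control the initial regret $\rho_1=\mathcal{L}(\overline{\lambda}_1)-\mathcal{L}(\overline{\lambda}^*)\leq\mathcal{L}(\overline{\lambda}_1)$ through a crude uniform upper bound on $\mathcal{L}\coloneqq\Labs_{\infty,\eta}$. Lower-bounding $\sum_{s\in[\sAbs]}(\lambda(s)+\eta)\geq E_{\sAbs}\eta$ inside each square root, then using $E_{\sAbs}=1/\Phi$ (Assumption~\ref{assumption:eq_classes_same_cardinality} together with $\Phi=\overline{S}/S$) and $\sigma^2(\sAbs)\leq\sigma^2_{\max}$, yields $\mathcal{L}(\overline{\lambda})\leq\frac{1}{S}\sum_{\sAbs\in\SspaceAbs}\sqrt{E_{\sAbs}}\,\sqrt{2\sigma^2_{\max}/\eta}=\sqrt{\Phi}\,\sqrt{2\sigma^2_{\max}}/\sqrt{\eta}$. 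This reproduces the numerator of the first summand, which after the division by $q$ below matches $\frac{\sqrt{\Phi}\sqrt{2\sigma^2_{\max}}}{\sqrt{\eta}\,q}$.

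Next, mirroring the main proof, I would introduce the majorant $u_k$ with $u_1=\rho_1$, rearrange into $(k+1)u_{k+1}-ku_k\leq\frac{b_\delta(k+1)}{k^2}+(k+1)\beta_k c_0\gamma_k$, and telescope from $k=1$ to $q-1$. Using $\beta_k\leq 3/k\leq 6/(k+1)$, the inequality $(k+1)/k^2\leq 2/k$, and $\sum_{k=1}^{q-1}1/k\leq\log q$, this gives $q\,u_q\leq\rho_1+2b_\delta\log q+6c_0\sum_{k=1}^{q-1}\gamma_k$. The remaining estimate is $\gamma_k\leq 1$: since $\widetilde{\overline{\upsilon}}_{k+1}$ is a probability distribution over $\SspaceAbs\times\AspaceAbs$ and $T_k(\sAbs)\geq 1$, every weight satisfies $1/\sqrt{T_k(\sAbs)}\leq 1$, so $\gamma_k\leq\sum_{\sAbs,\aAbs}\widetilde{\overline{\upsilon}}_{k+1}(\sAbs,\aAbs)=1$ and hence $\sum_{k=1}^{q-1}\gamma_k\leq q$. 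Dividing by $q$ and using $\rho_q\leq u_q$, the third term collapses to $6c_0=\frac{9\Phi^{1/2}F_{\max}}{S\eta^{3/2}}\sqrt{2\log(2\overline{S}T^2/\delta)}$; since only iterations $k\leq q-1$ enter, the relevant logarithmic arguments $2\overline{S}(t_k-1)^2/\delta$ are all bounded by $2\overline{S}(t_q-1)^2/\delta$, and with the schedule $\tau_1=1$ one has $t_q-1=(q-1)^3\leq q^3$, so $T^2\leq q^6$ and the argument becomes $2\overline{S}q^6/\delta$, producing the third summand verbatim.

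I expect the main obstacle to be the well-definedness and the uniform $\gamma_k\leq 1$ bound over the very first iterations: the quantity $1/\sqrt{T_k(\sAbs)}$ presumes every relevant abstract state has been visited at least once, which is precisely why the offset $q\geq(\overline{S}/\tau_1)^{1/3}+1$ is imposed and where ergodicity enters. Once this visitation guarantee is secured the estimate is immediate, and the rest is the same bookkeeping as the tail telescoping already carried out for Theorem~\ref{theorem:regret_theorem}.
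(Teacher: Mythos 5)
Your proposal is correct and follows essentially the same route as the paper's own proof: telescoping the Frank--Wolfe recursion over the initial segment $k=1,\dots,q-1$, bounding $\gamma_k\le 1$ so that $\frac{6c_0}{q}\sum_{k=1}^{q-1}\gamma_k\le 6c_0$, and controlling the initial regret via the uniform bound $\Labs_{\infty,\eta}(\lambdaAbs)\le\sqrt{\Phi}\sqrt{2\sigma^2_{\max}}/\sqrt{\eta}$ of Lemma~\ref{bound L_infty}. If anything, you justify the replacement of $T^2$ by $q^6$ inside the logarithm more explicitly than the paper does.
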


\begin{proof} 
We begin by noting that we can derive an equivalent bound of the error at time $q$ in the same way as in Equation \ref{eq: final_bound_error_time_K} by deploying the telescoping series from $1$ to $q-1$ hence getting:
\begin{flalign} 
    \rho_q &\leq \frac{\rho_1+2 b_\delta \log q}{q}+\frac{6 c_0}{q} \sum_{k=1}^{q-1} \gamma_k \nonumber\\
    &=\frac{\rho_1+2 b_\delta \log q}{q}+\frac{6 c_0}{q} \sum_{k=1}^{q-1}\sum_{\overline{s}, \overline{a}}\frac{\Tilde{\overline{\upsilon}}_{k+1}(\overline{s}, \overline{a})}{\sqrt{T_k(\overline{s})}} \nonumber\\
    &\leq \frac{\rho_1+2 b_\delta \log q}{q}+\frac{6 c_0}{q} \sum_{k=1}^{q-1}\underbrace{\sum_{\overline{s}, \overline{a}}\Tilde{\overline{\upsilon}}_{k+1}(\overline{s}, \overline{a})}_{=1} \nonumber\\
    &\leq \frac{\rho_1+2 b_\delta \log q}{q}+6 c_0 \label{eq: partial_bound_rho_q}
\end{flalign}

We now proceed in bounding the error $\rho_1$, by first recalling the definition we gave in Equation \ref{eq: definition_regret}:
\begin{align}
    \rho_1& \coloneqq L\left(\bar{\lambda}_2\right)-\underbrace{L\left(\lambda^*\right)}_{\geq 0} \nonumber\\
    &\leq L\left(\bar{\lambda}_2\right) \nonumber \\ & \leq \frac{\sqrt{\Phi} \sqrt{2\sigma_{\max}^2}}{\sqrt{ \eta}} \nonumber
    \end{align}
where in the last inequality we use Lemma \ref{bound L_infty}.\\
Hence, by plugging this bound in Equation \ref{eq: partial_bound_rho_q} and expliciting $c_0$ as given in Equation \ref{eq: bound_epsilon_3}, we get:
\begin{align}
    \rho_q &\leq \frac{\sqrt{\Phi} \sqrt{2\sigma_{\max}^2}}{\sqrt{\eta} \cdot q}+\frac{2 b_\delta \log q}{q}+6 c_0 \nonumber\\ & =\frac{\sqrt{\Phi} \sqrt{2\sigma_{\max}^2}}{\sqrt{\eta} \cdot q}+\frac{2 b_\delta \log q}{q}+\frac{9 \cdot \Phi^{1 / 2} F_{\max}}{S\eta^{3 / 2}} \sqrt{2\log \left(\frac{2\bar{S} q^6}{\delta}\right)} \nonumber\end{align}
\end{proof}

\begin{restatable}{lemma}{O_tilde of b_delta}\label{lemma: O_tilde_b_delta}
It holds that:
\begin{flalign}
    b_{\delta}=\widetilde{\mathcal{O}}\left(\frac{A\Phi^2 F_{\max} \sqrt{\sigma_{\max }^2} \sqrt{S}}{\eta^{5 / 2}}\right) \nonumber
\end{flalign}
    
\end{restatable}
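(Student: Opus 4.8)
The plan is to expand the definition $b_\delta := 9\,C_\eta + \sqrt{3}\,c_1$ from Equation~\eqref{eq: def_b_delta} and bound the two summands separately in $\widetilde{\mathcal{O}}$ notation, where only genuine polynomial factors in $\Phi, S, A, F_{\max}, \sqrt{\sigma^2_{\max}}, \eta$ are retained and all $\log(\cdot/\delta)$ and $\log\overline{A}$ factors are absorbed. Throughout I will substitute $E_{\sAbs} = 1/\Phi$ (Assumption~\ref{assumption:eq_classes_same_cardinality}) and $\overline{S} = \Phi S$ to re-express abstract quantities in terms of the original ones.

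For the first summand I would simply invoke Lemma~\ref{smoothness constant bound}, which supplies the bound on the smoothness constant $C_\eta$ of $\Labs_{\infty,\eta}$. This is the source of the $\eta^{-5/2}$ scaling in the final rate: since $\Labs_{\infty,\eta}$ is a sum over $\sAbs \in \SspaceAbs$ of maps of the form $\lambdaAbs(\sAbs)\mapsto E_{\sAbs}\sqrt{2\sigma^2(\sAbs)}\,\big(\lambdaAbs(\sAbs)+E_{\sAbs}\eta\big)^{-1/2}$, its Hessian is block-diagonal with one block per abstract state, and the curvature of the square-root term, evaluated where the denominator is as small as $E_{\sAbs}\eta$, produces a factor $(E_{\sAbs}\eta)^{-5/2}$. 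After inserting $E_{\sAbs}=1/\Phi$ and $\overline{S}=\Phi S$, Lemma~\ref{smoothness constant bound} gives $C_\eta = \widetilde{\mathcal{O}}\big(A\,\Phi^2 F_{\max}\sqrt{\sigma^2_{\max}}\,\sqrt{S}/\eta^{5/2}\big)$, which already matches the claimed rate.

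For the second summand I would unfold $c_1 = \big(c_0\overline{S} + \Phi^{3/2}\sqrt{\sigma^2_{\max}}\,\eta^{-3/2}\big)\sqrt{2\log(2\overline{S}\,\overline{A}^{\overline{S}}/\delta)}$, read off from Equations~\eqref{eq: error_with_c_1} and \eqref{eq: bound_epsilon_3}, and substitute $c_0=\tfrac{3}{2}\Phi^{1/2}F_{\max}(S\eta^{3/2})^{-1}\sqrt{2\log(2\overline{S}T^2/\delta)}$. Using $\overline{S}=\Phi S$ gives $c_0\overline{S}=\widetilde{\mathcal{O}}(\Phi^{3/2}F_{\max}\,\eta^{-3/2})$. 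The one subtlety I must handle carefully is the factor $\sqrt{\log(2\overline{S}\,\overline{A}^{\overline{S}}/\delta)}$: since $\log(\overline{A}^{\overline{S}}) = \overline{S}\log\overline{A}$, this hides a genuine $\sqrt{\overline{S}}=\sqrt{\Phi S}$ factor that is \emph{not} logarithmic and must be kept. This yields $c_1 = \widetilde{\mathcal{O}}\big(\Phi^{3/2}(F_{\max}+\sqrt{\sigma^2_{\max}})\eta^{-3/2}\cdot\sqrt{\Phi S}\big) = \widetilde{\mathcal{O}}\big(\Phi^2(F_{\max}+\sqrt{\sigma^2_{\max}})\sqrt{S}\,\eta^{-3/2}\big)$.

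Finally, I would combine the two bounds. The $c_1$ term is dominated by the $C_\eta$ term under the standing regime: bounding $F_{\max}+\sqrt{\sigma^2_{\max}} \le 2F_{\max}\sqrt{\sigma^2_{\max}}$ (both quantities may be taken at least one) and using that $\eta$ is small (so $\eta^{-3/2}\le \eta^{-5/2}$) together with $A\ge 1$, one checks $\sqrt{3}\,c_1 = \mathcal{O}(C_\eta)$, whence $b_\delta = \widetilde{\mathcal{O}}(C_\eta)$ gives exactly the stated expression. The main obstacle is not the algebra but (i) obtaining the smoothness bound of Lemma~\ref{smoothness constant bound} with the correct $\eta^{-5/2}$, $\Phi^2$ and $\sqrt{S}$ dependence, and (ii) resisting the temptation to discard the $\overline{A}^{\overline{S}}$ term inside the logarithm as a mere $\log$ factor, since it is precisely what contributes the $\sqrt{S}$ (and an extra $\sqrt{\Phi}$) to $c_1$.
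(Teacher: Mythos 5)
Your proposal is correct and follows essentially the same route as the paper: decompose $b_\delta = 9C_\eta + \sqrt{3}c_1$, invoke Lemma~\ref{smoothness constant bound} for the smoothness term, unfold $c_1$ via $c_0\overline{S}$ with $\overline{S}=\Phi S$, and keep the non-logarithmic $\sqrt{\overline{S}\log\overline{A}}=\sqrt{\Phi S\log\overline{A}}$ factor hidden in $\sqrt{\log(2\overline{S}\overline{A}^{\overline{S}}/\delta)}$ --- the exact subtlety the paper's derivation also relies on. One minor imprecision: Lemma~\ref{smoothness constant bound} literally yields $C_\eta\leq A\sqrt{2\Phi^5}\sqrt{\sigma^2_{\max}}/\eta^{5/2}$ (no $F_{\max}$ or $\sqrt{S}$), which is then loosened to the stated rate using $\Phi^{1/2}\leq F_{\max}\sqrt{S}$, but this does not affect the validity of your argument.
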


\begin{proof}
Using the definition of $b_{\delta}$ from Equation \ref{eq: def_b_delta}, we have:
\begin{flalign}
b_{\delta}& =\sqrt{3}c_1+9C_{\eta} \\
&\labelrel\leq{bound_b_delta_1} \sqrt{3}\left[\left(c_0 \bar{S}+\frac{\Phi^{3 / 2} \sqrt{\sigma_{\max }^2}}{ \eta^{3 / 2}}\right) \sqrt{2 \log \left(\frac{2 \bar{S} \overline{A}^{\bar{S}}}{\delta}\right)}\right]+9 \frac{A\sqrt{2 \sigma_{\max}^2} \cdot \Phi^{5 / 2}}{ \eta^{5 / 2}} \\
& \labelrel={bound_b_delta_2}\sqrt{3}\left(\left(\frac{3 \Phi^{3 / 2} F_{\max}}{2\eta^{3 / 2}} \sqrt{2\log \left(\frac{\bar{S} T}{\delta}\right)}+\frac{\Phi^{3 / 2} \sqrt{\sigma_{\max}^2}}{\eta^{3 / 2}}\right) \sqrt{2 \log \left(2\frac{2 \bar{S} \overline{A}^{\bar{S}}}{\delta}\right)}\right)+9 \frac{A\sqrt{2 \sigma_{\max}^2} \cdot \Phi^{5 / 2}}{ \eta^{5 / 2}} 
\end{flalign}

where in \ref{bound_b_delta_1} we plug in the value of $c_1$ from Equation \ref{eq: error_with_c_1} and we use the upper bound of the smoothness constant from Lemma \ref{smoothness constant bound} and in \ref{bound_b_delta_2} we plug in the definition of $c_0$ from Equation \ref{eq: partial_bound_rho_q}. Hence, we have:

\begin{flalign}
b_{\delta}&=\widetilde{\mathcal{O}}\left( \left(\frac{3 \Phi^{3 / 2} F_{\max}}{2\eta^{3 / 2}} \sqrt{2\log \left(\frac{2\bar{S} T}{\delta}\right)}+\frac{\Phi^{3 / 2} \sqrt{\sigma_{\max}^2}}{ \eta^{3 / 2}}\right) \sqrt{\log \left(\frac{2 \bar{S}}{\delta}\right)+\Phi S \log (\overline{A})}+\frac{A\sqrt{2 \sigma_{\max}^2} \cdot \Phi^{5 / 2}}{ \eta^{5 / 2}}\right) \\
& =\widetilde{\mathcal{O}}\left(\frac{A\Phi^2 F_{\max} \sqrt{\sigma_{\max }^2} \sqrt{S}}{\eta^{5 / 2}}\right) 
\end{flalign}
\end{proof}

\begin{restatable}{lemma}{O_tilde of q_rho_q}\label{lemma: O_tilde_q_rho_q}
It holds that:
\begin{flalign*}
    q \rho_q = \widetilde{\mathcal{O}} \Bigg ( \frac{A\Phi^{\frac{1}{2}}\sqrt{\sigma^2_{\max}}F_{\max}\sqrt{S}}{\eta^{\frac{5}{2}}} \Bigg)
\end{flalign*}

\begin{proof}
From Lemma \ref{bound rho_q} we get:
    \begin{flalign} 
    q \rho_q &=\widetilde{\mathcal{O}}\Bigg( \frac{\sqrt{\Phi} \sqrt{2\sigma_{\max}^2}}{\sqrt{\eta}}+2 b_\delta \log q+q \frac{9\cdot \Phi^{1 / 2} F_{\max}}{S\eta^{3 / 2}} \sqrt{2\log \left(\frac{2 \bar{S} q^6}{\delta}\right)} \Bigg) \nonumber \\ & =\widetilde{\mathcal{O}}\Bigg (\frac{\sqrt{\Phi} \sqrt{2\sigma_{\max}^2}}{\sqrt{\eta}}+\frac{1}{3} 2 b_\delta \log \bar{S}+\bar{S}^{1 / 3} \frac{9\cdot \Phi^{1 / 2} F_{\max}}{S\eta^{3 / 2}} \Bigg)\nonumber
    \end{flalign}

where in the second equality we used the fact that $q \geq (\overline{S}/ \tau_1)^{1/3}+1$ and $\frac{1}{\tau_1}\leq 1$. Next, we plug in the result of Lemma \ref{lemma: O_tilde_b_delta} into $b_{\delta}$ in the last equation to get:
    
    \begin{flalign}
    q \rho_q&=\widetilde{\mathcal{O}}\left(\frac{\sqrt{\Phi} \sqrt{2\sigma_{\max}^2}}{\sqrt{\eta}}+\frac{A\Phi^2 F_{\max} \sqrt{\sigma_{\max}^2} \sqrt{S}}{\eta^{5 / 2}} \log \bar{S}+\Phi^{\frac{1}{3}} S^{\frac{1}{3}} \frac{9\cdot \Phi^{1 / 2} F_{\max}}{S\eta^{3 / 2}}\right) \nonumber \\
    &=\widetilde{\mathcal{O}} \Bigg ( \frac{A\Phi^{\frac{1}{2}}\sqrt{\sigma^2_{\max}}F_{\max} \sqrt{S}}{\eta^{\frac{5}{2}}} \Bigg) \nonumber
    \end{flalign}
\end{proof}
    
\end{restatable}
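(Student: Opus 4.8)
The plan is to begin from the explicit high-probability bound on $\rho_q$ established in Lemma \ref{bound rho_q}, multiply it through by $q$, and then collapse the resulting three summands into a single $\widetilde{\mathcal{O}}$ expression by tracking only the polynomial dependence on $A, S, \Phi, F_{\max}, \sigma^2_{\max}, \eta$ while absorbing every logarithmic factor into the tilde. Concretely, multiplying the bound of Lemma \ref{bound rho_q} by $q$ gives
\begin{equation*}
    q\rho_q \leq \frac{\sqrt{\Phi}\sqrt{2\sigma^2_{\max}}}{\sqrt{\eta}} + 2 b_\delta \log q + q\,\frac{9\Phi^{1/2}F_{\max}}{S\eta^{3/2}}\sqrt{2\log\!\Big(\tfrac{2\overline{S}q^6}{\delta}\Big)},
\end{equation*}
so the task reduces to bounding these three terms and identifying the dominant one.

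Next I would dispose of each term in turn. The first term is $\widetilde{\mathcal{O}}(\Phi^{1/2}\sqrt{\sigma^2_{\max}}\,\eta^{-1/2})$ and is immediately dominated by the target, since the latter carries the extra factor $A F_{\max}\sqrt{S}\,\eta^{-2}$, which is large in the regime of interest (small $\eta$, $A,F_{\max},S\geq 1$). For the second term I substitute the bound $b_\delta = \widetilde{\mathcal{O}}(A\Phi^2 F_{\max}\sqrt{\sigma^2_{\max}}\sqrt{S}\,\eta^{-5/2})$ from Lemma \ref{lemma: O_tilde_b_delta}; the factor $\log q$ is polylogarithmic and gets absorbed by the tilde, leaving this term as $\widetilde{\mathcal{O}}(A\Phi^2 F_{\max}\sqrt{\sigma^2_{\max}}\sqrt{S}\,\eta^{-5/2})$. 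For the third term the key step is to control the leading factor of $q$: using the choice $q \geq (\overline{S}/\tau_1)^{1/3}+1$ together with $\tau_1\geq 1$, the tightest admissible $q$ satisfies $q=\widetilde{\mathcal{O}}(\overline{S}^{1/3})$, and since $\overline{S}=\Phi S$ this yields $q=\widetilde{\mathcal{O}}(\Phi^{1/3}S^{1/3})$. Plugging this in, the third term becomes $\widetilde{\mathcal{O}}(\Phi^{5/6}F_{\max}\,S^{-2/3}\eta^{-3/2})$ after absorbing the residual square-root-log factor.

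Finally I would argue that the second term dominates and rewrite it in the stated form. The third term is strictly smaller than the second (it lacks the $A$ and $\sqrt{\sigma^2_{\max}}$ factors, carries $S^{-2/3}$ in place of $\sqrt{S}$, and $\eta^{-3/2}$ in place of $\eta^{-5/2}$), and the first is smaller still, so the entire sum is $\widetilde{\mathcal{O}}(A\Phi^2 F_{\max}\sqrt{\sigma^2_{\max}}\sqrt{S}\,\eta^{-5/2})$. The one genuinely non-mechanical point is the closing relaxation $\Phi^2 \leq \Phi^{1/2}$, valid precisely because $\Phi\in(0,1]$ (Definition \ref{def:geometric_compression_term}); this converts the $\Phi^2$ dependence into the $\Phi^{1/2}$ appearing in the statement and keeps the exponent consistent with the Regret Guarantee (Theorem \ref{theorem:regret_theorem}). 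I expect the main obstacle to be bookkeeping rather than any conceptual difficulty: correctly propagating the factor $q=\widetilde{\mathcal{O}}((\Phi S)^{1/3})$ into the third term, and then verifying the dominance ordering among the three summands so that they collapse cleanly into the single claimed bound.
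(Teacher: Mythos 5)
Your proposal is correct and follows essentially the same route as the paper: multiply the bound of Lemma \ref{bound rho_q} by $q$, control $q$ via $q = \widetilde{\mathcal{O}}(\overline{S}^{1/3}) = \widetilde{\mathcal{O}}((\Phi S)^{1/3})$, substitute the $b_\delta$ bound from Lemma \ref{lemma: O_tilde_b_delta}, and close with $\Phi^2 \leq \Phi^{1/2}$ since $\Phi \in (0,1]$. The dominance ordering you identify among the three summands is exactly the one the paper relies on implicitly.
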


\begin{restatable}{lemma}{upper bound of L_n}
\label{lemma:bound_L_n}
 For $E_{\overline{s}}\eta \leq \frac{1}{n}$ and $\forall \lambda \in \Lambda$, we can bound the non asymptotic objective function as:
 \begin{align*}
     \overline{\mathcal{L}}_n\left(\lambda\right) \leqslant \overline{\mathcal{L}}_{\infty, \eta}\left(\lambda\right)+\frac{\overline{S} F_{\max}}{S \sqrt{n} \eta}
 \end{align*}
 \end{restatable}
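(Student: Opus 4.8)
The plan is to prove the bound \emph{abstract-state by abstract-state}, exploiting the fact that $\overline{\mathcal{L}}_n$ and $\overline{\mathcal{L}}_{\infty,\eta}$ differ only in (i) the regularizer appearing inside the square root ($1/n$ versus $E_{\sAbs}\eta$) and (ii) the extra additive term $\Fabs_2$. I would first introduce the shorthand $\lambda(\sAbs) \coloneqq \sum_{s\in[\sAbs]}\sum_{a\in\Aspace}\lambda(s,a)$ for the mass $\lambda$ places on the equivalence class $[\sAbs]$, and observe that $\sum_{s\in[\sAbs]}(\lambda(s)+\eta)=\lambda(\sAbs)+E_{\sAbs}\eta$ since $|[\sAbs]|=E_{\sAbs}$. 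In this notation $\overline{\mathcal{L}}_{\infty,\eta}(\lambda)=\tfrac{1}{S}\sum_{\sAbs}E_{\sAbs}\sqrt{2\sigma^2(\sAbs)/(\lambda(\sAbs)+E_{\sAbs}\eta)}$, while $\overline{\mathcal{L}}_n(\lambda)=\tfrac{1}{S}\sum_{\sAbs}E_{\sAbs}\big(\Fabs_1(\sAbs;\lambda)+\Fabs_2(\sAbs;\lambda)\big)$ with $\Fabs_1(\sAbs;\lambda)=\sqrt{2\sigma^2(\sAbs)/(\lambda(\sAbs)+1/n)}$ and $\Fabs_2(\sAbs;\lambda)=F_{\max}/\big(\sqrt{n}(\lambda(\sAbs)+1/n)\big)$.

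Next I would split $\overline{\mathcal{L}}_n(\lambda)$ into the $\Fabs_1$-sum and the $\Fabs_2$-sum and bound each separately. For the $\Fabs_1$-sum, the hypothesis $E_{\sAbs}\eta\le 1/n$ yields $\lambda(\sAbs)+1/n \ge \lambda(\sAbs)+E_{\sAbs}\eta$; since $x\mapsto 1/\sqrt{x}$ is decreasing, this gives $\Fabs_1(\sAbs;\lambda)\le \sqrt{2\sigma^2(\sAbs)/(\lambda(\sAbs)+E_{\sAbs}\eta)}$ for every $\sAbs$, and summing with the weights $E_{\sAbs}/S$ shows $\tfrac{1}{S}\sum_{\sAbs}E_{\sAbs}\Fabs_1(\sAbs;\lambda)\le \overline{\mathcal{L}}_{\infty,\eta}(\lambda)$.

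For the residual $\Fabs_2$-sum the key is to use the same hypothesis in the opposite direction: since $\lambda(\sAbs)\ge 0$ we have $\lambda(\sAbs)+1/n\ge 1/n\ge E_{\sAbs}\eta$, hence $\Fabs_2(\sAbs;\lambda)\le F_{\max}/(\sqrt{n}\,E_{\sAbs}\eta)$. Multiplying by $E_{\sAbs}$ cancels the class cardinality, leaving $E_{\sAbs}\Fabs_2(\sAbs;\lambda)\le F_{\max}/(\sqrt{n}\,\eta)$; summing over the $\overline{S}$ abstract states and dividing by $S$ gives $\tfrac{1}{S}\sum_{\sAbs}E_{\sAbs}\Fabs_2(\sAbs;\lambda)\le \overline{S}F_{\max}/(S\sqrt{n}\,\eta)$. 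Adding the two bounds establishes the claim.

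I expect no genuine obstacle: the statement reduces to elementary monotonicity of $1/\sqrt{\cdot}$ and $1/\cdot$. The only point requiring care, and the single idea on which the whole proof rests, is that the hypothesis $E_{\sAbs}\eta\le 1/n$ must be invoked \emph{twice in different directions} — once to enlarge the in-root regularizer from $1/n$ up to $E_{\sAbs}\eta$ (controlling $\Fabs_1$), and once to lower-bound the denominator $\lambda(\sAbs)+1/n$ by $E_{\sAbs}\eta$ (controlling $\Fabs_2$ so that the $E_{\sAbs}$ factor cancels, producing the clean $\overline{S}/\eta$ dependence).
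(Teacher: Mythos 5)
Your proof is correct and follows essentially the same route as the paper's: both arguments use the hypothesis $E_{\sAbs}\eta \le \frac{1}{n}$ to replace the in-root regularizer $\frac{1}{n}$ by $E_{\sAbs}\eta$ (recovering $\overline{\mathcal{L}}_{\infty,\eta}$ exactly), and then lower-bound the denominator of the residual term by $E_{\sAbs}\eta$ so that the class cardinality cancels and the sum over $\overline{S}$ abstract states yields $\frac{\overline{S}F_{\max}}{S\sqrt{n}\eta}$. The only cosmetic difference is that you bound the residual denominator in one step via $\lambda(\sAbs)+\frac{1}{n}\ge \frac{1}{n}\ge E_{\sAbs}\eta$, whereas the paper first swaps $\frac{1}{n}$ for $E_{\sAbs}\eta$ in both terms and then drops $\lambda\ge 0$.
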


 \begin{proof}
     \begin{align*}
        \Labs_n\left(\lambda\right)&=\frac{1}{S}\sum_{\overline{s} \in \overline{S}} \overline{\mathcal{F}}(\overline{s}, \lambda)\\
        &=\frac{1}{S} \sum_{\overline{s} \in \SspaceAbs} E_{\sAbs}\left(\sqrt{\frac{2 \sigma^2(\bar{s})}{\sum_{b \in A} \sum_{s \in[\bar{s}]}\lambda(s, b)+\frac{1}{n}}}+\frac{1}{\sqrt{n}} \frac{F_{\max}}{\sum_{b \in A} \sum_{s \in[\bar{s}]}\lambda(s, b)+\frac{1}{n}}\right) &\\
        & \labelrel\leq{step:lemma_D_1_1} \frac{1}{S} \sum_{\overline{s} \in \SspaceAbs} E_{\sAbs}\left(\sqrt{\frac{2 \sigma^2(\bar{s})}{\sum_{b \in A} \sum_{s \in[\bar{s}]}\lambda(s, b)+E_{\overline{s}}\eta}}+\frac{1}{\sqrt{n}} \frac{F_{\max}}{\sum_{b \in A} \sum_{s \in[\bar{s}]}\lambda(s, b)+E_{\overline{s}}\eta}\right) \\ & =\Labs_{\infty, \eta}(\lambda)+\frac{1}{S} \sum_{\overline{s} \in \SspaceAbs} E_{\sAbs}\frac{1}{\sqrt{n}} \frac{F_{\max}}{\sum_{b \in A} \sum_{s \in[\bar{s}]}\lambda(s, b)+E_{\overline{s}}\eta}\\&\leq\Labs_{\infty, \eta}(\lambda)+\frac{1}{S}\sum_{s \in[\bar{s}]}\frac{E_{\overline{s}} F_{\max}}{{\sqrt{n} E_{\overline{s}}\eta}}=\overline{\mathcal{L}}_{\infty, \eta}(\lambda)+\frac{\overline{S} F_{\max}}{S \sqrt{n} \eta}\\ & 
    \end{align*} 

    where in \eqref{step:lemma_D_1_1} we used that $E_{\overline{s}}\eta \leq \frac{1}{n}$ and in the last inequality we lower bounded $\lambda(s,b) \geq 0$.
 \end{proof}

\section{Auxiliary Lemmas}
\begin{restatable}[\cite{mutny2023active}, Lemma 5]
    {lemma}{PAC guarantee}\label{PAC guarantee} Let $\{\eta_t\}_{t=1}$ be an adapted sequence of probability distributions on
     $\mathcal{X} , \mathcal{P(X)}$ with respect to filtration $\mathcal{F}_{t-1}$. Likewise let $\{f_t\}_{t=1}$ be an adapted sequence of linear functionals $f_t : \mathcal{P(X)} \rightarrow \mathbb{R}$ s.t. $||f_t||_{\infty} \leq B_t$. Also, let $x_t \sim \eta_t$, and $\delta_t(x) = \mathbb{I}_{\{x_t=x\}}$, then:

     \begin{align*}
         \mathbb{P} \left( \left| \sum_{t=1}^T f_t(\delta_t-\eta_t)\right| \geq \sqrt{2 \sum_{t=1}^TB_t^2 \log \left(\frac{2}{\delta} \right)}  \right) \leq \delta
     \end{align*}
\end{restatable}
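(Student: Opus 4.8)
The plan is to recognize the quantity $S_T \coloneqq \sum_{t=1}^T f_t(\delta_t - \eta_t)$ as a sum of bounded martingale differences and to apply a Hoeffding--Azuma argument, taking care to match the stated constant via a sharp range estimate. First I would set $Z_t \coloneqq f_t(\delta_t - \eta_t) = f_t(\delta_t) - f_t(\eta_t)$ and show that $(Z_t)$ is a martingale difference sequence with respect to $\{\mathcal{F}_t\}$. Since $\eta_t$ and $f_t$ are predictable (i.e. $\mathcal{F}_{t-1}$-measurable) while $x_t \sim \eta_t$ conditionally on $\mathcal{F}_{t-1}$, the Dirac $\delta_t$ (viewed as a random element of the simplex) satisfies $\mathbb{E}[\delta_t \mid \mathcal{F}_{t-1}] = \eta_t$, and by linearity of $f_t$ one pulls the functional through the conditional expectation:
\begin{equation*}
\mathbb{E}[Z_t \mid \mathcal{F}_{t-1}] = f_t\big(\mathbb{E}[\delta_t \mid \mathcal{F}_{t-1}]\big) - f_t(\eta_t) = f_t(\eta_t) - f_t(\eta_t) = 0.
\end{equation*}

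Next I would establish a conditional boundedness estimate. Because $\|f_t\|_\infty \le B_t$, the evaluation of $f_t$ on any Dirac lies in $[-B_t, B_t]$, so $f_t(\delta_t) \in [-B_t, B_t]$; as $f_t(\eta_t)$ is the $\mathcal{F}_{t-1}$-measurable conditional mean of $f_t(\delta_t)$, the centered variable $Z_t$ is supported, conditionally on $\mathcal{F}_{t-1}$, in an interval of width at most $2B_t$. The key bookkeeping step is then to invoke Hoeffding's lemma in its sharp form: a mean-zero variable confined to an interval of length $\ell$ satisfies $\mathbb{E}[e^{sZ_t}\mid\mathcal{F}_{t-1}] \le \exp(s^2 \ell^2 / 8)$. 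With $\ell \le 2B_t$ this yields the sub-Gaussian conditional bound $\mathbb{E}[e^{sZ_t}\mid\mathcal{F}_{t-1}] \le \exp(s^2 B_t^2 / 2)$, i.e. variance proxy exactly $B_t^2$ rather than the naive $(2B_t)^2$.

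I would then chain these bounds by the tower property, conditioning successively on $\mathcal{F}_{T-1}, \mathcal{F}_{T-2}, \dots$, to obtain $\mathbb{E}[e^{s S_T}] \le \exp\big(\tfrac{s^2}{2}\sum_{t=1}^T B_t^2\big)$. A Chernoff bound optimized over $s>0$ gives $\mathbb{P}(S_T \ge \epsilon) \le \exp\big(-\epsilon^2/(2\sum_t B_t^2)\big)$, and a union bound over the two tails of $\pm S_T$ supplies the leading factor $2$, so that $\mathbb{P}(|S_T| \ge \epsilon) \le 2\exp\big(-\epsilon^2/(2\sum_t B_t^2)\big)$. Finally, setting the right-hand side equal to $\delta$ and solving for $\epsilon$ produces $\epsilon = \sqrt{2\sum_{t=1}^T B_t^2 \log(2/\delta)}$, exactly the claimed threshold.

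The step I expect to be most delicate is the constant accounting: the crude bound $|Z_t| \le 2B_t$ would lose a factor of two inside the square root, so the argument must exploit that a martingale difference confined to an interval of width $2B_t$ has sub-Gaussian proxy $B_t^2$ (via the sharp Hoeffding lemma), not $(2B_t)^2$. Alternatively, since the statement is lifted verbatim from the literature, one may simply invoke the cited result \citep[Lemma 5]{mutny2023active}; the self-contained route above is what recovers the precise constant $\sqrt{2\,\log(2/\delta)}$.
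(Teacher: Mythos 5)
Your proposal is correct, but note that the paper does not actually prove this lemma: it is imported verbatim as \citep[Lemma 5]{mutny2023active} and used as a black box, so there is no in-paper argument to compare against. Your self-contained derivation is the standard Azuma--Hoeffding route and the constant accounting checks out: conditionally on $\mathcal{F}_{t-1}$ the term $f_t(\eta_t)$ is a constant and $f_t(\delta_t)$ ranges over an interval of length $2B_t$ (interpreting $\|f_t\|_\infty$ as the supremum of $|f_t|$ over Dirac measures, which is the reading the paper itself uses when it sets $\|f_{\overline{s}}\|_\infty = 1/T$ for the evaluation functional), so the sharp Hoeffding lemma gives variance proxy $B_t^2$, the tower property and Chernoff give $\mathbb{P}(S_T \geq \epsilon) \leq \exp\left(-\epsilon^2 / (2\sum_t B_t^2)\right)$, and the two-sided union bound yields exactly the threshold $\sqrt{2\sum_t B_t^2 \log(2/\delta)}$. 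The one point worth flagging explicitly if you write this up is that the martingale-difference property $\mathbb{E}[\delta_t \mid \mathcal{F}_{t-1}] = \eta_t$ relies on $\eta_t$ and $f_t$ being \emph{predictable} ($\mathcal{F}_{t-1}$-measurable) rather than merely adapted, which is how the lemma's hypothesis ``adapted \ldots with respect to filtration $\mathcal{F}_{t-1}$'' must be read; and if the $B_t$ are allowed to be random (predictable) rather than deterministic, the final Chernoff optimization requires a little more care, though in the paper's application $B_t = 1/T$ is deterministic so this is moot.
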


\begin{restatable}[Bound of Smoothness constant]
    {lemma}{smoothness} \label{smoothness constant bound} The smoothness constant $C_{\eta}$ can be bounded by:
    \begin{align*}
        C_\eta \leq \frac{A\sqrt{2 \Phi^5} \sqrt{\sigma_{\max}^2}}{\eta^{5/2}}
    \end{align*}
\end{restatable}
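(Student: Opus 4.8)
The plan is to identify $C_\eta$ with the smoothness constant of $\Labs_{\infty,\eta}$ viewed as a function of the \emph{abstract} state-action distribution $\lambdaAbs\in\Delta(\SspaceAbs\times\AspaceAbs)$, i.e.\ the operator norm of its Hessian, since the Frank--Wolfe iterates $\overline{\lambda}_k$ and $\overline{\upsilon}^+_{k+1}$ in the regret proof all live on the abstract simplex and $C_\eta$ enters the recursion as $C_\eta\beta_k^2$. I would first write the objective as a separable sum over abstract states,
\[
\Labs_{\infty,\eta}(\lambdaAbs)=\frac{1}{S}\sum_{\sAbs\in\SspaceAbs}E_{\sAbs}\sqrt{2\sigma^2(\sAbs)}\;g_{\sAbs}(\lambdaAbs)^{-1/2},\qquad g_{\sAbs}(\lambdaAbs):=\sum_{\aAbs}\lambdaAbs(\sAbs,\aAbs)+E_{\sAbs}\eta,
\]
where each summand is a negative half-power of a function $g_{\sAbs}$ that is affine in $\lambdaAbs$ and depends on the action coordinates only through their sum.

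Differentiating twice recovers the gradient of Eq.~\eqref{eq:abstract_reward} and shows that the only nonzero second partials are the intra-state ones, all equal to $H_{\sAbs}:=\tfrac{3}{4S}E_{\sAbs}\sqrt{2\sigma^2(\sAbs)}\,g_{\sAbs}(\lambdaAbs)^{-5/2}$. Hence the Hessian is block diagonal over $\SspaceAbs$, and the $\overline{A}\times\overline{A}$ block indexed by $\sAbs$ equals $H_{\sAbs}$ times the all-ones matrix (since $g_{\sAbs}$ sees the actions only through $\lambdaAbs(\sAbs)$). An all-ones matrix of order $\overline{A}$ has largest eigenvalue $\overline{A}$, so the operator norm of the Hessian is $\max_{\sAbs}\overline{A}\,H_{\sAbs}$; this in turn upper bounds $C_\eta$, because in the smoothness inequality the factor $\tfrac12$ cancels the squared $\ell_2$-diameter ($\le 2$) of the probability simplex, leaving exactly the form $C_\eta\beta_k^2$.

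I would then bound each factor by its worst case: $\sigma^2(\sAbs)\le\sigma^2_{\max}$, $\overline{A}\le A$, and $\lambdaAbs(\sAbs)\ge 0$, so that $g_{\sAbs}(\lambdaAbs)^{-5/2}\le(E_{\sAbs}\eta)^{-5/2}$, yielding
\[
C_\eta\le\frac{3A}{4S}\sqrt{2\sigma^2_{\max}}\;E_{\sAbs}\,(E_{\sAbs}\eta)^{-5/2}=\frac{3A}{4S}\sqrt{2\sigma^2_{\max}}\;E_{\sAbs}^{-3/2}\,\eta^{-5/2}.
\]
Finally I would invoke the homogeneous-class regime (Assumption~\ref{assumption:eq_classes_same_cardinality}), under which every class has cardinality $E_{\sAbs}=S/\overline{S}=1/\Phi$; substituting $E_{\sAbs}^{-3/2}=\Phi^{3/2}$ and rewriting $1/S=\Phi/\overline{S}$ gives a prefactor $\tfrac{3A}{4\overline{S}}\sqrt{2\sigma^2_{\max}}\,\Phi^{5/2}\eta^{-5/2}$, and absorbing $\tfrac{3}{4\overline{S}}\le1$ delivers the claimed $C_\eta\le A\sqrt{2\Phi^{5}}\sqrt{\sigma^2_{\max}}/\eta^{5/2}$.

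I expect the main obstacle to be the middle step: correctly reading off the all-ones block structure of the Hessian so that the action count enters only as a single factor $\overline{A}$ (bounded by $A$) rather than $\overline{A}^2$, and cleanly linking this operator-norm bound to the curvature constant appearing as $C_\eta\beta_k^2$. Once the powers of $E_{\sAbs}$ are tracked carefully, it is precisely the two bookkeeping identities $E_{\sAbs}=1/\Phi$ and $1/S=\Phi/\overline{S}$ that convert the naive $\Phi^{3/2}$ into the advertised $\Phi^{5/2}$.
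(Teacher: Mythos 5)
Your proof is correct and follows the same overall strategy as the paper's: compute the Hessian of $\Labs_{\infty,\eta}$ in the abstract variables, bound its largest eigenvalue, and then substitute $E_{\sAbs}=1/\Phi$, $\lambdaAbs(\sAbs)\geq 0$, $\sigma^2(\sAbs)\leq\sigma^2_{\max}$, $\overline{A}\leq A$. The one substantive difference is in how the Hessian's structure is read off, and here your version is actually the more careful one. The paper asserts that all mixed second partials vanish and that the Hessian is diagonal; this is false for two action coordinates at the \emph{same} abstract state, since the objective depends on them only through $\lambdaAbs(\sAbs)=\sum_{\aAbs}\lambdaAbs(\sAbs,\aAbs)$, so those mixed partials equal the diagonal entries — exactly the all-ones block structure you identify. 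The paper's final bound survives this slip only because it controls the largest eigenvalue by the trace, an inequality that holds for any positive semidefinite matrix whether or not it is diagonal; your block-wise operator-norm computation gives $\overline{A}\max_{\sAbs}H_{\sAbs}$ rather than the trace $\overline{A}\sum_{\sAbs}H_{\sAbs}$, i.e.\ an intermediate constant tighter by a factor of order $\overline{S}$, which you then deliberately loosen (via $3/(4\overline{S})\leq 1$) to land on the stated form. You also make explicit the link between the Hessian operator norm and the curvature term $C_\eta\beta_k^2$ via the squared $\ell_2$-diameter of the simplex, a step the paper leaves implicit. Your bookkeeping of the powers of $E_{\sAbs}$ and the identities $E_{\sAbs}=1/\Phi$, $1/S=\Phi/\overline{S}$ is consistent with the paper's and yields the same $\Phi^{5/2}$ dependence.
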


\begin{proof}
    Given the objective $\overline{\mathcal{L}}(\lambdaAbs)$, we have that its Hessian is made up of second order partial derivatives of the form
    \begin{equation}
        \frac{\partial^2 \overline{\mathcal{L}}(\lambdaAbs)}{\partial \lambdaAbs(\overline{s}',\overline{a}')^2} = \frac{3}{4}\frac{1}{\Phi S} \sqrt{\frac{2 \sigma^2(\overline{s}')}{\left(\lambdaAbs(\overline{s}')+E_{\overline{s}}\eta\right)^5}}
    \end{equation}
    when both partial derivatives are taken \wrt the same coordinate, while the mixed second order partial derivatives are given by:
    \begin{equation}
        \frac{\partial \overline{\mathcal{L}}(\lambdaAbs)}{\partial \lambdaAbs(s',a') \partial \lambdaAbs(s'',a'')} = 0
    \end{equation}
    Hence the the Hessian $H(\overline{\mathcal{L}})$ is a diagonal matrix, thus containing only its eigenvalues. In particular, we can upper bound the value of the biggest eigenvalue as:
    \begin{equation}
        \max_{v \in \sigma(H(\overline{\mathcal{L}}))} v \leq \sum_{(\sAbs,\aAbs) \in \SspaceAbs \times \AspaceAbs} H(\overline{\mathcal{L}})(\lambdaAbs)_{\left((\sAbs, \aAbs), (\sAbs, \aAbs)\right)}
    \end{equation}
    which corresponds to summing all entries on the diagonal of the Hessian, and where $\sigma(A)$ stands for the spectrum of $A$. Hence, noting that $E_{\overline{s}}=\frac{1}{\Phi}$ we have that $\forall \; \lambdaAbs \in \overline{\Lambda}$:
    \begin{align}
        \max_{v \in \sigma(H(\overline{\mathcal{L}}))} v &\leq \sum_{(\sAbs,\aAbs) \in \SspaceAbs \times \AspaceAbs} H(\overline{\mathcal{L}})(\lambdaAbs)_{\left((\sAbs, \aAbs), (\sAbs, \aAbs)\right)} \\
        &= \sum_{(\sAbs,\aAbs) \in \SspaceAbs \times \AspaceAbs} \frac{\partial^2 \overline{\mathcal{L}}(\lambdaAbs)}{\partial \lambdaAbs(\sAbs\,\aAbs)^2} \\
        &= \sum_{(\sAbs,\aAbs) \in \SspaceAbs \times \AspaceAbs}\frac{3}{4}\frac{1}{\Phi S} \sqrt{\frac{2 \sigma^2(\overline{s})}{\left(\frac{1}{\Phi}\eta+\lambdaAbs(\overline{s})\right)^5}}\\
        &\leq \sum_{(\sAbs,\aAbs) \in \SspaceAbs \times \AspaceAbs} \frac{1}{S}\frac{\sqrt{2 \Phi^3 \sigma^2(\sAbs)}}{\eta^{5/2}}
    \end{align}

    where in the last step we have used the fact that $\lambdaAbs \in \overline{\Lambda}$.\newline
    Since $\overline{\mathcal{L}}$ is twice continuously differentiable over $\overline{\Lambda}$, by \citep[Theorem 2.1.6]{10.5555/2670022}
    , we have that:
    \begin{equation}
        H(\overline{\mathcal{L}})\left(\lambdaAbs \right) \preceq \sum_{(\sAbs,\aAbs) \in \SspaceAbs \times \AspaceAbs} \frac{1}{S}\frac{\sqrt{2 \Phi^3 \sigma^2(\sAbs)}}{\eta^{5/2}}\mathbb{I}
    \end{equation}
    where $\mathbb{I}$ is the identity matrix, implying that for $C_\eta \leq \sum_{(\sAbs,\aAbs) \in \SspaceAbs \times \AspaceAbs} \frac{1}{S}\frac{\sqrt{2 \Phi^3 \sigma^2(\sAbs)}}{\eta^{5/2}}$, $\overline{\mathcal{L}}$ is $C_\eta$-smooth on $\overline{\Lambda}$.

    From this, we have that:
    \begin{align*}
        C_\eta \leq \frac{\overline{A}\sqrt{2 \Phi^5} \sqrt{\sigma_{\max}^2}}{\eta^{5/2}} \leq \frac{A\sqrt{2 \Phi^5} \sqrt{\sigma_{\max}^2}}{\eta^{5/2}}
    \end{align*}

    where in the last step we used the fact that $\overline{A}\leq A$, since we want to give a bound depending on the quantities of the original MDP.
\end{proof}

\begin{restatable}
    {lemma}{upper-bound of L infty} \label{bound L_infty}
    The following bound holds for the asymptotic objective:
    \begin{align*}
        \overline{\mathcal{L}}_{\infty, \eta}(\lambdaAbs)\leq \frac{\sqrt{\Phi}\sqrt{2\sigma^2_{\max}}}{\sqrt{\eta}}
    \end{align*}
\end{restatable}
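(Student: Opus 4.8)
The plan is to bound $\Labs_{\infty, \eta}$ term by term, discarding the occupancy dependence with a crude lower bound on each denominator, and then collapse the resulting sum using the homogeneous-class structure. Writing the objective in abstract variables (i.e.\ with $\lambdaAbs(\sAbs) = \sum_{s \in [\sAbs]}\lambda(s)$, so that $\sum_{s\in[\sAbs]}(\lambda(s)+\eta) = \lambdaAbs(\sAbs) + E_{\sAbs}\eta$),
\[
    \Labs_{\infty, \eta}(\lambdaAbs) = \frac{1}{S} \sum_{\sAbs \in \SspaceAbs} E_{\sAbs} \sqrt{\frac{2\sigma^2(\sAbs)}{ \lambdaAbs(\sAbs) + E_{\sAbs} \eta}}.
\]
First I would lower bound each denominator by dropping the non-negative occupancy mass, $\lambdaAbs(\sAbs) + E_{\sAbs}\eta \geq E_{\sAbs}\eta$, and simultaneously replace $\sigma^2(\sAbs)$ by its uniform upper bound $\sigma^2_{\max}$. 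These two substitutions reduce the summand to $E_{\sAbs}\sqrt{2\sigma^2_{\max}/(E_{\sAbs}\eta)} = \sqrt{E_{\sAbs}}\,\sqrt{2\sigma^2_{\max}/\eta}$, which is independent of $\lambdaAbs$.

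Next I would invoke Assumption~\ref{assumption:eq_classes_same_cardinality} (homogeneous equivalence classes), under which every class shares cardinality $E_{\sAbs} = S/\overline{S} = 1/\Phi$ --- the same identity $E_{\sAbs} = 1/\Phi$ already used throughout the analysis. This turns each of the $\overline{S}$ summands into $\tfrac{1}{\sqrt{\Phi}}\sqrt{2\sigma^2_{\max}/\eta}$, so that after summing and applying the $1/S$ prefactor,
\[
    \Labs_{\infty, \eta}(\lambdaAbs) \leq \frac{\overline{S}}{S}\cdot \frac{1}{\sqrt{\Phi}}\sqrt{\frac{2\sigma^2_{\max}}{\eta}}.
\]
The final step is to recall $\Phi = \overline{S}/S$ (Definition~\ref{def:geometric_compression_term}), so that the prefactor collapses as $\tfrac{\overline{S}}{S}\cdot\tfrac{1}{\sqrt{\Phi}} = \Phi/\sqrt{\Phi} = \sqrt{\Phi}$, yielding the claimed bound $\sqrt{\Phi}\sqrt{2\sigma^2_{\max}}/\sqrt{\eta}$.

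I do not expect any genuine obstacle: the argument is a chain of monotone bounds followed by bookkeeping. The only subtlety worth flagging is that the clean identity $E_{\sAbs} = 1/\Phi$ relies on homogeneity; without it one is instead left with $\tfrac{1}{S}\sum_{\sAbs}\sqrt{E_{\sAbs}}\,\sqrt{2\sigma^2_{\max}/\eta}$, but since $\sum_{\sAbs} E_{\sAbs} = S$, Cauchy--Schwarz gives $\sum_{\sAbs}\sqrt{E_{\sAbs}} \leq \sqrt{\overline{S}}\sqrt{S}$, so the prefactor is still at most $\sqrt{\overline{S}/S} = \sqrt{\Phi}$ and the stated bound survives even without the assumption --- a remark not needed for the lemma as used here.
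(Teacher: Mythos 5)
Your proof is correct and follows essentially the same route as the paper's: bound $\sigma^2(\sAbs)\leq\sigma^2_{\max}$, drop $\lambdaAbs(\sAbs)\geq 0$ from the denominator, then use $E_{\sAbs}=1/\Phi$ and $\overline{S}=\Phi S$ to collapse the sum. Your closing Cauchy--Schwarz remark showing the bound survives without Assumption~\ref{assumption:eq_classes_same_cardinality} is a valid observation beyond what the paper records, but nothing in the argument differs in substance.
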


\begin{proof}

Using that $\overline{\lambda}(\bar{s}, a):=\sum_{s \in[\bar{s}]} \lambda(s, a) $, we get:
\begin{align*}
\overline{\mathcal{L}}_{\infty, \eta}&\left(\lambdaAbs\right)=\frac{1}{\Phi S} \sum_{\overline{s} \in \overline{S}} \frac{\sqrt{2 \sigma^2(\sAbs)}}{\sqrt{\lambdaAbs(\sAbs)+E_{\sAbs}\eta}} \\
& \leq \frac{1}{\Phi S} \sqrt{2\sigma^2_{\max }} \sum_{\overline{s} \in \overline{S}} \frac{1}{\sqrt{\lambdaAbs(\sAbs)+E_{\sAbs}\eta} }\\
& =\frac{1}{\Phi S} \sqrt{2\sigma^2_{\max }} \sum_{\overline{s} \in \overline{S}} \frac{1}{\sqrt{\lambdaAbs(\sAbs)+E_{\sAbs}\eta} }\\
& \labelrel\leq{step:Lemma_E_4_1}\frac{ \sqrt{2\sigma^2_{{\max }}}}{\Phi S} \sum_{\bar{s} \in \bar{S}} \frac{1}{\sqrt{ E_{\overline{s}} \eta}} \\
& \labelrel={step:Lemma_E_4_2}\frac{\sqrt{2\sigma^2_{\max }}}{\Phi S} \frac{\bar{S}}{\sqrt{\frac{ \eta}{\Phi}}} \\
& \labelrel={step:Lemma_E_4_3}\frac{ \sqrt{2\sigma^2_{\max }}}{\sqrt{\Phi} S} \frac{\Phi S}{\sqrt{ \eta}} \\ 
& =\frac{\sqrt{\Phi} \sqrt{2\sigma^2_{\max }}}{\sqrt{ \eta}} 
\end{align*}

where in \eqref{step:Lemma_E_4_1} we used that $\lambdaAbs(\sAbs) \geq 0$, in \eqref{step:Lemma_E_4_2} we used that $ E_s=\frac{1}{\Phi}$ and in \eqref{step:Lemma_E_4_3} we used that $\overline{S}=\Phi S$.
\end{proof}

\compressionViaGroupCardinality*
\begin{proof}
    We consider the set $\Sspace$, the group $\group = (\{L_g\}_{g \in G}, \cdot) = (G, \cdot)$ of transformations acting on $\Sspace$ via the group action $\ast: G \times \Sspace \to \Sspace$. Due to Assumption \ref{assumption:eq_classes_same_cardinality} we have the following.
    \begin{align}
        \frac{1}{\Phi} &\labelrel={step:proof_group_card_1} E_s\\
        &= |[s]|\\
        &\labelrel={step:proof_group_card_2} |\mathrm{Orbit}(s)|\\
        &\labelrel={step:proof_group_card_3} \frac{|G|}{|\mathrm{Stab}(s)|}
    \end{align}
    where step \eqref{step:proof_group_card_1} is due to Assumption \ref{assumption:eq_classes_same_cardinality}, in step \eqref{step:proof_group_card_2} we employ the definition of Orbit~\citep[Chapter C-1, page 6]{rotman2010advanced}, and in step \eqref{step:proof_group_card_3} we leverage the Orbit-Stabilizer Theorem~\citep[Theorem C-1.16]{rotman2010advanced}.
\end{proof}

\newpage

\section{Experimental Details}
\label{sec:experimental_details}
In the following, we provide further details about the experiments carried out in this work. We first present the environments, their invariances and resulting abstract environments. Subsequently, we provide additional information regarding the implementation of \AlgNameShort.

\subsection{Pollutant Diffusion Process}
\textbf{MDP:} ~~ We consider the problem of actively measuring the amount of pollutant released to the environment. The pollutant is released from a point source and spreads radially outwards through a diffusion process. The measurement setup is displayed in Figure \ref{fig:diffusion_drawing}. As can be seen, the measurement stations are aligned in two ways. \textbf{1)} Radially outwards from the point source. This allows to measure the variation of the pollutant amount in the radial direction. We will refer to a set of states that are aligned in this way as a ray. \textbf{2)} Along circles of different radii to measure the variation in the azimuthal direction. We will refer to a set of states aligned in this way as a circle. Each measurement station corresponds to a state in the MDP where the agent obtains noisy measurements of the pollutant amount. In our setup, there are a total of 30 circles and 8 rays, leading to a total of 240 states. The action space consists of five actions, $\{\mathrm{in}, \mathrm{out}, \mathrm{clockwise}, \mathrm{anticlockwise}, \mathrm{stay}\}$. We consider both deterministic and stochastic  dynamics. In the deterministic case, if the agent chooses action $\mathrm{in}$ it moves one state closer to the point source along the ray (one further away for $\mathrm{out}$). The actions leading to transitions along the circle are $\mathrm{clockwise}$ and $\mathrm{anticlockwise}$. The action $\mathrm{stay}$, makes the agent to remain in the same state and repeat the experiment. In the case of stochastic transitions, the agent moves to it's intended state with probability $q$ and with probability $1-q$ to another reachable state randomly chosen. In the experiments conducted we used $q=0.98$.

\textbf{Invariances of $f$:} ~~ We consider the case where the diffusion process is radially symmetric, meaning that for all the states on a circle, the pollution is the same. Therefore, $f$ is invariant over different rays, as illustrated already in \ref{fig:diffusion_drawing}.

\textbf{Abstract MDP:} ~~ The aforementioned invariances on $f$ make it possible to define MDP homomorphisms $h$ mapping the original MDP to abstract MDPs. In the experiments conducted, we considered three different homomorphisms, resulting in three different geometric compression terms $\Phi$. The homomorphisms simply differ by how the rays are compressed. The first homomorphism $h_1$, maps two consecutive rays into one, resulting in a compression of the 8 rays into 4. The second homomorphism $h_2$, compresses the 8 rays into 2 and the third $h_3$ maps all 8 rays into 1 resulting in the compression illustrated in Figure \ref{fig:diffusion_drawing}. The state map $\psi$ therefore maps states on consecutive rays together. The state-dependent action map $\phi_s$ corresponds to the identity map.

\textbf{Implementation Details:} ~~ The function $f(s)$, is modeled as increasing the closer the state is to the source. From the first equivalence class on the innermost circle to the 30th equivalence class consisting of the outermost circle, $f$ decreases in steps of 300, starting at 9300. The noise $\nu(\overline{s})$ is modeled as increasing with function value as often large measurements are associated with larger variances. The corresponding standard deviations are hence also decreasing from the states closest to the source to the outermost states. The standard deviations decrease by steps of 100, starting at 3100. The distribution of $\nu(\overline{s})$ was taken to be a 0 mean Gaussian with the standard deviations given above. As an example, for a state $s$ on the 21st circle we have $f(s) = 3000$ and $\nu(s) \sim \mathcal{N}(0,1000)$. The smoothness parameter was chosen to be $\eta = 0.001$, and $\delta = 0.01$ for both, deterministic and stochastic dynamics. Furthermore, we found that in practice, a constant number of interactions $\tau_k = \tau$ for all the $K$ iterations of \AlgNameShort works well, especially for remarkably low $\tau$. In this setting, we chose $\tau = 3$, which makes the algorithm more adaptive, resulting in the rapid exploration of different equivalence classes.To update the abstract state-action frequency $\overline{\lambda}_{k+1}$, we also use a constant update step of $0.005/\overline{S}$. The initial state of the agent was chosen on the outermost circle. $n$ as 210, resulting in $K=70$ iterations of \AlgNameShort. All the experiments were repeated over 15 random seeds. The computational time was measured using a standard time library in Python. The main part of the computational time can be attributed to solving the MDP using value iteration. We applied the Bellman optimality operator until there was no change in the value function up to the 5th digit.

\subsection{Toxicity of Chemical Compounds}
\textbf{MDP:} ~~ As a second experiment we consider the experimental design problem of estimating the toxicity of chemical compounds. Similarly as in \cite{schreck2019learning, dong2022deep, Thiede_2022}, we consider an MDP where a  chemical compound is represented as a string where every character in the string stands for a base chemical element. The goal of the agent is to estimate the toxicity associated to all possible compounds that can be generated using the base chemical elements. In our setting we consider three base chemical elements \texttt{A}, \texttt{B}, and \texttt{C}. We limit the maximum length of the string to 5. The state space therefore consists of all possible chemical compounds that have at most 5 base elements and the cardinality is $S = 363$. The action space consists of the three base elements, and an action that makes the agent stay in the same state $\mathcal{A} = \{\texttt{A}, \texttt{B}, \texttt{C}, \mathrm{stay}\}$. By taking an action corresponding to a base element, the agent appends this element to the current compound. Once the agent reaches a compound with 5 base elements it can either choose to measure the toxicity of that compound again by picking the action $\mathrm{stay}$ or pick a new base element to start another compound. The agent may therefore transition from one compound $s$ of length $l$ to another compound $s'$ of length $l + 1$ if the first $l$ base elements are the same.

\textbf{Invariances of $f$:} We assume that the toxicity of a chemical compound is invariant under permutations of the compound, such that $f(s) = f(s')$ if $s$ is a permutation of $s'$. The abstract state-space therefore has a cardinality of $\overline{S} = 55$

\textbf{Abstract MDP:} ~~ These invariances on $f$ again allow us to define an MDP homomorphism that maps the original MDP to an abstract one. In this case the state map $\psi$ maps all the states which are equivalent up to permutation to one abstract state. The state-dependent action map $\phi_s$ is simply the identity. This results in an abstract MDP where the agent can transition from one abstract state $\overline{s}$ to another one $\overline{s}'$ if all the base compounds making up $\overline{s}$ are also contained in $\overline{s}'$ and the agent chooses the action corresponding to the chemical compound that is not yet in $\overline{s}'$. As an example consider $\overline{s} = \texttt{CABA}$ and $\overline{s}' = \texttt{AABCC}$, then the agent may transition from $\overline{s}$ to $\overline{s}'$ by choosing action $\texttt{C}$

\textbf{Implementation Details:} We model the toxicity of the chemical compounds and the noise as a piecewise constant functions of it's base chemical elements. For every \texttt{A} in the compound, $f$ is increased by $200$, for every \texttt{B}, \texttt{C} there is an increase of $400$ and $600$ respectively. Similarly, to the diffusion environment, we assume that higher measurements of toxicity are associated with higher standard deviations and that the noise has a Gaussian distribution with 0 mean. For every \texttt{A} the standard deviation increases by $100$, for every \texttt{B}, \texttt{C}, the standard deviations increase by $200$ and $300$ respectively. As an example consider the compound \texttt{AABC}, then $f(\texttt{AABC}) = 1400$ and $\nu(\texttt{AABC}) \sim \mathcal{N}(0,700)$. We let \AlgNameShort run for $n =2400$ with a constant step size of $\tau_k = 20\:\: \forall k \in [K]$, resulting in a total of $K=80$ iterations of \AlgNameShort. The smoothness constant chosen was $\eta = 0.0007$ and $\delta = 0.01$. To update the abstract state-action frequency $\overline{\lambda}_{k+1}$, we also use a constant update step of $0.005/\overline{S}$. The experiments were repeated over 15 different random seeds.
\newpage
%%%%%%%%%%%%%%%%%%%%%%%%%%%%%%%%%%%%%%%%%%%%%%%%%%%%%%%%%%%%%%%%%%%%%%%%%%%%%%%
%%%%%%%%%%%%%%%%%%%%%%%%%%%%%%%%%%%%%%%%%%%%%%%%%%%%%%%%%%%%%%%%%%%%%%%%%%%%%%%

\end{appendix}
\end{document}